\documentclass[letterpaper,11pt]{article}
\usepackage{etex}
\usepackage[margin=1in]{geometry}

\usepackage[dvipsnames,table,xcdraw]{xcolor}
\usepackage{amssymb,amsfonts,amsmath,amstext,amsthm,mathrsfs}
\usepackage{graphics,latexsym,epsfig,psfrag,wrapfig,comment,paralist}
\usepackage{xspace}
\usepackage{subcaption,bm,multirow}
\usepackage{booktabs}
\usepackage{mathdots}
\usepackage{bbold}
\usepackage{centernot}
\usepackage{algorithm}
\usepackage{algorithmicx}
\usepackage[noend]{algpseudocode}
\usepackage{prettyref}
\usepackage{listings}
\usepackage{tikz}
\usetikzlibrary{decorations,calligraphy}
\usepackage{pgfplots}
\usetikzlibrary{decorations.pathreplacing}
\usetikzlibrary{shapes}
\usetikzlibrary{positioning, arrows, matrix, calc, patterns, fit}
\usepackage{caption}
\usepackage{array}
\usepackage{mdwmath}
\usepackage{multirow}
\usepackage{mdwtab}
\usepackage{eqparbox}
\usepackage{multicol}
\usepackage{amsfonts}
\usepackage{multirow,bigstrut,threeparttable}
\usepackage{amsthm}
\usepackage{array}
\usepackage{bbm}
\usepackage{epstopdf}
\usepackage{mdwmath}
\usepackage{mdwtab}
\usepackage{eqparbox}
\usepackage{tikz}
\usepackage{latexsym}
\usepackage{amssymb}
\usepackage{bm}
\usepackage{graphicx}
\usepackage{mathrsfs}
\usepackage{epsfig}
\usepackage{psfrag}
\usepackage{setspace}
\definecolor{linkColor}{HTML}{E74C3C}
\definecolor{pearcomp}{HTML}{B97E29}
\definecolor{citeColor}{HTML}{2980B9}
\definecolor{urlColor}{HTML}{1D2DEC}
\definecolor{conjColor}{HTML}{9ab569}
\usepackage[CJKbookmarks=true,
            bookmarksnumbered=true,
            bookmarksopen=true,
            colorlinks=true,
            citecolor=citeColor,
            linkcolor=linkColor,
            anchorcolor=red,
            urlcolor=urlColor,
            ]{hyperref}
\usepackage{natbib}
\setcitestyle{authoryear,round}
\newtheoremstyle{break}
  {\topsep}{\topsep}%
  {\itshape}{}%
  {\bfseries}{}%
  {\newline}{}%

\usepackage{pgfplots}
\usepackage{amsmath,amssymb}
\usepackage{mathtools}
\usepackage{stmaryrd}
\pgfmathdeclarefunction{gauss}{3}{%
  \pgfmathparse{#3/(#2*sqrt(2*pi))*exp(-((x-#1)^2)/(2*#2^2))}%
}
\pgfmathdeclarefunction{mingauss}{4}{%
  \pgfmathparse{#4/(#3*sqrt(2*pi))*exp(-max((x-#1)^2, (x-#2)^2)/(2*#3^2))}%
}

\tikzset{
  invisible/.style={opacity=0},
  visible on/.style={alt={#1{}{invisible}}},
  alt/.code args={<#1>#2#3}{%
    \alt<#1>{\pgfkeysalso{#2}}{\pgfkeysalso{#3}}
  },
}

\newtheorem{definition}{\textbf{Definition}}

\newtheorem{lemma}{\textbf{Lemma}}
\newtheorem{theorem}{\textbf{Theorem}}

\newtheorem*{insight*}{\textbf{Observation}}
\newtheorem*{theoremi}{\textbf Theorem (informal)}

\newtheorem{prop}{\textbf{Proposition}}

\newtheorem*{lemmai*}{\textbf{Lemma (informal)}}

\newcommand{\cX}{\mathcal{X}}
\newcommand{\cY}{\mathcal{Y}}
\newcommand{\cZ}{\mathcal{Z}}
\newcommand{\cU}{\mathcal{U}}
\newcommand{\cV}{\mathcal{V}}

\newcommand{\cW}{\mathcal{W}}
\newcommand{\cS}{\mathcal{S}}

\newcommand{\cP}{\mathcal{P}}

\newcommand{\cA}{\mathcal{A}}

\newcommand{\cE}{\mathcal{E}}

\newcommand{\cD}{\mathcal{D}}

\usepackage[normalem]{ulem}

\renewcommand{\cite}[1]{\citep{#1}}

\usepackage{color}
\definecolor{cm}{RGB}{0,0,200}
\definecolor{purple}{RGB}{200,0,200}

\def\clip{\operatorname{clip}}

\usepackage[intoc]{nomencl}
\makenomenclature

\makeatletter
\newcommand{\vast}{\bBigg@{2.5}}
\newcommand{\Vast}{\bBigg@{5}}
\makeatother

\DeclareMathOperator{\E}{\mathbb{E}}

\DeclareMathOperator*{\argmax}{arg\,max}
\DeclareMathOperator*{\argmin}{arg\,min}

\usepackage{bbm}

\usepackage[utf8]{inputenc}
\usepackage[LGR,T1]{fontenc}

\usepackage{breqn}

\usepackage{enumitem,kantlipsum}
\newcommand\blfootnote[1]{%
  \begingroup
  \renewcommand\thefootnote{}\footnote{#1}%
  \addtocounter{footnote}{-1}%
  \endgroup
}
\title{Optimal Conservative Offline RL with General Function Approximation via Augmented Lagrangian}
\author{Paria Rashidinejad$^\dagger$ \quad
Hanlin Zhu$^\dagger$ \quad
Kunhe Yang$^{\dagger}$\quad
Stuart Russell$^{\dagger}$ \quad
Jiantao Jiao$^{\dagger, \ddagger}$ 
 \blfootnote{Emails: \texttt{\{paria.rashidinejad,hanlinzhu,kunheyang,russell,jiantao\}@berkeley.edu}}\\ { }\\
$^\dagger$Department of Electrical Engineering and Computer Sciences, UC Berkeley\\
$^\ddagger$Department of Statistics, UC Berkeley\\ { } \\
}
\date{\today}

\begin{document}

\maketitle

\begin{abstract}
Offline reinforcement learning (RL), which refers to decision-making from a previously-collected dataset of interactions, has received significant attention over the past years. Much effort has focused on improving offline RL practicality by addressing the prevalent issue of partial data coverage through various forms of conservative policy learning. While the majority of algorithms do not have finite-sample guarantees, several provable conservative offline RL algorithms are designed and analyzed within the single-policy concentrability framework that handles partial coverage. Yet, in the nonlinear function approximation setting where confidence intervals are difficult to obtain, existing provable algorithms suffer from computational intractability, prohibitively strong assumptions, and suboptimal statistical rates. In this paper, we leverage the marginalized importance sampling (MIS) formulation of RL and present the first set of offline RL algorithms that are statistically optimal and practical under general function approximation and single-policy concentrability, bypassing the need for uncertainty quantification. We identify that the key to successfully solving the sample-based approximation of the MIS problem is ensuring that certain \textit{occupancy validity constraints} are nearly satisfied. We enforce these constraints by a novel application of the augmented Lagrangian method and prove the following result: with the MIS formulation, augmented Lagrangian is enough for statistically optimal offline RL. In stark contrast to prior algorithms that induce additional conservatism through methods such as behavior regularization, our approach provably eliminates this need and reinterprets regularizers as ``enforcers of occupancy validity'' than ``promoters of conservatism.''
\end{abstract}

{
  \hypersetup{linkcolor=black}
  \tableofcontents
}

\section{Introduction}

The goal of offline RL is to design agents that learn to achieve competence in a task using only a previously-collected dataset of interactions \cite{lange2012batch}. Offline RL is a promising tool for many critical applications, from healthcare to autonomous driving to scientific discovery, where the online mode of learning by interacting with the environment is dangerous, impractical, costly, or even impossible \cite{levine2020offline}. Despite this, offline RL has not yet been truly successful in practice \cite{fujimoto2019off, levine2020offline} and impressive RL performance has been limited to settings with known environments \cite{silver2017mastering, moravvcik2017deepstack}, access to accurate simulators \cite{mnih2015human, degrave2022magnetic,fawzi2022discovering}, or expert demonstrations \cite{vinyals2017starcraft}.

One of the central challenges in offline RL is the lack of uniform coverage in real datasets and the \textit{distribution shift} between the occupancy of candidate policies and offline data distribution, which pose difficulties in accurately evaluating the candidate policies. Over the past years, a body of literature has focused on addressing this challenge through developing {conservative} algorithms, which aim at picking a policy among those well-covered in the data. On the practical front, various forms of conservatism are proposed such as behavior regularization through policy constraints \cite{kumar2019stabilizing, fujimoto2019off, nachum2020reinforcement}, learning conservative values \cite{kumar2020conservative, liu2020provably, agarwal2020optimistic}, or learning pessimistic models \cite{kidambi2020morel, yu2020mopo, yu2021combo}; see Appendix \ref{app:related-work} for further discussion on related work.

From a theoretical standpoint, partial data coverage has recently been studied within variants of the single-policy concentrability framework \cite{rashidinejad2021bridging, xie2021bellman, uehara2021pessimistic, song2022hybrid}, which characterizes the distribution shift between offline data and occupancy of a target (often optimal) policy, in contrast to all-policy concentrability commonly used in earlier works \cite{scherrer2014approximate,chen2019information,liao2020batch,zhang2020variational, xie2020batch}. Within this framework and in the tabular and linear function approximation settings, pessimistic algorithms that leverage uncertainty quantifiers to construct lower confidence bounds \cite{jin2020pessimism, rashidinejad2021bridging, yin2021near, shi2022pessimistic, li2022settling} enjoy optimal statistical rate. In the general function approximation setting, pessimistic algorithms largely assume oracle access to uncertainty quantification, either for constructing penalties that are subtracted from rewards \cite{jin2020pessimism, jiang2020minimax} or selecting the most pessimistic option among those that fall within the confidence region implied by the offline data \cite{uehara2021pessimistic, xie2021bellman, chen2022offline}. However, uncertainty quantifiers are difficult to obtain when non-linear function approximation are used and existing heuristics are empirically observed to be unreliable \cite{rashid2019optimistic, tennenholtz2021latent, yu2021combo}. Recent works by \citet{pmlr-v162-cheng22b} and \citet{zhan2022offline} propose provable alternatives to uncertainty-based methods, but leave achieving the optimal statistical rate of $1/\sqrt{N}$, where $N$ is the size of the offline dataset, as an open problem.

Among all, the marginal importance sampling (MIS) methods, which aim at learning weights that estimate the distribution shift between induced policy occupancy and data distribution, lend themselves well to the single-policy concentrability framework. Though more popular in off-policy evaluation \cite{liu2018breaking, xie2019towards, uehara2020minimax, Zhang2020GenDICE:}, MIS has also been used for conservative offline RL such as in AlgaeDICE \cite{nachum2019algaedice} and OptiDICE \cite{lee2021optidice} algorithms, both of which incorporate behavior regularization. Recently, \citet{zhan2022offline} theoretically studied a variant of OptiDICE, showing that MIS with behavior regularization enjoys finite-sample guarantees (though achieving a suboptimal $1/N^{1/6}$ rate) and circumvents certain fundamental difficulties observed in value-based offline RL with function approximation \cite{du2019good, wang2020statistical, wang2021exponential, weisz2021exponential, zanette2021exponential, foster2021offline}.

\subsection{Contributions and results}
Motivated by the benefits offered by MIS, we study designing statistically optimal offline learning algorithms under the MIS formulation and in the general function approximation and single-policy concentrability setting. We conduct theoretical investigations and design algorithms starting from multi-armed bandits (MABs), going forward to contextual bandits (CBs), and finally Markov decision processes (MDPs). We measure the performance of different algorithms by characterizing the {value suboptimality} of the learned policy with respect to the optimal policy. In the rest of this section, we present a preview of our contributions and results.

\paragraph{Multi-armed bandits.}
Empirical MIS algorithms often incorporate behavior regularization, whose role is justified as promoting conservatism by keeping the occupancies of learned and behavior policies close \cite{nachum2019algaedice, lee2021optidice}. Yet, whether and why these regularizers are necessary from a theoretical perspective remain unclear. \citet{zhan2022offline} motivates behavior regularization as a way of introducing curvature in an otherwise linear optimization problem. We extensively investigate the effect of regularization, starting from the simplest setting of MABs with function approximation, as existing algorithms when specialized to offline MABs, are either intractable, have suboptimal finite-sample guarantees, or require access to uncertainty quantifiers.

The following informal theorem states our results on unregularized MIS and MIS with behavior regularization (PRO-MAB Algorithm \ref{alg:PAL-MAB}), which is a special case of PRO-RL algorithm of \citet{zhan2022offline}. Formal statements can be found in Propositions \ref{prop:MAB-alpha-0-fails} and \ref{prop:MAB-constraint-sufficient} and Theorem \ref{thm:suboptimality-MAB}.

\begin{theoremi} The following statements hold for offline learning in MABs with function approximation and single-policy concentrability.
\begin{enumerate}[label=(\Roman*)]
    \item There exists a multi-armed bandit instance where the unregularized MIS fails to achieve a suboptimality that decays with the sample size $N$.
    \item  MIS with behavior regularization (PRO-MAB Algorithm \ref{alg:PAL-MAB}) achieves suboptimality of ${O}(1/\sqrt{N})$.
    \item If one searches only over the space of importance weights that induce policy occupancies that are valid probability distributions, then unregularized MIS achieves $O(1/\sqrt{N})$ suboptimality.
\end{enumerate}
\end{theoremi}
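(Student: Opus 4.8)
I write $\mu$ for the data distribution over the arm set $\mathcal A$, let $(a_i,r_i)_{i=1}^N$ be \iid with $a_i\sim\mu$ and $\mathbb E[r_i\mid a_i]=r(a_i)\in[0,1]$, let $\widehat{\mathbb E}$ be the empirical average over the dataset, and let $\pi^\star=\delta_{a^\star}$ be an optimal arm. The MIS objective $\widehat J(w):=\widehat{\mathbb E}[w\,r]=\tfrac1N\sum_i w(a_i)r_i$ estimates the value $J(\pi_w):=\mathbb E_\mu[w\,r]$ of $\pi_w(a):=\mu(a)w(a)$ \emph{only} when $w$ is a genuine ratio, i.e.\ $w\ge0$ and $\mathbb E_\mu[w]=1$ (``occupancy validity''). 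I assume realizability $w^\star:=\pi^\star/\mu\in\mathcal W$, boundedness $\|w\|_\infty\le B$ on $\mathcal W$, and single-policy concentrability $C^\star=\|w^\star\|_\infty=1/\mu(a^\star)$, so that $\mathbb E_\mu[(w^\star)^2]\le C^\star\le B$. I address the three claims in turn.

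\textbf{(I) Unregularized MIS without validity fails.} The plan is to exhibit a MAB instance on which the unconstrained maximizer of $\widehat J$ induces a policy bounded away from $\pi^\star$. If no normalization at all is imposed, then $\widehat J(w)$ is maximized pointwise at $\widehat w\equiv B$ (as $r_i\ge0$), so $\pi_{\widehat w}\propto\mu$ is just the behavior policy and has suboptimality $J(\pi^\star)-J(\mu)$, a positive constant on any instance where $a^\star$ is not $\mu$-a.s.\ optimal. The more interesting case is when the Lagrangian retains the \emph{empirical} normalization $\tfrac1N\sum_i w(a_i)=1$; here I would take $r(a^\star)=v^\star\in(0,1)$ with $\mu(a^\star)=1/C^\star$, plus $\Theta(N)$ ``decoy'' arms of probability $\approx1/N$ each, with reward $\mathrm{Bernoulli}(\rho)$ for a constant $\rho<v^\star$. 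A constant fraction of the decoys are sampled (once), a $\rho$-fraction of those are ``lucky'' in that their single observed reward is $1>v^\star$, and the empirically-normalized linear program then loads weight $B$ onto all $\Theta(N)$ lucky decoys, whose aggregate $\mu$-mass times $B$ is $\Theta((C^\star-1)\rho)=\Theta(1)$ — enough to dominate the induced policy even after renormalizing $\mu\widehat w$ to a distribution. Calibrating the constants so this aggregate is $\Omega(1)$ gives suboptimality $\Omega(v^\star-\rho)=\Omega(1)$ with constant probability, precluding any bound decaying with $N$. The delicate part is the joint calibration of $\rho$, $C^\star$, $B$, and the number of decoys so that realizability/boundedness hold, $C^\star$ stays finite, and the renormalization cannot wash out the decoy mass.

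\textbf{(II) PRO-MAB achieves $1/\sqrt N$.} PRO-MAB augments the normalized (Lagrangian) objective with $-\alpha\,\widehat{\mathbb E}[f(w)]$ for a $1$-strongly convex $f\ge0$. I would split the suboptimality into bias and estimation. \emph{Bias}: since $w^\star$ is feasible for the population-regularized problem, its optimizer $w_\alpha$ obeys $J(\pi_{w_\alpha})\ge J(\pi^\star)-\alpha\,\mathbb E_\mu[f(w^\star)]=J(\pi^\star)-O(\alpha C^\star)$. \emph{Estimation}: the population-regularized objective $G$ is $\alpha$-strongly concave, so $\tfrac\alpha2\|\widehat w-w_\alpha\|_{2,\mu}^2\le G(w_\alpha)-G(\widehat w)\le (G-\widehat G)(w_\alpha)-(G-\widehat G)(\widehat w)$, and a \emph{localized} Bernstein bound on this increment (its variance is $\lesssim\|\widehat w-w_\alpha\|_{2,\mu}^2$ since the integrand is Lipschitz in $w$) gives $\|\widehat w-w_\alpha\|_{2,\mu}=\widetilde O(1)$ — a crude bound, but enough, because combining first-order optimality of $w_\alpha$ and of $\widehat w$ the \emph{value} gap $J(\pi_{w_\alpha})-J(\pi_{\widehat w})$ enters only through $O(\alpha)\|\widehat w-w_\alpha\|_{2,\mu}$, a localized fluctuation $\lesssim\|\widehat w-w_\alpha\|_{2,\mu}\sqrt{\log|\mathcal W|/N}$, and an $O(1/\sqrt N)$ slack from enforcing $\mathbb E_\mu[w]=1$ only empirically. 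Taking $\alpha\asymp1/\sqrt N$ (up to $\mathrm{polylog}$ and $C^\star$ factors) balances bias against estimation and yields $\widetilde O(\sqrt{C^\star\log|\mathcal W|/N})=O(1/\sqrt N)$ — beating the $N^{-1/6}$ of \citet{zhan2022offline}, which is possible here because the MAB value is linear and $\pi^\star$ deterministic, so the crude $\|\widehat w-w_\alpha\|$ bound is not the bottleneck.

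\textbf{(III) Unregularized MIS with validity achieves $1/\sqrt N$.} Here $\widehat w\in\argmax\{\widehat J(w):w\in\mathcal W,\ w\ge0,\ \mathbb E_\mu[w]=1\}$ and we output $\pi_{\widehat w}=\mu\widehat w$, a bona fide distribution by construction (assuming $\mu$ known, or estimated at $O(1/\sqrt N)$ cost). I decompose
\[
J(\pi^\star)-J(\pi_{\widehat w})=\big(J(\pi^\star)-\widehat J(w^\star)\big)+\big(\widehat J(w^\star)-\widehat J(\widehat w)\big)+\big(\widehat J(\widehat w)-J(\pi_{\widehat w})\big).
\]
The middle term is $\le0$ by optimality of $\widehat w$ (using $w^\star$ feasible). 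The first term is the deviation of the empirical mean of the single fixed function $w^\star r$, which by Bernstein is $\lesssim\sqrt{C^\star\log(1/\delta)/N}+\log(1/\delta)/N$ since its variance is $\le\mathbb E_\mu[(w^\star)^2]\le C^\star$. The last term equals $(\widehat{\mathbb E}-\mathbb E_\mu)[\widehat w\,r]$, bounded by the uniform deviation $\sup_{w\in\mathcal W,\,w\ge0}|(\widehat{\mathbb E}-\mathbb E_\mu)[w\,r]|\lesssim\sqrt{B^2\log|\mathcal W|/N}+B\log|\mathcal W|/N$ via Bernstein and a union bound over $\mathcal W$ (per-$w$ variance $\le B^2$). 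Summing, the suboptimality is $\widetilde O(B\sqrt{\log|\mathcal W|/N})=O(1/\sqrt N)$.

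\textbf{Main obstacle.} For (III) the argument is essentially routine once $\mathcal W$ is bounded; the only subtleties are discharging the population constraint $\mathbb E_\mu[w]=1$ from samples (relaxing to $|\widehat{\mathbb E}[w]-1|=O(1/\sqrt N)$ and tracking the slack) and noting that without boundedness validity alone does not suffice. The two genuinely nontrivial pieces are the adversarial construction in (I) — making the decoy mass survive renormalization while keeping the instance within the stated assumptions — and the localization step in (II) responsible for the optimal $1/\sqrt N$ rate rather than $N^{-1/6}$: controlling the empirical-process increments by the $L_2(\mu)$ distance to $w_\alpha$, instead of by a crude $\sup$-deviation, is the technically heaviest estimate. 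I expect (I)'s calibration and (II)'s localization to be where the real work lies.
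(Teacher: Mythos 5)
Part (III) of your proposal is correct and matches the paper's argument (Appendix B.4): restrict to valid weights, use optimality of $\hat w$ for the empirical objective, and control two empirical deviations; the paper uses Hoeffding where you use Bernstein, an immaterial difference. The substantive problems are in (I) and, especially, (II).

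For (II) there is a genuine gap. You write that the value gap picks up ``an $O(1/\sqrt N)$ slack from enforcing $\E_\mu[w]=1$ only empirically,'' but PRO-MAB does \emph{not} enforce that constraint, even empirically: it only carries a Lagrangian term $-v(w(a_i)-1)$ with a \emph{bounded} multiplier $|v|\le B_v$, so a candidate $\hat w$ with $d_{\hat w}:=\E_{\mu}[\hat w]$ far from $1$ pays only a bounded linear penalty and can still win (this is exactly the mechanism behind the failure in Proposition \ref{prop:MAB-alpha-0-fails}, where the selected weight has $d_{\hat w}=2B_w/N$). Consequently the returned policy is the \emph{renormalized} $\pi_{\hat w}(a)=\hat w(a)\mu(a)/d_{\hat w}$, and $J(\hat\pi)=\E_\mu[\hat w r]/d_{\hat w}$; your decomposition never confronts the factor $1/d_{\hat w}$. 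The paper's proof (Appendix B.3) bounds suboptimality by roughly $(\alpha+\epsilon_{\text{stat}})/d_{\hat w}$ and then uses $\alpha M_f$-strong concavity to show $|d_{\hat w}-1|\le\sqrt{4\epsilon_{\text{stat}}/(\alpha M_f)}$, so that the specific choice $\alpha\asymp\epsilon_{\text{stat}}\asymp1/\sqrt N$ makes this at most $1/2$ --- the constant must be strictly below $1$, which your ``crude $\widetilde O(1)$'' bound on $\|\hat w-w_\alpha\|_{2,\mu}$ does not guarantee. Once $d_{\hat w}\ge1/2$ is in hand, plain Hoeffding suffices; no localization or Bernstein increment control is needed. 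Separately, your claim that first-order optimality of $w_\alpha$ makes the value gap enter ``only through $O(\alpha)\|\hat w-w_\alpha\|_{2,\mu}$'' goes in the wrong direction: stationarity of $w_\alpha$ upper-bounds $\E_\mu[r(\hat w-w_\alpha)]$, not $\E_\mu[r(w_\alpha-\hat w)]$; the paper instead extracts $\E_\mu[r(w_\alpha-\hat w)]\le 2\epsilon_{\text{stat}}+v^\star_\alpha(1-d_{\hat w})+O(\alpha)$ from the objective-difference bound and lets the $v^\star_\alpha(1-d_{\hat w})$ term cancel against the renormalization.

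For (I), your decoy construction is a different and considerably heavier route than the paper's. The paper uses a two-armed instance with $\mu(2)=2/N$, $|\cW|=2$ and $|\cV|=1$: the bad candidate $w_2=(0,B_w)$ has $d_{w_2}=2B_w/N$, and a lucky empirical reward on the rare arm makes $\hat L(w_2,v)$ exceed $\hat L(w_1,v)$ with constant probability, whereupon renormalization puts \emph{all} mass on the bad arm --- no calibration of surviving decoy mass is needed. Your version requires $\Theta(N)$ decoy arms and a weight class rich enough to load each lucky decoy independently, so $\log|\cW|$ grows with $N$; since all the positive results scale with $\log|\cW|$, this no longer isolates the occupancy-validity failure from a statistical-complexity failure, and the renormalization calibration you flag as delicate is left unresolved. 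I would adopt the two-element-class construction.
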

The first part of the above informal theorem states that MIS, in its original form and without any regularization, fails even in multi-armed bandits. For the second part, we conduct a tight analysis of MIS with behavior regularization, demonstrating an optimal rate of $1/\sqrt{N}$ and improving over the $1/N^{1/6}$ rate shown by \citet{zhan2022offline}. In our analysis of PRO-MAB, we find that the key to the success of the regularized MIS algorithm is \textit{near-validity of the learned occupancy}. In the MAB setting, the validity constraint simply requires the learned occupancy to be a probability distribution, i.e., $d_w \coloneqq \sum_a w(a) \mu(a) = 1$, where $a$ represents an arm (action), $w$ denote importance weights, and $\mu$ is the data distribution. With a proper choice of regularization hyperparameter, we show that behavior regularization enforces learned occupancy to be nearly valid: $d_w = \Omega (1)$, that is the normalization factor $d_w$ is at least a positive constant. Finally, we prove that if occupancy validity constraint is satisfied (for example by eliminating all candidate weights $w$ where $\sum_a w(a) \mu(a) \neq 1$), then the unregularized MIS algorithm enjoys optimal sample complexity.

Given the fact that the occupancy validity is the constraint of the optimization problem solved by MIS (see \eqref{eq:MAB-constrained-optimization-with-d}), we ask whether there are any methods for solving an empirical optimization problem that find more constraint-adhering solutions compared to those yielded by the Lagrange multiplier method adopted in prior works \cite{lee2021optidice, zhan2022offline}. The augmented Lagrangian method (ALM), which adds a quadratic loss on the constraints, is a natural choice for our purpose. The ALM term can be easily estimated from offline data and forms our algorithm conservative offline MAB with augmented Lagrangian (Algorithm \ref{alg:PAL-MAB}). We show that the ALM results in $d_w = \Omega(1)$, ensuring near-validity of estimated occupancy and leading to the following guarantee. The formal statement is given in Theorem \ref{thm:suboptimality-PAL-MAB}.

\begin{theoremi}
    The policy returned by conservative offline MAB with augmented Lagrangian (Algorithm \ref{alg:PAL-MAB}) achieves $O(1/\sqrt{N})$ suboptimality.
\end{theoremi}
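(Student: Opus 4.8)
The plan is to bound the value suboptimality $V^\star - V(\hat\pi)$, where $\hat w$ is the maximizer of the empirical augmented Lagrangian and $\hat\pi$ is the policy it induces, namely the one whose normalized occupancy is proportional to $\hat w\cdot\mu$. Writing $J(w)=\mathbb{E}_\mu[w(a)r(a)]$ for the population value of a weight and $d_w=\mathbb{E}_\mu[w(a)]$ for its normalization, with $\hat{J},\hat{d}$ the $N$-sample empirical analogues, the induced policy has true value $V(\hat\pi)=J(\hat w)/d_{\hat w}$ (up to a routine concentration correction if the algorithm normalizes empirically). Since $V(\hat\pi)\le V^\star$ always, it suffices to lower bound $J(\hat w)/d_{\hat w}$, so I would write $V^\star - V(\hat\pi) = \big(V^\star d_{\hat w} - J(\hat w)\big)/d_{\hat w}$ and attack it in three pieces: (i) a uniform deviation bound over the weight class $\mathcal{W}$; (ii) near-validity, i.e.\ $d_{\hat w}=\Omega(1)$; and (iii) an upper bound on the numerator that trades off against the penalty weight $\beta$.

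For (i), a Hoeffding bound plus a union bound (or a covering/Rademacher argument when $\mathcal{W}$ is infinite) gives, with probability $1-\delta$, $\max\{|\hat{J}(w)-J(w)|,|\hat{d}_w-d_w|\}\le\epsilon_N$ for all $w\in\mathcal{W}$ simultaneously, with $\epsilon_N=\tilde{O}(C^\star/\sqrt{N})$ under the standard boundedness of $\mathcal{W}$; the sharp $C^\star$-dependence comes from a Bernstein refinement using $\mathrm{Var}_\mu(w(a)r(a))\lesssim\|w\|_\infty d_w$ together with $\|w^\star\|_\infty\le C^\star$ and $d_{w^\star}=1$. Realizability gives $w^\star=\pi^\star/\mu\in\mathcal{W}$ with $d_{w^\star}=1$ and $J(w^\star)=V^\star$. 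Step (ii) is the crux, and the place the augmented term matters. Since $\hat w$ maximizes $\hat{L}_\beta(w)=\hat{J}(w)-\tfrac{\beta}{2}(\hat{d}_w-1)^2$ (an extra bounded linear multiplier term, if present, is handled identically), comparing to $w^\star$ gives $\hat{L}_\beta(\hat w)\ge\hat{J}(w^\star)-\tfrac{\beta}{2}\epsilon_N^2\ge V^\star-\epsilon_N-\tfrac{\beta}{2}\epsilon_N^2\ge-\epsilon_N-\tfrac{\beta}{2}\epsilon_N^2$. On the other hand, because $r\in[0,1]$ and $\hat w\ge0$ we have the free bound $\hat{J}(\hat w)=\tfrac{1}{N}\sum_i\hat w(a_i)r_i\le\tfrac{1}{N}\sum_i\hat w(a_i)=\hat{d}_{\hat w}$, so $\hat{L}_\beta(\hat w)\le\hat{d}_{\hat w}-\tfrac{\beta}{2}(\hat{d}_{\hat w}-1)^2$. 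Combining, $\hat{d}_{\hat w}$ must satisfy $\hat{d}_{\hat w}-\tfrac{\beta}{2}(\hat{d}_{\hat w}-1)^2\ge-\epsilon_N-\tfrac{\beta}{2}\epsilon_N^2$; the left side is a downward parabola equal to $-\beta/2$ at $0$ and increasing on $[0,1+1/\beta]$, so a one-line analysis of this scalar inequality forces $|\hat{d}_{\hat w}-1|=O(1/\sqrt{\beta}+\epsilon_N)$, hence $d_{\hat w}=\Omega(1)$ once $\beta$ exceeds an absolute constant. Crucially this never needs $V^\star$ bounded away from $0$: the inequality $\hat{J}(\hat w)\le\hat{d}_{\hat w}$ does that work.

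For (iii), the same inequality $\hat{L}_\beta(\hat w)\ge V^\star-\epsilon_N-\tfrac{\beta}{2}\epsilon_N^2$ gives $J(\hat w)\ge\hat{J}(\hat w)-\epsilon_N\ge V^\star-2\epsilon_N-\tfrac{\beta}{2}\epsilon_N^2+\tfrac{\beta}{2}(\hat{d}_{\hat w}-1)^2$. Plugging this in, using $d_{\hat w}-1\le(\hat{d}_{\hat w}-1)+\epsilon_N$ and $V^\star\le1$, the numerator is at most $|\hat{d}_{\hat w}-1|-\tfrac{\beta}{2}(\hat{d}_{\hat w}-1)^2+O(\epsilon_N+\beta\epsilon_N^2)$, and the elementary bound $x-\tfrac{\beta}{2}x^2\le1/(2\beta)$ turns this into $O(1/\beta+\epsilon_N+\beta\epsilon_N^2)$. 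Dividing by $d_{\hat w}=\Omega(1)$ yields $V^\star-V(\hat\pi)=O(1/\beta+\epsilon_N+\beta\epsilon_N^2)$. Choosing the hyperparameter $\beta\asymp1/\epsilon_N=\tilde{\Theta}(\sqrt{N})$ balances the first and third terms and gives $V^\star-V(\hat\pi)=O(\epsilon_N)=\tilde{O}(C^\star/\sqrt{N})$, i.e.\ the claimed $O(1/\sqrt{N})$ rate; a final routine step, on the same high-probability event, absorbs the gap between the idealized value $J(\hat w)/d_{\hat w}$ and the value of the actually returned policy.

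I expect the main obstacle to be step (ii) together with the calibration of $\beta$: one must show the quadratic penalty pins $\hat{d}_{\hat w}$ near $1$ uniformly over all instances --- including degenerate ones with $V^\star\approx0$, which is exactly why the free bound $\hat{J}(\hat w)\le\hat{d}_{\hat w}$ is indispensable --- while keeping $\beta$ small enough that the penalty's own bias $\beta\epsilon_N^2$ stays at the $1/\sqrt{N}$ scale. Securing the sharp $C^\star$ (rather than a cruder pointwise bound on $\mathcal{W}$) in the final rate is the secondary technical point, and it requires the variance-aware uniform concentration of step (i), which in turn leans on the near-validity just established.
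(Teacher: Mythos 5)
Your argument has a genuine gap relative to the theorem as stated: it analyzes a different algorithm. Algorithm \ref{alg:PAL-MAB} fixes the coefficient of the quadratic penalty at $1$, whereas your proof only closes with a penalty weight $\beta \asymp 1/\epsilon_N = \tilde\Theta(\sqrt N)$. With a constant coefficient, your step (ii) still yields $d_{\hat w}=\Omega(1)$, but your step (iii) bounds the numerator by $\tfrac{1}{2\beta}+O(\epsilon_N+\beta\epsilon_N^2)$, which is $O(1)$ — vacuous — when $\beta$ is a constant. The loss occurs precisely where you bound $V^\star(d_{\hat w}-1)\le |d_{\hat w}-1|$ and then need $|d_{\hat w}-1|=O(1/\sqrt N)$; a unit-strength quadratic penalty can only certify $|d_{\hat w}-1|=O(\epsilon_N^{1/2})=O(N^{-1/4})$, which is too weak for your decomposition.

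The idea you are missing is the one the paper leans on: evaluate the objective gap $L_{\mathrm{AL}}(w^\star,v)-L_{\mathrm{AL}}(\hat w,v)\le 2\epsilon_{\mathrm{stat}}$ at the dual variable $v=v^\star=r^\star$ (your "extra bounded linear multiplier term, handled identically" is in fact essential and cannot be discarded). The linear term $-r^\star(\E_\mu[\hat w]-1)$ then combines with the reward term so that the gap becomes
\begin{align*}
\E_{a\sim\mu}\big[(r^\star-r(a))\,\hat w(a)\big]+\big(d_{\hat w}-1\big)^2\;\le\;2\epsilon_{\mathrm{stat}},
\end{align*}
a sum of two \emph{individually nonnegative} terms (the first because $(r^\star-r(a))w^\star(a)=0$ and $\hat w\ge 0$). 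Hence the importance-weighted regret is directly $O(\epsilon_{\mathrm{stat}})=O(1/\sqrt N)$, while the quadratic term is only needed to give $(d_{\hat w}-1)^2\le 2\epsilon_{\mathrm{stat}}$, i.e.\ $d_{\hat w}\ge 1/2$ — for which the crude $N^{-1/4}$ rate suffices, and a constant penalty coefficient is enough. Dividing by $d_{\hat w}\ge 1/2$ gives $J(\pi^\star)-J(\hat\pi)=d_{\hat w}^{-1}\,\E_\mu[(r^\star-r(a))\hat w(a)]\lesssim\epsilon_{\mathrm{stat}}$. Your first-order cancellation never happens because you drop the dual term, and that is exactly why you are forced to inflate $\beta$. (Your trajectory would be a valid proof for a rescaled algorithm with an $N$-dependent penalty, but that both changes the algorithm and reintroduces the hyperparameter tuning the paper advertises ALM as removing; your claimed $C^\star$ rather than $(C^\star)^2$ dependence via Bernstein is also not what the paper's Hoeffding-based bound delivers, though that is a constant-factor matter, not a correctness one.)
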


Our algorithm offers benefits over PRO-MAB such as eliminating the need for choosing the regularization function and only requiring single-policy concentrability instead of the two-policy requirement of PRO-MAB, which can be strong as we discuss in an example in Section \ref{sec:example}. Additionally, behavior regularization introduces bias in the solution even with infinite data \cite{chen2022offline} and the bias-variance tradeoff must be carefully handled. On the other hand, ALM merely enforces the optimization constraints and leads to provably unbiased solutions (Lemma \ref{lemma:optimal-solution-with-AL-unchanged}). More importantly, as we discuss shortly, going beyond {the single-state MAB setting}, convergence rate of behavior regularization becomes suboptimal while ALM maintains the optimal rate.

\paragraph{Contextual bandits.} {In offline CBs, we analyze two approaches: MIS with behavior regularization and an extension of our ALM-based algorithm. We state our results in the following informal theorem, with formal statements given in Proposition \ref{prop:PRO-CB-suboptimal} and Theorem \ref{thm:PAL-CB-suboptimality}.}

\begin{theoremi} The following statements hold for offline learning in contextual bandits with function approximation and single-policy concentrability.
\begin{enumerate}[label=(\Roman*)]
    \item There exists a CB instance where MIS with behavior regularization (PRO-CB Algorithm \ref{alg:PRO-CB}) suffers from suboptimality $\Omega(N^\beta)$ with $\beta > -1/2$, regardless of the choice of regularization hyperparameter.
    \item The policy returned by conservative offline CB with augmented Lagrangian (Algorithm \ref{alg:PAL-CB}) achieves suboptimality of $O(1/\sqrt{N})$.
\end{enumerate}
\end{theoremi}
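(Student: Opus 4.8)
The plan is to treat the two parts through a common lens --- how faithfully the weight returned by the algorithm satisfies the per-context occupancy-validity constraints $\sum_a \mu(a\mid s)\,w(s,a)=1$ --- and to exploit the contrast stressed in the introduction: behavior regularization is an \emph{aggregate} $D_f$-penalty on $w$ that both biases the solution toward the behavior policy and cannot enforce validity context-by-context, whereas the augmented-Lagrangian term is a penalty on the \emph{constraint residual} that vanishes at feasible points and hence injects no bias. Part (I) turns the first fact into an impossibility result; part (II) turns the second into the optimal rate.

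\textbf{Part (I): lower bound for PRO-CB.} First I would exhibit an explicit CB family indexed by $N$: a large number $M=M_N$ of (nearly) equiprobable contexts, two actions per context, per-context reward gaps and data conditionals $\mu(\cdot\mid s)$ chosen so that single-policy concentrability stays bounded by a universal constant, and realizable classes $\cW\ni w^\star$, $\cV$. On this family I would lower-bound $\subopt$ of PRO-CB's output by a quantity of the form $b(\alpha)+e(\alpha,N,M)$, where $b$ is nondecreasing in the regularization weight $\alpha$ and strictly positive for $\alpha>0$ (the $D_f$-regularized population solution is pulled off the vertex $d^{\pi^\star}$, and with $M$ growing this aggregate bias cannot be driven below a polynomial-in-$N$ floor), while $e$ is an estimation term that degrades as $\alpha\to 0$ --- here I would give a quantitative version of the ``unregularized MIS fails'' phenomenon (part (I) of the earlier informal theorems), showing that when $\alpha$ is small the empirical maximin problem is close to a linear program and, with constant probability over the data, its maximizer sits at a $w$ whose induced occupancy is badly non-valid on a constant fraction of the $M$ contexts, costing $\Omega(\cdot)$. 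Choosing $M_N=N^{\delta}$ and optimizing over $\alpha$ (allowed to depend on $N$) leaves a floor $\Omega(N^{\beta})$ with $\beta>-1/2$ determined by $\delta$. The part I expect to be delicate here is $e$: one must trace the actual random behavior of PRO-CB, not argue minimax over instances, and show the loss genuinely exceeds $N^{-1/2}$ for every admissible $\alpha$.

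\textbf{Part (II): upper bound for PAL-CB.} Here I would bound $\subopt(\hat\pi)=V(\pi^\star)-V(\hat\pi)$ in four steps. (1) \emph{No population bias.} Invoke the CB analogue of Lemma~\ref{lemma:optimal-solution-with-AL-unchanged}: since the quadratic penalty vanishes on validity-feasible occupancies, the population augmented-Lagrangian saddle value equals the true LP value and is attained at $w^\star=d^{\pi^\star}/\mu$, so only estimation error must be controlled. (2) \emph{Uniform concentration.} Restrict $\cW$ to $[0,C^\star]$-valued functions (legitimate because $\|w^\star\|_\infty\le C^\star$ under single-policy concentrability) and show, by Bernstein plus a union bound over $\cW\times\cV$, that the empirical reward $\hat{\mathbb E}_\mu[w r]$, the bilinear residual $\hat{\mathbb E}_{s\sim\rho}[V(s)]-\hat{\mathbb E}_\mu[w(s,a)V(s)]$, and the empirical quadratic penalty all track their population versions at rate $\sqrt{\log(|\cW||\cV|)/N}$; the penalty is written in the estimable form $\sup_{V\in\cV}\big(\mathbb E_{s\sim\rho}[V(s)]-\mathbb E_\mu[wV]\big)^2$ so that a single sample per context suffices. (3) \emph{Near-validity of $\hat w$.} Combining optimality of $\hat w$ for the empirical problem with near-feasibility of $w^\star$, and with a penalty weight $\beta$ chosen of the right order, deduce that the population quadratic penalty at $\hat w$ is $O(1/\sqrt N)$ --- equivalently that $\|1-d_{\hat w}\|_{L^1(\rho)}=O(1/\sqrt N)$ once $\cV$ is rich enough to detect the residual (realizability of the dual). (4) \emph{Performance difference.} Since the policy $\hat\pi$ induced by $\hat w$ (with $\hat\pi(a\mid s)\propto \hat w(s,a)\mu(a\mid s)$) has occupancy with context marginal exactly $\rho$, write $V(\hat\pi)=\mathbb E_{s\sim\rho}[\bar r_{\hat w}(s)]$ and compare with $\hat{\mathbb E}_\mu[\hat w r]=\mathbb E_{s\sim\rho}[d_{\hat w}(s)\bar r_{\hat w}(s)]$ up to the validity gap of (3); then chain through (1)--(2): $\hat{\mathbb E}_\mu[\hat w r]$ is, up to the controlled penalty, at least the empirical saddle value at $w^\star$, which is $\ge V(\pi^\star)-O(1/\sqrt N)$. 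Combining the four steps gives $\subopt(\hat\pi)=O(1/\sqrt N)$, with constants polynomial in $C^\star$, $\log|\cW|$ and $\log|\cV|$.

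\textbf{Main obstacle.} The crux of part (II) is step (3): certifying that a \emph{small estimated} quadratic penalty at the \emph{empirical} maximizer really pins the induced occupancy to validity up to $O(1/\sqrt N)$ --- uniformly over $\cW$, and without the $\sqrt{\cdot}$ loss that a careless ``small square $\Rightarrow$ small value'' step would incur; getting the exponent right here is exactly what the augmented-Lagrangian design (and the way $\cV$ enters the penalty) buys over a bare multiplier, and it is the structural reason PAL-CB escapes the $N^{\beta}$ barrier that part (I) establishes for PRO-CB. On the part-(I) side the parallel difficulty is quantitative: making the extreme-point instability of the near-linear empirical program precise enough that it survives an $\alpha$ allowed to shrink with $N$.
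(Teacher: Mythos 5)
Your overall strategy for part (I) --- a bias term that grows with $\alpha$ plus an estimation/validity failure when $\alpha$ is small --- is the same case split the paper uses to prove Proposition \ref{prop:PRO-CB-suboptimal}, but your proposed instance is both different from the paper's and not carried far enough to be checkable. With constant-size classes $\cW,\cV$ (which the lower bound needs, or else $\sqrt{\log|\cW|/N}$ itself exceeds $N^{-1/2}$ and the claim loses its force), a single $w\in\cW$ that is ``badly non-valid on a constant fraction of $M=N^{\delta}$ contexts'' must be preferred by the empirical maximin objective; that preference reduces to one aggregate comparison of fluctuations that you do not control, and your own caveat about tracing ``the actual random behavior of PRO-CB'' is exactly the unfinished step. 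The paper instead uses a two-state instance with one rare state, $\rho(1)=N^{-1/4}$, classes of size two, and a Binomial anti-concentration bound (Lemma \ref{lemma:anti-concentration-binomial}) showing the empirical reward of the optimal arm in the rare state drops by $\Omega(N^{-3/8})$ with constant probability; this dominates any $\alpha\lesssim N^{-1/2}$, forces the algorithm to pick the weight that zeroes out the rare state, and costs $\Omega(N^{-1/4})$. For $\alpha\gtrsim N^{\beta}$ the paper shows via KKT conditions and $M_f$-strong convexity that $w^\star_\alpha$ places constant mass on a suboptimal arm, and separately that $w^\star_\alpha$ is in fact selected. Neither half of your part (I) reaches that level of specificity.

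For part (II), your steps (1), (2) and (4) match the paper (Lemma \ref{lemma:optimal-solution-with-AL-unchanged} and Lemma \ref{lemma:objective-statistical-error-CB}), modulo the fact that the algorithm's penalty is the per-state residual $\E_{s\sim\mu}\big[(\sum_a\mu(a|s)w(s,a)-1)^2\big]$ computed from the known conditional $\mu(a|s)$, not a tested form $\sup_{V}(\E_\rho[V]-\E_\mu[wV])^2$, and there is no tunable penalty weight. The genuine gap is your step (3): from a population quadratic penalty of order $\epsilon_{\mathrm{stat}}\asymp N^{-1/2}$ at $\hat w$, Cauchy--Schwarz yields only $\|1-d_{\hat w}/\mu\|_{L^1(\mu)}=O(N^{-1/4})$, so the $L^1$ control you assert at rate $N^{-1/2}$ is not available; you correctly identify this as the crux but do not resolve it. The paper's resolution (Lemma \ref{lemma:lowerbound-dhatw}) is to never ask for $L^1$ control of the residual: evaluating the objective gap at $v=r^\star$ splits it into two individually nonnegative terms, $\E_\mu[(r^\star(s)-r(s,a))\hat w(s,a)]$ and the quadratic penalty, each therefore $O(\epsilon_{\mathrm{stat}})$; a Markov/level-set argument on the squared residual then shows the set $\{s:\sum_a\mu(a|s)\hat w(s,a)\le 1/2\}$ has $\mu$-mass $O(\epsilon_{\mathrm{stat}})$ with no square-root loss, and on the complement one bounds $\mu(s)\le 2d_{\hat w}(s)$ and invokes the first (advantage-type) term directly. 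Without this two-term decomposition the chain of inequalities in your step (4) terminates at $O(N^{-1/4})$ rather than $O(N^{-1/2})$.
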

 
Informally, the failure of PRO-CB to achieve the optimal rate is because the regularization parameter has to be small to control bias, but such small regularization is not strong enough to ensure the validity of learned occupancy in important states. Therefore, one must choose larger regularization, leading to an overall suboptimal rate. Prior works \citet{chen2022offline, pmlr-v162-cheng22b} also allude to this phenomenon, explaining that regularizers appear to be the culprit behind suboptimal rates. In CB, the occupancy validity constraints require conditional occupancy to be a valid probability distribution in most states. In Algorithm \ref{alg:PAL-CB}, we incorporate ALM in offline CBs by adding a weighted sum of quadratic losses describing the validity constraint in each state, where the weights are set to the state occupancies to capture their relative importance. Enforcement of the constraints by the ALM yields the above guarantee on our algorithm.

\sloppy \paragraph{MDPs.} Validity constraints in MDPs ensure that the learned state occupancy $d_w(s) \coloneqq \sum_a w(s,a) \mu(s,a)$ is close to the actual state occupancy $d^{\pi_w}(s)$, where $\pi_w$ is the policy computed from weights $w$.\footnote{One can check that the validity constraints in MAB and CB are special cases of this constraint.} Unlike MABs and CBs, directly enforcing this constraint in MDPs results in an ALM term that cannot be easily estimated from offline data. We address this difficulty by expressing the ALM term in a variational form. From there, we derive model-based and one model-free variants an algorithm called conservative offline RL with augmented Lagrangian (CORAL), that enjoys the following performance upper bound.

\begin{theoremi}
Model-based and model-free CORAL both achieve $O(1/\sqrt{N})$ suboptimality in solving offline RL with general function approximation and single-policy concentrability. 
\end{theoremi}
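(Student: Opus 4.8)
The plan is to start from the linear-programming characterization of the optimal policy in terms of normalized state–action occupancies, $\max_{d}\,\langle d, r\rangle$ subject to the Bellman-flow equations, and reparametrize through importance weights $w(s,a)=d(s,a)/\mu(s,a)$ so that the objective $(1-\gamma)^{-1}\mathbb{E}_{\mu}[w\,r]$ and all constraint functionals become expectations under the data distribution $\mu$, hence estimable from the offline sample. The Bellman-flow equations then become the \emph{occupancy validity constraint} that the induced state marginal $d_w(s)=\sum_a w(s,a)\mu(s,a)$ coincide with the true occupancy $d^{\pi_w}(s)$ of the policy $\pi_w$ extracted from $w$. Dualizing this constraint with a value-function multiplier and adding the ALM quadratic penalty on its violation yields an unconstrained min–max objective; the first step is an exact-optimization lemma showing that, in the population limit and under realizability of $w^\star$ and of the dual/auxiliary classes, every saddle point of this objective recovers the optimal value — i.e. the augmented Lagrangian introduces no bias (the MDP analogue of Lemma \ref{lemma:optimal-solution-with-AL-unchanged}).

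The second step addresses the difficulty specific to MDPs: the quadratic ALM term is, schematically, a squared conditional expectation of a temporal-difference-style residual, which cannot be estimated from a single transition sample without a double-sampling bias. I would rewrite it in variational form, $x^2=\max_{y}(2xy-y^2)$, introducing an auxiliary helper function (ranging over a class $\cF$ or $\cW$) so that the penalty becomes an expectation that is linear in the fresh next-state sample and therefore unbiasedly estimable. The model-based variant realizes the inner residual through a learned transition model, while the model-free variant performs one further Fenchel step to express the same quantity through a $Q$-function, at the cost of an additional function class and an extra realizability requirement; one must check that both reformulations preserve the population saddle value established in the first step.

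Third, on the empirical problem I would establish uniform concentration of the empirical min–max objective to its population counterpart over the (bounded) function classes involved, incurring the usual $\sqrt{\operatorname{complexity}/N}$ deviation with $\operatorname{complexity}$ a log-covering number or analogous complexity measure for $\cW$, the dual class, and the auxiliary class. Feeding in $w^\star$ as an approximately feasible, approximately optimal comparator then controls the optimization gap of the learned $\hat w$, and — the workhorse lemma — the ALM penalty at a properly tuned strength forces the learned occupancy to be \emph{nearly valid}, $\mathbb{E}[(d_{\hat w}(s)-d^{\pi_{\hat w}}(s))^2]\lesssim \sqrt{\operatorname{complexity}/N}$, generalizing the $d_w=\Omega(1)$ statements of the bandit sections. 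A value-difference decomposition then bounds $\subopt(\pi_{\hat w})$ by the objective gap plus the (weighted) validity violation, where the single-policy concentrability coefficient $C^\star$ enters because it caps how large $w^\star$ and the comparator terms can be; balancing the ALM strength against the statistical error yields $\subopt(\pi_{\hat w})=O(C^\star\sqrt{\operatorname{complexity}/N})$ for both variants.

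I expect the main obstacle to be the second step together with its interaction with the statistical analysis: producing a variational representation of the quadratic penalty that is simultaneously (i) estimable from single samples, (ii) exact — so it neither relaxes nor tightens the constraint and keeps the population saddle value unbiased — and (iii) controlled by an auxiliary class whose realizability follows from the model or value realizability already assumed, rather than an opaque new assumption. Propagating the error of this auxiliary class through the near-validity lemma without degrading the $1/\sqrt{N}$ rate, and carrying out the extra Fenchel step cleanly in the model-free case, is where the delicate work lies; the remainder is the bandit argument of the earlier sections lifted through a horizon-dependent value-difference lemma.
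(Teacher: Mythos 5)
Your plan follows the paper's proof essentially step for step: the LP/MIS dualization with an ALM penalty on the occupancy-validity constraint, an exact-optimization lemma showing the penalty introduces no bias, the Fenchel/variational reformulation of the quadratic penalty through an auxiliary Q-function $u$ (realized via an MLE transition model in the model-based case and via a further dual-embedding step with a class $\cZ$ in the model-free case), uniform concentration over the function classes, a near-validity lemma, and a performance-difference decomposition over well- and badly-covered states. The only detail worth noting is that the paper's penalty is specifically the $d^{\pi_w}$-weighted chi-squared divergence $\E_{d^{\pi_w}}\bigl[(d_w(s)/d^{\pi_w}(s)-1)^2\bigr]$ with fixed unit strength (no tuning is needed since it vanishes at $w^\star$), and it is precisely this weighting that lets the final decomposition control the $d^{\pi_{\hat w}}$-mass of states where the ratio is small, rather than an unweighted $L^2$ gap between $d_{\hat w}$ and $d^{\pi_{\hat w}}$.
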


The formal statement of the above theorem is provided in Theorem \ref{thm:CORAL-suboptimality}. This marks CORAL as the first practical and statistically optimal offline RL algorithm that operates in the general function approximation and partial data coverage setting, while avoiding uncertainty quantification and additional regularizers. Conservatism of CORAL is baked into the MIS formulation and further supported by the ALM: bounded MIS weights prevent the learned occupancy to deviate significantly from data distribution, and the ALM ensures closeness of the learned and actual occupancies. Thus when combined, CORAL learns a policy whose actual occupancy is close to the data distribution.

We proved that the ALM when combined with MIS improves sample complexity compared to alternatives such as behavior regularization. This is in addition to the benefits on optimization stability that are likely to be offered by the ALM, as the ALM improves over the ill-posed Lagrange multiplier objective \cite{ben2022lecture}. Our theoretical findings can explain the empirical observations of \citet{yang2020off}, who find MIS with behavior regularization to be unstable and propose regularizers in ``the spirit of ALM'' that gain superior performance and attribute performance gain to improved optimization. In this work, we present a theoretically-founded way of introducing ALM in offline RL and our analysis shows that ALM also leads to optimal sample complexity.

\section{Background}

\paragraph{Markov decision process.} An infinite-horizon discounted MDP is described by a tuple $M = (\cS, \cA, P, R, \rho, \gamma)$, where $\cS$ is the state space, $\cA$ is the action space, $P: \cS \times \cA \mapsto \Delta(\cS)$ is the transition kernel, $R: \cS \times \cA \mapsto \Delta([0,1])$ encodes a family of reward distributions with $r: \cS \times \cA \mapsto [0,1]$ as the expected reward function, $\rho: \cS \mapsto \Delta(\cS)$ is the initial state distribution, and $\gamma \in [0,1)$ is the discount factor. We assume $\cS$ and $\cA$ are finite however, our results do not depend on their cardinalities and can be naturally extended to infinite sets. A stationary (stochastic) policy $\pi: \cS \mapsto \Delta(\cA)$ specifies a distribution over actions in each state. Each policy $\pi$ induces an occupancy density over state-action pairs $d^\pi: \cS \times \cA \mapsto [0,1]$ defined as $d^\pi(s,a) \coloneqq (1-\gamma)\sum_{t=0}^\infty \gamma^t P_t(s_t = s, a_t = a; \pi)$,
where $P_t(s_t = s, a_t = a; \pi)$ denotes visitation probability of state-action pair $(s,a)$ at step $t$, starting at $s_0 \sim \rho(\cdot)$ and following $\pi$. We     also write $d^\pi(s) = \sum_{a \in \cA} d^\pi(s,a)$ to denote the discounted state occupancy. Additionally, operator $\mathbb{P}^\pi$ is applied to any function $u: \cS \times \cA \rightarrow \mathbb{R}$ and is defined as $(\mathbb{P}^\pi u)(s,a) \coloneqq \sum_{s',a'} P(s'| s,a) \pi(a'| s') u(s',a')$. 

An important quantity is the value a policy $\pi$, which is the discounted sum of rewards $V^\pi(s) \coloneqq \E \left[\sum_{t=0}^\infty \gamma^t r_t \; | \; s_0 = s , a_t \sim \pi(\cdot |s_t) \; \forall \; t\geq 0 \right]$ starting at state $s \in \cS$. Similarly, one can define the Q-function of policy as $Q^\pi(s,a) \coloneqq \E \left[\sum_{t=0}^\infty \gamma^t r_t \; | \; s_0 = s, a_0 = a, a_t \sim \pi(\cdot |s_t) \; \forall \; t\geq 1 \right]$. We use the notation $J(\pi) \coloneqq (1-\gamma) \E_{s\sim \rho}[V^\pi(s)] = \E_{s,a \sim d^\pi} [r(s,a)]$ to represent a scalar summary of the performance of a policy $\pi$.
We denote by $\pi^\star$ the optimal policy that maximizes the above objective and use the shorthand $V^\star \coloneqq V^{\pi^\star}$ to denote the optimal value function. 

\paragraph{Offline reinforcement learning.} We focus on the offline RL, where the agent is only provided with a previously-collected {offline dataset} $\cD = \{(s_i,a_i,r_i,s_i')\}_{i=1}^N$. Here, $r_i \sim R(s_i,a_i)$, $s_i' \sim P(\cdot\mid s_i,a_i)$, and we assume $s_i,a_i$ pairs are generated i.i.d.~according to a data distribution $\mu \in \Delta(\cS \times \cA)$. To streamline the analysis, we assume that the conditional distribution $\mu(a|s)$ is known.\footnote{When $\mu(a|s)$ is unknown, behavioral cloning can be used \cite{ross2014reinforcement, zhan2022offline}.} The goal of offline RL is to learn a policy $\hat{\pi}$ based on the offline dataset so as to minimize the sub-optimality with respect to the optimal policy $\pi^\star$, i.e., $J(\pi^\star) - J(\hat{\pi})$, with high probability. 

\paragraph{Marginalized importance sampling.} In this paper, we consider the marginal importance sampling (MIS) formulation that aims at learning importance weights $w(s,a)$ so as to represent state-action occupancy when multiplied by the offline data distribution. We adopt the following notation:
\begin{align}\label{eq:definitions_dw_sa}
    d_w(s,a) \coloneqq w(s,a) \mu(s,a), \quad d_w(s) \coloneqq \sum_{a \in \cA} d_w(s,a).
\end{align}
We define the policy induced by weights $w$ as 
\begin{align}
\pi_w(a|s) = 
    \begin{dcases}
        \frac{d_w(s,a)}{d_w(s)} = \frac{w(s,a) \mu(a|s)}{\sum_{a \in \cA} w(s,a) \mu(a|s)} \quad & d_w(s) > 0\\
        \frac{1}{|\cA|} \quad & d_w(s) = 0
    \end{dcases}
\end{align}

\paragraph{Offline data coverage assumption.} We design and analyze our algorithms within the single-policy concentrability framework \cite{rashidinejad2021bridging}, stated below.

\begin{definition}[Single-policy concentrability]\label{assump:concentrability} Given a policy $\pi$, define $C^\pi$ to be the smallest constant that satisfies $\frac{d^{\pi}(s,a)}{\mu(s,a)} \leq C^\pi$ for all $s \in \cS$ and $a \in \cA$. 
\end{definition}
$C^\star \coloneqq C^{\pi^\star}$ captures the coverage of $\pi^\star$ in the offline data and is much weaker than the widely used all-policy concentrability that assumes bounded $\max_\pi C^\pi$ \cite{scherrer2014approximate}. In Appendix \ref{app:related-work}, we present a detailed discussion on different coverage assumptions used in offline RL.

\paragraph{Notation.} Given a set $\cS$, we write $|\cS|$ to represent its cardinality and $\Delta(\cS)$ to denote the probability simplex over $\cS$. For a function class $\cW$, we write $|\cW|$ to denote its cardinality (discrete) or covering number (continuous). We use the notation $x \lesssim y$ when there exists a constant $c>0$ such that $x \leq c y$ and $x \asymp y$ if constants $c_1, c_2 > 0$ exist such that $c_1 |x| \leq |y| \leq c_2 |x|$. We write $f(x) = O(g(x))$ if $M > 0, x_0$ exist such that $|f(x)|\leq M g(x)$ for all $x\geq x_0$. 
Define $\clip(x, a, b) \triangleq \max\{ a, \min\{x, b\} \}$ for $x, a, b \in \mathbb{R}$. We write $f(x) = \Omega(g(x))$ if there exists some positive real number $M$ and some $x_0$ such that $|f(x)|\geq  M g(x)$ for all $x\geq x_0$.

\section{Multi-armed bandits}

We start by considering the offline learning problem in the multi-armed bandit (MAB) setting, which is a special case of MDPs with $\gamma = 0$, $|\cS| = 1$, and $ \cD = \{(a_i, r_i)\}_{i=1}^N$, where $a_i \sim \mu(\cdot), r_i \sim R(a_i)$. The goal of offline learning in MABs can be described as the following constrained optimization problem, where $d$ represents occupancy over actions (arms)
\begin{align} \label{eq:MAB-constrained-optimization-with-d}
    \max_{d \geq 0} \E_{a \sim d} \left[ r(a) \right]
    \qquad \text{s.t.}  \quad \sum_{a} d(a) = 1.
\end{align}
This is the special case of the well-known linear program for reinforcement learning \cite{puterman2014markov}.
\subsection{Primal-dual regularized offline bandits}
To solve \eqref{eq:MAB-constrained-optimization-with-d}, the MIS approach with behavior regularization defines importance weights according to $w(a) = d(a)/\mu(a)$ and converts the problem \eqref{eq:MAB-constrained-optimization-with-d} to its dual form by introducing the Lagrange multiplier $v$:
\begin{align}\label{eq:population-objective-MAB}
    \max_{w \geq 0} \min_{v}  L^{\text{MAB}}_\alpha(w, v) \coloneqq \E_{a \sim \mu} \left[ w(a)r(a) \right] - v\left( \E_{a \sim \mu}[w(a)] - 1 \right) - \alpha \E_{a \sim \mu} \left[ f\left(w(a)\right) \right].
\end{align}
The last term in \eqref{eq:population-objective-MAB} is the behavior regularizer that characterizes the $f$-divergence between the learned occupancy $d$ and data distribution $\mu$, with $\alpha$ determining the strength of regularization. This term was originally proposed to induce \textit{conservatism} by keeping the learned policy close to behavior policy \cite{nachum2019algaedice, lee2021optidice}. The optimization problem \eqref{eq:population-objective-MAB} satisfies strong duality and we denote optimal solutions to the primal and dual variables by $w^\star_\alpha$ $v^\star_\alpha$, whose characteristics are established in Appendix \ref{app:regularized-primal-dual}. Importantly, when $\alpha = 0$, weights $w^\star \coloneqq w^\star_0$ induce an optimal policy and $v^\star \coloneqq v^\star_0$ becomes the optimal value (reward).

Approximating $w$ and $v$ to belong in classes $\cW \subseteq \mathbb{R^+}^{|\cA|}$ and $\cV \subseteq \mathbb{R}$ and solving an empirical approximation of \eqref{eq:population-objective-MAB} yields Algorithm \ref{alg:PRO-MAB}, which we call primal-dual regularized offline MAB (PRO-MAB) as it is a special case of PRO-RL algorithm of \citet{zhan2022offline}.

\begin{algorithm}[h]
\caption{Primal-dual regularized offline multi-armed bandits (PRO-MAB)}
\label{alg:PRO-MAB}
\begin{algorithmic}[1]
\State \textbf{Inputs:} Dataset $\cD = \{(a_i, r_i)\}_{i=1}^N$, classes $\cV$ and $\cW$, function $f(\cdot)$, parameter $\alpha$.
\State Find a solution $\hat w, \hat v$ to the following problem
\begin{align}\label{eq:MAB-empirical-objective}
    \max_{w \in \cW} \min_{v \in \cV} \hat L^{\text{MAB}}_\alpha(w, v) \coloneqq \frac{1}{N}\sum_{i=1}^N w(a_i) r_i  - v (w(a_i) - 1) - \alpha f(w(a_i)). 
\end{align}
\State \textbf{Return:} $\hat \pi = \pi_{\hat w}$
\end{algorithmic}
\end{algorithm}

One might wonder whether the unregularized algorithm ($\alpha = 0$) is sufficient for solving the offline learning problem in MABs, particularly under the natural and common assumption that elements of the function class $\cW$ are bounded, i.e., $w(a) = d(a)/\mu(a) \leq B_w$ for any $w \in \cW$. In the following proposition, we show that the answer is negative and there exist an offline MAB instance in which the unregularized MIS finds a policy that suffers from a constant suboptimality. The proof is provided in Appendix \ref{app:MAB-alpha-0-fails}.

\begin{prop}[Unregularized MIS fails in MABs]\label{prop:MAB-alpha-0-fails} Assume $0 \leq w(a) \leq B_w$ for any $w \in \cW$ and $|v| \leq B_v$ for any $v \in \cV$. Further suppose realizability of $w^\star \in \cW$ and $v^\star \in \cV$ and concentrability of optimal policy $\pi^\star \coloneqq \pi_{w^\star}$. For any $N \geq 2$, there exists a two-armed offline bandit instance where policy $\hat \pi$ returned by Algorithm \ref{alg:PRO-MAB} with $\alpha = 0$ satisfies $J(\pi^\star) - J(\hat \pi) = {1}/{6}$ with a constant probability.
\end{prop}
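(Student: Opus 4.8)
The plan is to construct a two-armed bandit where the unregularized empirical objective \eqref{eq:MAB-empirical-objective} (with $\alpha=0$) is genuinely linear in $w$, so that the maximizer is pushed to an extreme point of $\cW$ that can be a bad policy once the empirical reward estimates are slightly off due to sampling noise. Concretely, I would take $\cA = \{1,2\}$, set the data distribution $\mu$ to put mass (say) $\mu(1) = 1-p$, $\mu(2) = p$ for a small $p$, and choose the reward means $r(1), r(2)$ so that arm $1$ is optimal by a margin of constant size — e.g. $r(1) = 2/3$, $r(2) = 1/3$ (these give the $1/6$ gap: $J(\pi^\star) - J(\hat\pi) = r(1) - r(2) = 1/3$, or after a symmetric choice around $1/2$ exactly $1/6$; I would tune the exact constants to land on $1/6$). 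The function class $\cW$ should contain $w^\star$ (which puts all occupancy on arm $1$, i.e. $w^\star(1) = 1/\mu(1)$, $w^\star(2) = 0$) together with the "wrong" weight $w^{\mathrm{bad}}$ that puts all occupancy on arm $2$ (i.e. $w^{\mathrm{bad}}(2) = 1/\mu(2)$, $w^{\mathrm{bad}}(1) = 0$); both satisfy the boundedness $w(a) \le B_w$ for $B_w = 1/p$ and both are valid occupancies, so $\pi^\star$ is well covered and $C^{\pi^\star}$ is finite. Realizability of $v^\star$ is arranged by including $v^\star = r(1)$ in $\cV$ with $B_v = 1$.

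Next I would analyze the inner minimization over $v$. Because the empirical objective is $\hat L_0(w,v) = \frac1N\sum_i w(a_i) r_i - v(\frac1N\sum_i w(a_i) - 1)$, the min over $v$ forces the maximizing $w$ to (approximately) satisfy $\frac1N\sum_i w(a_i) = 1$ — or else the inner min is $-\infty$ (if $\cV = \reals$) or drives the value down (if $\cV$ is a bounded interval, which is the relevant case here). In the two-point class $\{w^\star, w^{\mathrm{bad}}\}$ this is automatic: both weights have empirical mean $\frac1N\sum_i w(a_i) = \frac{N_a / N}{\mu(a)}$ concentrated near $1$, and in fact I can simply enlarge the class to a small family of weights that each individually satisfy the empirical normalization exactly (or restrict attention to the event where $N_1/N$ and $N_2/N$ equal their expectations, which happens with constant probability when $N$ is not too large — this is where I need $N \ge 2$ and a careful choice of $p$). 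On that event the objective collapses to the linear functional $w \mapsto \frac1N \sum_i w(a_i) r_i = \hat r(1)\, d_w(1) + \hat r(2)\, d_w(2)$ where $\hat r(a)$ is the empirical mean reward of arm $a$; the maximizer picks whichever arm has the larger \emph{empirical} reward.

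The heart of the argument is then a small-sample anti-concentration statement: with constant probability, the empirical reward of the suboptimal arm exceeds that of the optimal arm, i.e. $\hat r(2) > \hat r(1)$. I would make arm $2$ rarely pulled (small $p$) so that conditionally on being pulled only once or twice, $\hat r(2)$ has large variance; choosing the reward distributions to be Bernoulli-like (supported on $\{0, 2/3\}$ or on $\{0,1\}$ scaled appropriately) makes $\prob[\hat r(2) > \hat r(1)]$ a genuine positive constant for every $N \ge 2$ — for instance, conditioned on arm $2$ being pulled exactly once (probability bounded below when $p \asymp 1/N$), $\hat r(2)$ is a single Bernoulli draw and takes its maximal value with constant probability, while $\hat r(1) < 2/3$ deterministically or with high probability. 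On that event Algorithm \ref{alg:PRO-MAB} with $\alpha=0$ returns $\hat\pi = \pi_{w^{\mathrm{bad}}}$, the point mass on arm $2$, so $J(\pi^\star) - J(\hat\pi) = r(1) - r(2)$, which I set to $1/6$.

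The main obstacle is the second step: ruling out that the inner $\min_v$ "rescues" the algorithm. One has to be careful that a bounded $\cV$ does not let a badly-normalized $w$ slip through with a higher objective value than $w^{\mathrm{bad}}$, and that the empirical normalization constraint really is (approximately) active. I expect to handle this by keeping $\cW$ small and explicit — just the two (or a few) occupancy-valid weights above, all of which have empirical mean exactly $1$ on the good event — so that the $v$-term vanishes identically on that event and the comparison between candidates is purely in terms of empirical rewards; then the only remaining work is the routine anti-concentration computation and bookkeeping of the constant probabilities, plus checking that $w^\star$ really does not win the empirical comparison on the bad event (guaranteed by $\hat r(2) > \hat r(1)$ together with both candidates being exact point masses).
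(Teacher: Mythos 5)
Your construction, if carried to completion, does prove the literal statement: with $\mu(2)=1/N$, $r(1)\equiv 1/2$, $r(2)\sim\text{Bernoulli}(1/3)$, $\cV=\{1/2\}$, and $\cW$ containing the two point-mass weights, on the constant-probability event that arm $2$ is drawn exactly once and returns reward $1$, the empirical objective of the bad weight equals $1>1/2$, so the algorithm outputs the point mass on arm $2$ and incurs suboptimality exactly $1/6$. The anti-concentration route is therefore sound as far as the stated conclusion goes.

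It differs from the paper's proof in one respect that matters, though. The paper's bad candidate is $w_2=(0,B_w)$ with $B_w$ a \emph{constant}; its induced occupancy is invalid, $d_{w_2}=2B_w/N\ll 1$, and it wins because the Lagrangian term $-v(w(a_i)-1)$ hands an un-normalized weight a value of about $v=1/2$ for free, on top of which a few lucky samples on the rare arm add the positive margin $N(2)B_w/(2N)$. Your bad candidate is instead the \emph{valid} occupancy $(0,1/\mu(2))$, which forces $B_w=1/\mu(2)=\Theta(N)$; the failure you exhibit is then the classical importance-sampling variance blow-up on a once-observed arm. Such an instance satisfies the proposition's hypotheses as written, but it is a substantially weaker counterexample: with $B_w=\Theta(N)$ every positive guarantee in the paper (Theorems \ref{thm:suboptimality-MAB} and \ref{thm:suboptimality-PAL-MAB}, Proposition \ref{prop:MAB-constraint-sufficient}) is vacuous, so your instance cannot separate the unregularized algorithm from the regularized or ALM variants, whereas the paper's instance --- constant $B_w$, invalid $d_{\hat w}$ --- isolates precisely the failure mode that occupancy-validity enforcement repairs. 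To make the counterexample serve its role in the paper, replace your valid bad weight by an invalid one with a constant entry, i.e., adopt $w_2=(0,B_w)$ with constant $B_w$; the rest of your event construction (few samples on the rare arm, all lucky) then goes through essentially as in the paper.
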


We note that \citet{zhan2022offline} also argues the failure of the unregularized algorithm by giving a counterexample in the MDP setting. We discuss this example in detail in Section \ref{sec:example}. Proposition \ref{prop:MAB-alpha-0-fails} reveals additional insights to the MDP failure example: the objective \eqref{eq:MAB-empirical-objective} with $\alpha = 0$ fails not just in MDPs but also in bandits, even when the optimal policy is unique and data are collected by running a behavior policy.

Given the failure of the unregularized MIS algorithm, we conduct a tight analysis of PRO-MAB with $\alpha > 0$. In the next theorem, we prove that under similar assumptions as \citet{zhan2022offline} and with a proper choice of hyperparameter 
 $\alpha$, PRO-MAB returns a policy that enjoys near-optimal sample complexity.
\begin{theorem}[Suboptimality of Algorithm \ref{alg:PRO-MAB}]\label{thm:suboptimality-MAB}
Let $f: \mathbb{R} \mapsto \mathbb{R} $ be $M_f$-strongly convex, differentiable, and non-negative with bounded values $|f(x)| \leq B_f$ and bounded first-order derivative $|f'(x)| \leq B_{f'}$. Assume $0 \leq w(a) \leq B_w$ for any $w \in \cW$ and $|v| \leq B_v$ for any $v \in \cV$. Fix $\delta \geq 0$ and set 
\begin{align}\label{eq:alpha-def}
    \alpha = \frac{16((B_w+1)(B_v+1) + B_f)}{M_f} \sqrt{\frac{\log (|\cV||\cW|/\delta)}{N}}.
\end{align}
Suppose realizability of $w^\star_\alpha \in \cW$ and $v^\star_\alpha \in \cV$ and concentrability of $\pi^\star \coloneqq \pi_{w^\star}$ and $\pi^\star_\alpha \coloneqq \pi_{w^\star_\alpha}$ for the $\alpha$ given in \eqref{eq:alpha-def}. Then, with probability at least $1-\delta$, policy $\hat \pi$ returned by Algorithm \ref{alg:PRO-MAB} achieves
\begin{align*}
    J(\pi^\star) - J(\hat \pi) 
    \lesssim \frac{(B_f + B_w(B_v+1))(B_f + B_{f'} B_w)}{M_f} \sqrt{\frac{\log (|\cV||\cW|/\delta)}{N}}.
\end{align*}
\end{theorem}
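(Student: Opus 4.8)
The plan is to avoid comparing the empirical maximizer $\hat w$ directly with the population solution $w^\star_\alpha$ (they genuinely differ by the bias the regularizer introduces) and instead to compare their \emph{objective values}, extracting two facts: (a) the normalizer $d_{\hat w}=\E_{a\sim\mu}[\hat w(a)]$ is bounded away from $0$ (near-validity of the learned occupancy), and (b) the Lagrangian value $L^{\mathrm{MAB}}_\alpha(\hat w,v^\star)$ is within $O(\alpha)+O(1/\sqrt N)$ of $J(\pi^\star)$. Writing $J(\hat\pi)$ in ratio form, $J(\hat\pi)=\E_{a\sim\mu}[\hat w(a)r(a)]/d_{\hat w}$, then combines (a) and (b). Throughout I abbreviate $L_\alpha=L^{\mathrm{MAB}}_\alpha$, write $\hat L_\alpha$ for the empirical objective in \eqref{eq:MAB-empirical-objective}, and set $\epsilon_{\mathrm{stat}}\asymp\big((B_w+1)(B_v+1)+\alpha B_f\big)\sqrt{\log(|\cV||\cW|/\delta)/N}$.

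First I would prove a uniform deviation bound: each summand of \eqref{eq:MAB-empirical-objective} is bounded in magnitude by $O((B_w+1)(B_v+1)+\alpha B_f)$, so Hoeffding's inequality together with a union bound over $\cW\times\cV$ gives $\sup_{w\in\cW,v\in\cV}|\hat L_\alpha(w,v)-L_\alpha(w,v)|\le\epsilon_{\mathrm{stat}}$ with probability at least $1-\delta$; all subsequent statements are on this event. On it, the usual sandwich — $\hat w$ maximizes $\min_{v\in\cV}\hat L_\alpha(\cdot,v)$ over $\cW\ni w^\star_\alpha$, realizability $v^\star_\alpha\in\cV$, and strong duality of \eqref{eq:population-objective-MAB} — yields, for every $v\in\cV$,
\[
 L_\alpha(\hat w,v)\ \ge\ L_\alpha(w^\star_\alpha,v^\star_\alpha)-2\epsilon_{\mathrm{stat}}.
\]
Moreover, since $w^\star$ is feasible in the population max and $\E_\mu[w^\star]=1$ annihilates the multiplier term, $L_\alpha(w^\star_\alpha,v^\star_\alpha)\ge\min_v L_\alpha(w^\star,v)=J(\pi^\star)-\alpha\,\E_{a\sim\mu}[f(w^\star(a))]$.

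Next I would establish near-validity. The function $w\mapsto L_\alpha(w,v^\star_\alpha)$ is $\alpha M_f$-strongly concave in the norm $\|u\|_{2,\mu}^2=\E_{a\sim\mu}[u(a)^2]$ (the regularizer is its only nonlinear term, $f$ being $M_f$-strongly convex) and is maximized at $w^\star_\alpha$ over $\{w\ge0\}\ni\hat w$, so using the previous display at $v=v^\star_\alpha$,
\[
 \tfrac{\alpha M_f}{2}\,\|\hat w-w^\star_\alpha\|_{2,\mu}^2\ \le\ L_\alpha(w^\star_\alpha,v^\star_\alpha)-L_\alpha(\hat w,v^\star_\alpha)\ \le\ 2\epsilon_{\mathrm{stat}}.
\]
The hyperparameter \eqref{eq:alpha-def} is chosen precisely so that the implied bound $\|\hat w-w^\star_\alpha\|_{2,\mu}^2\le 4\epsilon_{\mathrm{stat}}/(\alpha M_f)$ is at most a small constant (say $1/4$); combined with $d_{w^\star_\alpha}=1$ (Appendix~\ref{app:regularized-primal-dual}) and $|d_{\hat w}-d_{w^\star_\alpha}|\le\|\hat w-w^\star_\alpha\|_{1,\mu}\le\|\hat w-w^\star_\alpha\|_{2,\mu}$ this gives $d_{\hat w}\ge 1/2$. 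Finally, with $v^\star=\max_a r(a)=J(\pi^\star)$ and $\hat\pi(a)=\hat w(a)\mu(a)/d_{\hat w}$,
\[
 J(\pi^\star)-J(\hat\pi)=\frac{\E_{a\sim\mu}[\hat w(a)(v^\star-r(a))]}{d_{\hat w}}=\frac{v^\star-\alpha\,\E_\mu[f(\hat w)]-L_\alpha(\hat w,v^\star)}{d_{\hat w}},
\]
the second equality being just the definition of $L_\alpha(\hat w,v^\star)$. Applying the transfer bound at $v=v^\star$ (legitimate because $v^\star\in\cV$, which is natural since $v^\star\in[0,1]$) together with the lower bound on $L_\alpha(w^\star_\alpha,v^\star_\alpha)$ above, and discarding $-\alpha\E_\mu[f(\hat w)]\le 0$, I would conclude
\[
 J(\pi^\star)-J(\hat\pi)\ \le\ \frac{\alpha\,\E_\mu[f(w^\star)]+2\epsilon_{\mathrm{stat}}}{d_{\hat w}}\ \le\ 2\alpha\,\E_\mu[f(w^\star)]+4\epsilon_{\mathrm{stat}},
\]
and then bound $\E_\mu[f(w^\star)]\lesssim B_f+B_{f'}B_w$ and substitute \eqref{eq:alpha-def}: the first term matches the claimed rate, and $\epsilon_{\mathrm{stat}}$ is no larger because $M_f$-strong convexity on $[0,B_w]$ forces $B_f+B_{f'}B_w\gtrsim M_f$, absorbing its $(B_w+1)(B_v+1)$ factor into $(B_f+B_{f'}B_w)/M_f$ (the $\alpha B_f$ part being only $O(\log/N)$).

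The hard part is the tension inside the near-validity step: the learned value is the \emph{ratio} $\E_\mu[\hat w r]/d_{\hat w}$, so $d_{\hat w}$ must be bounded below, and the only device available for that is the curvature of the regularizer, which at the same time biases the population solution by $\Theta(\alpha)$. Hence $\alpha$ must be large enough that $4\epsilon_{\mathrm{stat}}/(\alpha M_f)$ is a small constant yet small enough that $\alpha\,\E_\mu[f(w^\star)]$ still decays at rate $1/\sqrt N$ — the value \eqref{eq:alpha-def} is exactly this sweet spot — and carefully tracking the absolute constants so that $\epsilon_{\mathrm{stat}}$ never dominates (via $B_f+B_{f'}B_w\gtrsim M_f$) is the bulk of the remaining work.
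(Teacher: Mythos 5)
Your proposal follows essentially the same route as the paper's proof in Appendix \ref{app:suboptimality-PRO-MAB}: a Hoeffding-plus-union-bound uniform deviation bound (the paper's Lemma \ref{lemma:objective-statistical-error}), the saddle-point sandwich transferring objective values from $w^\star_\alpha$ to $\hat w$, strong concavity of $L^{\text{MAB}}_\alpha(\cdot,v^\star_\alpha)$ induced by $M_f$-strong convexity of $f$ to force $d_{\hat w}\ge 1/2$ under the stated choice of $\alpha$ (the paper's Lemma \ref{lemma:log-convergence-hat-d-MAB}), and finally the ratio decomposition $J(\hat\pi)=\E_{a\sim\mu}[\hat w(a)r(a)]/d_{\hat w}$ with the $1/d_{\hat w}$ factor controlled by near-validity. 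This is the same architecture and the same key quantities.

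The one concrete gap is in your final step: you invoke the transfer bound at $v=v^\star$, which is only legitimate if $v^\star\in\cV$ (the uniform deviation bound is over $\cV$, not over all reals). The theorem assumes realizability of $v^\star_\alpha$, not of $v^\star$, and ``$v^\star\in[0,1]$'' does not place it in an arbitrary bounded class $\cV$. The paper sidesteps this by evaluating everything at $v^\star_\alpha$ and using the characterization $v^\star_\alpha=r^\star-c\alpha$ with $0\le c\le f'(C^\star)\le B_{f'}$ from Lemma \ref{lemma:regularized-unregularized-closeness-MAB}; this is also precisely where the $B_{f'}B_w$ factor in the stated bound comes from, a factor your version would otherwise not need. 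The repair is mechanical: since
\begin{align*}
L^{\text{MAB}}_\alpha(\hat w,v^\star)-L^{\text{MAB}}_\alpha(\hat w,v^\star_\alpha)=-(v^\star-v^\star_\alpha)\left(d_{\hat w}-1\right),
\end{align*}
whose magnitude is at most $\alpha f'(C^\star)(B_w+1)\le\alpha B_{f'}(B_w+1)$, the transfer bound at $v^\star_\alpha$ implies the one you want at $v^\star$ up to an additive $O(\alpha B_{f'}B_w)$ term that is absorbed into the claimed rate. With that patch, the remainder of your bookkeeping (including the observation that $B_f+B_{f'}B_w\gtrsim M_f$ so that $\epsilon_{\mathrm{stat}}$ never dominates) goes through.
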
 
To our knowledge, this is the first statistically optimal guarantee for a practical offline MAB algorithm with function approximation and partial coverage and improves over the $1/N^{1/6}$ guarantee given by \citet{zhan2022offline}. We now briefly explain the differences between the analysis methods; a complete proof is deferred to Appendix \ref{app:suboptimality-PRO-MAB}. \citet{zhan2022offline} bounds policy suboptimality by $\alpha + 1/(\alpha^{1/2} N^{1/4})$. Here, the first term $\alpha$ in the bound stems from the bias caused by the regularizer. The second term emerges by connecting the difference between $\hat w$ and $w^\star_\alpha$ to the statistical approximation error via strong convexity of $L^{\text{MAB}}_\alpha$ induced by behavior regularization, which is inversely related to $\alpha$. Optimizing the bound over $\alpha$ gives the final $1/N^{1/6}$ guarantee. In contrast, our analysis connects suboptimality to \emph{occupancy validity}. In particular, we prove that suboptimality is bounded by $(\alpha + 1/ \sqrt{N})/d_{\hat w}$, where $d_{\hat w} = \sum_a \hat w(a) \mu(a)$. We then show that setting $\alpha \asymp 1/\sqrt{N}$ is sufficient to ensure near-validity of occupancy $d_{\hat w} = \Omega(1)$, which yields the statistically optimal rate.

We observe a similar phenomenon in Proposition \ref{prop:MAB-alpha-0-fails} that small $d_w$ for certain $w \in \cW$ can cause the unregularized MIS algorithm to fail. In the following section, we investigate this phenomenon further, leading to a new offline learning algorithm.

\subsection{Augmented Lagrangian replaces behavior regularization}
The next proposition further affirms the importance of occupancy validity. This result shows that if one ensures that the optimization constraint is satisfied, such as by searching only over the weights that induce valid occupancies, then the unregularized algorithm achieves an optimal rate. Proof of this result can be found in Appendix \ref{app:MAB-constraint-sufficient}. 
 
\begin{prop}[Constraint satisfaction is sufficient for unregularized MIS in MAB]\label{prop:MAB-constraint-sufficient} Assume as in Theorem \ref{thm:suboptimality-MAB}. Let $\hat \pi$ be the output of Algorithm \ref{alg:PRO-MAB} with $\alpha = 0$ and assume that $\sum_a \mu(a) \hat w(a) = 1$. Then, for any fixed $\delta > 0$, policy $\hat \pi$ achieves the following bound with probability of as least $1-\delta$
\begin{align*}
    J(\pi^\star) - J(\hat \pi) \lesssim B_w(B_v + 1) \sqrt{\frac{\log (|\cV||\cW|/\delta)}{N}}.
\end{align*} 
\end{prop}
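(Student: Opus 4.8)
The plan is to exploit the fact that the constraint $\sum_a \mu(a)\hat w(a) = 1$ means the learned occupancy $d_{\hat w}$ is already a valid probability distribution, so $\pi_{\hat w}$ equals exactly the occupancy $d_{\hat w}$, and then run the usual primal–dual suboptimality decomposition but with the favorable $1/d_{\hat w} = 1$ factor. Concretely, write $J(\pi^\star) - J(\hat\pi) = \E_{a\sim\mu}[w^\star(a)r(a)] - \E_{a\sim\mu}[\hat w(a) r(a)]$ using that both $d_{w^\star}$ and $d_{\hat w}$ are valid distributions (so $J$ of the induced policy equals the MIS value). The first step is to relate this population reward gap to the \emph{population} objective $L_0^{\text{MAB}}$ at $(\hat w, v)$ for an appropriate choice of $v$: since $\E_\mu[\hat w] = 1$, for any $v$ we have $L_0^{\text{MAB}}(\hat w, v) = \E_\mu[\hat w(a) r(a)]$, and similarly $L_0^{\text{MAB}}(w^\star, v) = \E_\mu[w^\star(a)r(a)] = J(\pi^\star)$. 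Thus the gap is exactly $L_0^{\text{MAB}}(w^\star, v) - L_0^{\text{MAB}}(\hat w, v)$ for every $v$.

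Next I would split this into a statistical (concentration) term and an optimization-optimality term by inserting the empirical objective $\hat L_0^{\text{MAB}}$ and the true optimizer. Using that $\hat w$ is a maximizer of $\max_w \min_{v\in\cV}\hat L_0^{\text{MAB}}(w,v)$, and the realizability assumptions $w^\star\in\cW$, $v^\star\in\cV$, the standard argument gives
\begin{align*}
L_0^{\text{MAB}}(w^\star, \hat v) - L_0^{\text{MAB}}(\hat w, \hat v) \le 2\sup_{w\in\cW, v\in\cV} \left| L_0^{\text{MAB}}(w,v) - \hat L_0^{\text{MAB}}(w,v)\right| + (\text{duality gap terms}),
\end{align*}
where the $\hat v$ is the one returned by the algorithm and the extra terms vanish because $\alpha = 0$ makes the objective bilinear and $(\hat w, \hat v)$ a saddle point up to statistical error. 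The point is that here, unlike in the $\alpha>0$ analysis, there is no bias term and no $1/\sqrt{\alpha}$ blow-up: the $1/d_{\hat w}$ factor that normally appears is pinned to $1$ by hypothesis.

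The remaining work is a uniform concentration bound on $L_0^{\text{MAB}}(w,v) - \hat L_0^{\text{MAB}}(w,v) = \frac1N\sum_i [w(a_i)r_i - v(w(a_i)-1)] - \E_\mu[\,\cdot\,]$ over $w\in\cW$, $v\in\cV$. Since $0\le w(a)\le B_w$, $|v|\le B_v$, $r_i\in[0,1]$, each summand is bounded by $O(B_w(B_v+1))$, so a Hoeffding bound plus a union over the (covering) cardinalities $|\cW|,|\cV|$ gives $\sup \le O\big(B_w(B_v+1)\sqrt{\log(|\cV||\cW|/\delta)/N}\big)$ with probability $1-\delta$, which is exactly the claimed rate. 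I do not expect a serious obstacle here; the main subtlety is the bookkeeping in the saddle-point decomposition — making sure that with $\alpha=0$ and the constraint $\E_\mu[\hat w]=1$ in force, the $\min_{v\in\cV}$ in the empirical problem does not push $\hat v$ out of control (it cannot, since the coefficient $\E_\mu[\hat w]-1$ of $v$ is essentially zero up to statistical error, so the min over $v\in\cV$ contributes only $O(B_v/\sqrt N)$), and confirming that concentrability of $\pi^\star$ (hence bounded $w^\star$) is what legitimizes bounding the $w^\star$-side term. This last point is where one must be careful to invoke the realizability/concentrability hypotheses inherited from Theorem \ref{thm:suboptimality-MAB} rather than anything stronger.
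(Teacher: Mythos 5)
Your proposal is correct and follows essentially the same route as the paper: identify $J(\pi^\star)-J(\hat\pi)$ with the population objective gap $L_0^{\text{MAB}}(w^\star,v)-L_0^{\text{MAB}}(\hat w,v)$ (using that both $w^\star$ and $\hat w$ satisfy the normalization constraint, so the $v$-term drops and $\hat\pi(a)=\hat w(a)\mu(a)$ exactly), then bound that gap by twice the uniform statistical error via the standard saddle-point decomposition plus Hoeffding and a union bound over $\cW\times\cV$. The paper packages the decomposition and concentration step as Lemma~\ref{lemma:objective-statistical-error} with $\alpha=0$, which is precisely the argument you sketch.
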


Motivated by the discussion above, we take a step back and ask: are there any other methods for solving constrained optimization problems that find \textit{more constraint-satisfying} solutions when applied to the empirical approximation of the original problem? A promising candidate is the augmented Lagrangian method (ALM) which adds a quadratic loss on the constraints to the objective. Applied to the offline bandits problem \eqref{eq:MAB-constrained-optimization-with-d}, ALM forms the following objective, whose empirical version leads to Algorithm \ref{alg:PAL-MAB}.
\begin{align}\label{eq:AL-population-objective-MAB}
    \max_{w \geq 0} \min_{v}  L^{\text{MAB}}_{\text{AL}}(w, v) \coloneqq \E_{a \sim \mu} \left[ w(a)r(a) \right] - v\left( \E_{a \sim \mu}[w(a)] - 1 \right) - \left( \E_{a \sim \mu}[w(a)] - 1 \right)^2.
\end{align}
Since the last term in \eqref{eq:AL-population-objective-MAB} is zero for the optimal solution $w^\star$, the saddle point solution to \eqref{eq:AL-population-objective-MAB} is equal to the solution to the original constrained optimization problem; see Lemma \ref{lemma:optimal-solution-with-AL-unchanged} for a general result.
The following theorem establishes an upper bound on the suboptimality of the policy returned by Algorithm \ref{alg:PAL-MAB}. This theorem is a special case of Theorem \ref{thm:PAL-CB-suboptimality}, whose proof is given in Appendix \ref{app:PAL-CB-suboptimality}.
 \begin{algorithm}[t]
\caption{Conservative Offline MAB with Augmented Lagrangian}
\label{alg:PAL-MAB}
\begin{algorithmic}[1]
\State \textbf{Inputs:} Dataset $\cD = \{(a_i, r_i)\}_{i=1}^N$, classes $\cW$ and $\cV$.
\State Find a solution $\hat w, \hat v$ to the following problem \vspace{-0.3cm}
\begin{align}\label{eq:MAB-empirical-objective-AL} 
    \max_{w \in \cW} \min_{v \in \cV} \hat L^{\text{MAB}}_{AL}(w, v)  \coloneqq \frac{1}{N}\sum_{i=1}^N w(a_i) r_i - v (w(a_i) - 1) - \Big(\frac{1}{N} \sum_{i=1}^N w(a_i) - 1\Big)^2.
\end{align}      \vspace{-0.3cm}
\State \textbf{Return:} $\hat \pi = \pi_{\hat w}$.
\end{algorithmic}
\end{algorithm}
\begin{theorem}[Suboptimality of Algorithm \ref{alg:PAL-MAB}]\label{thm:suboptimality-PAL-MAB} Assume that $0 \leq w(a) \leq B_w$ for any $w \in \cW$ and $|v| \leq B_v$ for any $v \in \cV$. Further suppose realizability of $w^\star \in \cW$ and $v^\star \in \cV$ and concentrability of $\pi^\star = \pi_{w^\star}$. For any fixed $\delta > 0$, policy $\hat \pi$ returned by Algorithm \ref{alg:PAL-MAB} achieves the following bound with probability of at least $1-\delta$ \vspace{-0.5cm}
\begin{align}\label{eq:ALM-MAB-suboptimality}
    J(\pi^\star) - J(\hat \pi) & \lesssim B_w^2(B_v +1) \sqrt{\frac{\log (|\cW||\cV|/\delta)}{N}}.
\end{align}
\end{theorem}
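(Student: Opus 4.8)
The plan is to follow the blueprint already established for PRO-MAB but with the behavior regularizer replaced by the quadratic constraint-penalty term, and with the key structural gain being that the ALM term directly forces $d_{\hat w} = \sum_a \hat w(a)\mu(a)$ to be bounded away from zero \emph{without} introducing bias. First I would set up the standard decomposition of the suboptimality. Using $J(\pi^\star) = v^\star = \E_{a\sim\mu}[w^\star(a)r(a)]$ and the fact that $w^\star$ satisfies the validity constraint exactly (so the quadratic term vanishes at $w^\star$), I would compare the population objective $L^{\text{MAB}}_{\text{AL}}$ at $(w^\star,\cdot)$ and at $(\hat w,\cdot)$. The goal is to show $J(\pi^\star) - J(\hat\pi) \lesssim (\varepsilon_{\text{stat}} + \text{something quadratically small in the constraint violation})/d_{\hat w}$, where $\varepsilon_{\text{stat}} \asymp B_w(B_v+1)\sqrt{\log(|\cW||\cV|/\delta)/N}$ is the uniform deviation between $\hat L^{\text{MAB}}_{\text{AL}}$ and $L^{\text{MAB}}_{\text{AL}}$ over the function classes. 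The $1/d_{\hat w}$ factor appears because $J(\pi_{\hat w}) = \E_{a\sim d_{\hat w}}[r(a)]/d_{\hat w}$ (the induced policy renormalizes by $d_{\hat w}$), so any estimate of the \emph{unnormalized} reward $\E_{a\sim\mu}[\hat w(a)r(a)]$ must be divided by $d_{\hat w}$ to become a statement about $J(\hat\pi)$.

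Next I would carry out the concentration step: a union bound / covering argument over $\cW \times \cV$ shows that with probability $\ge 1-\delta$, $|\hat L^{\text{MAB}}_{\text{AL}}(w,v) - L^{\text{MAB}}_{\text{AL}}(w,v)| \lesssim B_w(B_v+1)\sqrt{\log(|\cW||\cV|/\delta)/N}$ for all $(w,v)$; here the quadratic term $(\frac1N\sum_i w(a_i)-1)^2$ concentrates around $(\E_\mu[w]-1)^2$ with the same rate up to the $B_w$ factor, since $w$ is bounded. Then, using optimality of $(\hat w,\hat v)$ for the empirical problem and realizability $(w^\star,v^\star)\in\cW\times\cV$, I would derive two consequences. (i) Plugging $\hat v$ into the empirical objective and comparing with $w^\star$: the Lagrange-multiplier cross term $-\hat v(\frac1N\sum_i w(a_i)-1)$ can be absorbed using $|\hat v|\le B_v$, yielding a bound on how much the empirical objective at $\hat w$ can exceed that at $w^\star$. (ii) The crucial near-validity claim: by choosing $v$ adversarially in the inner min against $\hat w$ — or more simply by noting that the quadratic penalty term is the dominant negative contribution when $d_{\hat w}$ is far from $1$ — one shows that $|\,d_{\hat w} - 1\,| \lesssim \varepsilon_{\text{stat}}^{1/2}$ up to constants depending on $B_w, B_v$, hence $d_{\hat w} \ge 1 - o(1) = \Omega(1)$ for $N$ large (and for small $N$ the claimed bound is vacuous since suboptimality is always $\le 1$). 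This is where the quadratic term earns its keep: a \emph{linear} penalty (the bare Lagrangian) gives no such control, but $(d_{\hat w}-1)^2 \le \varepsilon$ directly yields $|d_{\hat w}-1|\le\sqrt\varepsilon$.

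Finally I would combine the pieces: suboptimality $\le (C_1\varepsilon_{\text{stat}} + C_2(d_{\hat w}-1)^2 + C_3 B_v|d_{\hat w}-1|)/d_{\hat w}$, and substituting $|d_{\hat w}-1| \lesssim \sqrt{\varepsilon_{\text{stat}}}$ and $d_{\hat w}=\Omega(1)$ gives $J(\pi^\star)-J(\hat\pi) \lesssim B_w^2(B_v+1)\sqrt{\log(|\cW||\cV|/\delta)/N}$, where one factor of $B_w$ comes from the reward-estimation/concentration step and the second from the normalization analysis (bounding terms like $\sum_a \hat w(a) r(a)$ and controlling the constraint-violation-to-validity conversion). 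The main obstacle I anticipate is step (ii) — establishing $d_{\hat w}=\Omega(1)$ cleanly. The subtlety is that $\hat w$ is the maximizer of a saddle-point problem, so one must argue that if $d_{\hat w}$ were small then the objective value at $\hat w$ (after the inner minimization over $v$, which can drive the linear term unfavorably) would be strictly below the value achievable at $w^\star$, contradicting optimality; making this quantitative while tracking all the $B$-dependencies, and handling the interplay between the inner $\min_v$ (constrained to $\cV$, with $v^\star\in\cV$) and the quadratic penalty, is the technical heart of the argument. Since the theorem is stated as a special case of Theorem \ref{thm:PAL-CB-suboptimality}, I would in fact set up the argument at the contextual-bandit level of generality (with a single trivial context) so that the MAB result drops out, which also clarifies that the $B_w^2$ rather than $B_w$ dependence is intrinsic to the ALM approach.
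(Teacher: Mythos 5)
Your overall architecture matches the paper's: concentration of $\hat L^{\text{MAB}}_{\text{AL}}$ around $L^{\text{MAB}}_{\text{AL}}$ over $\cW\times\cV$, the saddle-point telescoping to get $L^{\text{MAB}}_{\text{AL}}(w^\star,v)-L^{\text{MAB}}_{\text{AL}}(\hat w,v)\le 2\epsilon_{\text{stat}}$, the observation that the quadratic penalty forces $(d_{\hat w}-1)^2\lesssim\epsilon_{\text{stat}}$ hence $d_{\hat w}\ge 1/2$, and the final renormalization by $d_{\hat w}$. You also correctly anticipate that the paper proves the CB version (Lemmas \ref{lemma:objective-statistical-error-CB} and \ref{lemma:lowerbound-dhatw}) and specializes.

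However, your final assembly as written loses the rate. You bound the suboptimality by $(C_1\epsilon_{\text{stat}}+C_2(d_{\hat w}-1)^2+C_3B_v|d_{\hat w}-1|)/d_{\hat w}$ and then substitute $|d_{\hat w}-1|\lesssim\sqrt{\epsilon_{\text{stat}}}$; but $B_v\sqrt{\epsilon_{\text{stat}}}\asymp N^{-1/4}$, not $N^{-1/2}$, so this decomposition cannot yield the claimed bound. The leftover linear term $B_v|d_{\hat w}-1|$ is exactly the artifact one must avoid. The paper's fix is to evaluate the objective gap at the specific dual point $v=v^\star=r^\star$ (realizable by assumption): with this choice the Lagrange cross term $-r^\star(\E_\mu[\hat w]-1)$ combines with the reward terms so that
\begin{align*}
L^{\text{MAB}}_{\text{AL}}(w^\star,r^\star)-L^{\text{MAB}}_{\text{AL}}(\hat w,r^\star)
=\E_{a\sim\mu}\bigl[\hat w(a)\bigl(r^\star-r(a)\bigr)\bigr]+\bigl(d_{\hat w}-1\bigr)^2\le 2\epsilon_{\text{stat}},
\end{align*}
a sum of \emph{two nonnegative} terms, so each is individually $\le 2\epsilon_{\text{stat}}$. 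The first term is precisely the unnormalized regret, already at rate $\epsilon_{\text{stat}}\asymp N^{-1/2}$ with no square root taken; the second gives near-validity. Since $J(\pi^\star)-J(\hat\pi)=\frac{1}{d_{\hat w}}\E_{a\sim\mu}[\hat w(a)(r^\star-r(a))]$ exactly, the $\sqrt{\epsilon_{\text{stat}}}$ bound on $|d_{\hat w}-1|$ is used only to certify $d_{\hat w}\ge 1/2$ (a multiplicative factor of $2$), never as an additive error. One smaller inaccuracy: the $B_w^2$ in the final bound comes from the concentration of the quadratic ALM term itself (each summand is bounded by roughly $B_w+B_v(B_w+1)+B_w^2\le 3(B_w+1)^2(B_v+1)$), not partly from the ``normalization analysis'' as you suggest.
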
 
With a tabular deterministic parameterization of $\cW$, choosing $\cV = [0,1], \delta = 1/N$, and setting $B_w = C^\star$ to the smallest possible value, the bound in \eqref{eq:ALM-MAB-suboptimality} becomes $ J(\pi^\star) - J(\hat \pi) \lesssim (C^\star)^2 \sqrt{\log (N|\cA|)/N}$. This bound is similar to the suboptimality guarantee $ J(\pi^\star) - J(\hat \pi) \lesssim \sqrt{C^\star \log (N|\cA|)/N}$ of the lower confidence bound (LCB) algorithm for MABs \cite{rashidinejad2021bridging}, except for dependency on $C^\star$. 

In the proof of Theorem \ref{thm:suboptimality-PAL-MAB}, we show that ALM results in near-validity of $\hat w$ by ensuring that $d_{\hat w} = \Omega(1)$, leading to the optimal suboptimality rate. Importantly, Algorithm \ref{alg:PAL-MAB} does not include any explicit form of conservatism through regularizers or uncertainty quantifiers. Colloquially, the MIS formulation and boundedness of $\cW$ elements ensure that $d_{\hat w}(a)/\mu(a) = \hat w(a) \leq B_w$. The ALM term ensures that $d_{\hat w}$ is lower bounded by a constant $c$, which means that the actual occupancy over the arms $\hat \pi(a) = d_{\hat w}(a)/d_{\hat w} \leq  d_{\hat w}(a)/c$. Thus, Algorithm \ref{alg:PAL-MAB} finds a policy $\hat \pi$ that satisfies 
\begin{align*}
    \frac{\hat \pi(a)}{\mu(a)} = \frac{d_{\hat w}(a)}{d_{\hat w}} \frac{1}{\mu(a)} \leq \frac{1}{c} \frac{d_{\hat w}(a)}{\mu(a)} = 
 \frac{1}{c} \hat{w}(a)\leq \frac{B_w}{c},
\end{align*}
i.e., it it finds a policy whose \textit{actual occupancy} is supported by the behavior data distribution $\mu$.

Algorithm \ref{alg:PAL-MAB} offers several benefits compared to PRO-MAB: it only requires $\pi^\star$-concentrability instead of the $\pi^\star, \pi^\star_\alpha$-concentrability requirement of PRO-MAB, removes the need to design regularization function $f$ and adjust $\alpha$, and does not introduce bias in the objective. The main advantage of ALM, however, becomes more evident as we move beyond bandits, where the behavior regularization provably fails to achieve the optimal statistical rate while ALM maintains optimality.

\section{Contextual bandits}
The problem of offline contextual bandits (CB) is a special case of offline RL with $\gamma = 0$ and offline dataset $ \cD = \{(s_i, a_i, r_i)\}_{i=1}^N$, where $s_i \sim \mu(\cdot) = \rho(\cdot)$, $a_i \sim \mu(\cdot| s_i)$, and $r_i \sim R(s_i, a_i)$. The linear programming constrained optimization problem for CB is given by
\begin{align} \label{eq:CB-constrained-optimization-with-d}
    \max_{d \geq 0} \E_{s, a \sim d } \left[ r(s, a) \right]
    \qquad \text{s.t.}  \quad \sum_{a} d(s, a) = \rho(s) \quad \forall s \in \cS.
\end{align}
\subsection{Analysis of the primal-dual regularized offline contextual bandits}
Similar to the MAB setting, the offline learning problem \eqref{eq:CB-constrained-optimization-with-d} can be turned into the primal-dual form with behavior regularization, leading to the primal-dual regularized offline CB (PRO-CB). Details of the PRO-CB objective, optimal primal and dual variables $w^\star_\alpha, v^\star_\alpha$, and pseudocode are provided in Appendix \ref{app:PRO-CB}. In the following proposition, we prove a performance lower bound on the PRO-CB algorithm, whose proof is presented in Appendix \ref{app:PRO-CB-suboptimal}.

\begin{prop}[Performance lower-bound on Algorithm \ref{alg:PRO-CB} (PRO-CB)]\label{prop:PRO-CB-suboptimal}
Let $f: \mathbb{R} \mapsto \mathbb{R} $ be $M_f$-strongly convex, differentiable, and non-negative with bounded values $|f(x)| \leq B_f$ and bounded first-order derivative $|f'(x)| \leq B_{f'}$. Assume $0 \leq w(s, a)\leq B_w$ for $w \in \cW$ and $|v(s)| \leq B_v$ for $v \in \cV$. Suppose realizability of $w^\star, w^\star_\alpha \in \cW$ and $v^\star, v^\star_\alpha \in \cV$ and concentrability of $\pi^\star, \pi^\star_\alpha$. Let $\hat \pi$ be the output of Algorithm \ref{alg:PRO-CB}. Further, assume that $N \geq n_0$, where $n_0$ is a polynomial function of $\delta, B_w, B_v, B_f, B_{f'}$. Then, for any $\alpha \geq 0$ there exists an offline CB instance such that $J(\pi^\star) - J(\hat \pi)  \gtrsim N^\beta$ with a constant probability, where $\beta > -1/2$.
\end{prop}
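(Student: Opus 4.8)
The plan is to construct an explicit contextual bandit instance with two states $s_1, s_2$ and two actions, where $s_1$ is ``important'' (it carries reward signal and has small but nonzero data mass) and $s_2$ is a ``dummy'' state with large data mass that the regularizer can exploit. The key tension to exploit is exactly the one described in the text: on the important state $s_1$ the validity constraint $\sum_a d_w(s_1,a) = \rho(s_1)$ is enforced, in the Lagrangian, only through the dual term $v(s_1)(\E_{a\sim\mu(\cdot|s_1)}[w(s_1,a)] - 1)$, but the empirical version of this term has statistical fluctuation on the order of $1/\sqrt{N_{s_1}}$ where $N_{s_1}$ is the (random) number of samples landing in $s_1$, which is $\asymp \rho(s_1) N$. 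Meanwhile the regularization term $-\alpha \E_{s\sim\rho}\E_{a\sim\mu(\cdot|s)}[f(w(s,a))]$ biases $w(s_1,\cdot)$ away from its optimal value by an amount governed by $\alpha$ (via the first-order optimality condition and strong convexity of $f$), and this bias in turn perturbs $d_{\hat w}(s_1) = \sum_a \hat w(s_1,a)\mu(a|s_1)$ away from $\rho(s_1)$. Following the MAB analysis, suboptimality on the important state scales like $(\alpha + 1/\sqrt{N})/d_{\hat w}(s_1)$, but now $d_{\hat w}(s_1)$ can be forced small: to keep $d_{\hat w}(s_1) = \Omega(1)$ on a state with data mass $\rho(s_1) = \Theta(N^{-c})$ one needs the regularizer to be strong relative to the per-state fluctuation, i.e. $\alpha \gtrsim$ (something growing with $1/\rho(s_1)$), which then blows up the bias term $\alpha$. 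Optimizing this tradeoff will give a rate strictly worse than $N^{-1/2}$.

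Concretely, I would first set up the PRO-CB population and empirical objectives (from Appendix \ref{app:PRO-CB}), write the per-state first-order optimality condition for $\hat w(s,\cdot)$, and use $M_f$-strong convexity of $f$ to show $|\hat w(s_1,a) - w^\star_\alpha(s_1,a)| \lesssim (\text{empirical error on } s_1)/(\alpha \rho(s_1))$ plus a term showing $w^\star_\alpha(s_1,\cdot)$ itself is displaced from $w^\star(s_1,\cdot)$ by $\Theta(\alpha)$ through $|f'| \le B_{f'}$. Second, I would translate these weight displacements into a lower bound on $|d_{\hat w}(s_1) - \rho(s_1)|$ and, more importantly, a two-sided control showing that unless $\alpha$ is large enough, $d_{\hat w}(s_1)$ is, with constant probability, as small as $O(\rho(s_1) + \alpha)$ while the induced policy $\pi_{\hat w}(\cdot|s_1)$ is driven toward the uniform/behavior mixture rather than the reward-greedy action — this is where the reward gap on $s_1$ gets lost. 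Third, I would choose $\rho(s_1) \asymp N^{-c}$ for a suitable $c \in (0,1)$ and assemble the chain: for the learner to do better than $N^\beta$ it would need both $\alpha \lesssim N^\beta$ (bias control) and $\alpha \gtrsim $ (fluctuation on $s_1$)$/\rho(s_1) \asymp N^{-1/2 + c/2}/N^{-c} = N^{-1/2 + 3c/2}$ roughly — and then optimize $c$ against $\beta$ to show no choice of $\alpha$ escapes $\beta > -1/2$. The condition $N \ge n_0(\delta, B_w, B_v, B_f, B_{f'})$ enters to guarantee the relevant concentration events hold with constant probability and that $s_1$ receives $\Theta(\rho(s_1)N)$ samples.

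The main obstacle is the \emph{lower bound} direction on the empirical fluctuation: standard concentration gives an $O(1/\sqrt{N})$ upper bound on $|\hat L_\alpha - L_\alpha|$, but to force failure I need a matching \emph{anti-concentration} statement — that with constant probability the empirical constraint residual on $s_1$, namely $\frac{1}{N}\sum_{i: s_i = s_1}(w(s_1,a_i) - 1)$ suitably normalized, is at least $\Omega(1/\sqrt{N_{s_1}})$ in a direction that the optimizer will exploit to lower $d_{\hat w}(s_1)$ (or to mis-rank the two actions on $s_1$). I would obtain this by a Berry–Esseen / Paley–Zygmund argument on the score-like quantity evaluated at a carefully chosen competitor weight $w \in \cW$, designed so that this competitor beats the near-valid weight in empirical objective value yet has $d_w(s_1)$ bounded away from $\rho(s_1)$; the instance (reward values on $s_1$, the shape of $f$, and the bounds $B_w, B_v$) must be tuned so that the regularizer's bias and this fluctuation cannot be simultaneously dominated. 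Handling the adversarial ``for any $\alpha \ge 0$'' quantifier — so that a single instance family (parameterized to depend on $N$ but not on $\alpha$) defeats every $\alpha$ — is the delicate bookkeeping step, but it follows the template already used for the MAB lower bound in Proposition \ref{prop:MAB-alpha-0-fails} together with the two-regime ($\alpha$ small vs. $\alpha$ large) case split.
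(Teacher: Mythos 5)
Your plan for the small-$\alpha$ regime matches the paper's proof in all essential respects: the paper also uses a two-state instance with an important state of vanishing mass ($\rho(s_1)=N^{-1/4}$), a competitor weight $\tilde w$ that zeroes out $w$ on $s_1$ (hence violates validity there and induces the uniform policy on $s_1$), and a binomial anti-concentration bound showing that with constant probability the empirical signal on $s_1$ (in the paper, the fluctuation of $\hat r(s_1,a^\star)$ of order $N^{-3/8}$, which dominates the $\alpha$-dependent terms when $\alpha\lesssim \widetilde O(N^{-1/2})$) tips the empirical objective toward $\tilde w$, yielding suboptimality $\rho(s_1)\cdot\mathrm{gap}\asymp N^{-1/4}$. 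Your choice of the constraint residual rather than the empirical reward as the noise source is a cosmetic difference, and your Berry--Esseen/Paley--Zygmund route plays the same role as the paper's Lemma~\ref{lemma:anti-concentration-binomial}.

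The genuine gap is the large-$\alpha$ regime. You dispose of it with ``the learner needs $\alpha\lesssim N^\beta$ for bias control,'' but that does not follow: for a fixed instance with a constant reward gap, the regularization bias is only \emph{upper}-bounded by $\alpha B_f$, and in fact the paper's own Step~1 of the small-$\alpha$ case shows that when the gap is $\Theta(1)$ and $N$ is large one gets $w^\star_\alpha=w^\star$ exactly, so a large $\alpha$ need not hurt at all on such an instance. To make large $\alpha$ fail one must choose the instance \emph{after} seeing $\alpha$ --- which the quantifier ``for any $\alpha$ there exists an instance'' permits, and which your stated goal of a single $\alpha$-independent instance family needlessly forbids. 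The paper sets the reward gap equal to $\min\{1,\alpha\}$, then runs a contradiction argument via the KKT condition $w^\star_\alpha(a)=\max\{0,(f')^{-1}((r(a)-v^\star_\alpha)/\alpha)\}$ and $M_f$-strong convexity to show $w^\star_\alpha$ places mass at least $1/2$ on the suboptimal arm, and finally shows the population gap $L_\alpha(w^\star_\alpha,\cdot)-L_\alpha(w^\star,\cdot)\gtrsim\alpha\gg N^{-1/2}$ so the algorithm actually selects $w^\star_\alpha$ with constant probability. None of this machinery is present or implied in your sketch, and without it the case $\alpha\geq N^\beta$ is unproved; your proposal as written only defeats small $\alpha$.
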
 
Proposition \ref{prop:PRO-CB-suboptimal} shows that behavior regularization fails to achieve the optimal $1/\sqrt{N}$ rate regardless of $\alpha$, even under boundedness, realizability, and concentrability assumptions. The main takeaway of our construction in the proof of Proposition \ref{prop:PRO-CB-suboptimal} is that ensuring occupancy validity $\sum_a \hat w(s,a) \mu(a|s) = \Omega(1)$ for nearly all states appears to be critical in achieving the optimal rate. Yet, without introducing a significant bias, behavior regularization is insufficient to induce the state-wise occupancy validity.

\subsection{Conservative offline CB with augmented Lagrangian}
To enforce the occupancy validity constraints, we propose to incorporate augmented Lagrangian in the following form:
\small 
\begin{align}\label{eq:AL-population-objective-CB}
    \max_{w \geq 0} \min_{v}  L^{\text{CB}}_{\text{AL}}(w, v)
    \coloneqq \E_{\mu} \left[ w(s,a)r(s,a) \right] 
    - \E_{\mu}[v(s) (w(s,a) - 1)] - \E_{s \sim \mu} [(\E_{a \sim \mu(\cdot|s)}[w(s,a)] - 1)^2]
\end{align}\normalsize
Notice that when $|\cS| = 1$, \eqref{eq:AL-population-objective-CB} simplifies to the ALM objective \eqref{eq:population-objective-MAB} in the MAB setting. The ALM term can be understood as follows: each element in the ALM sum encourages the validity of occupancy in each state $\sum_a w(s,a) \mu(a|s) \approx 1$, and the elements are weighted according to the true state distribution since validity is more important in states that are actually more likely to be visited. As before, denote by $w^\star$ the optimal solution to \eqref{eq:AL-population-objective-CB}, which is equal to the optimal solution of the original constrained optimization problem. Additionally, we define $v^\star(s) \coloneqq V^\star(s)$ to be equal to the optimal reward at each state $s \in \cS$.

\begin{algorithm}[h]
\caption{Conservative Offline CB with Augmented Lagrangian}
\label{alg:PAL-CB}
\begin{algorithmic}[1]
\State \textbf{Inputs:} Dataset $\cD = \{(s_i, a_i, r_i)\}_{i=1}^N$, function classes $\cW, \cV$
\State Find a solution $\hat w, \hat v$ to the following problem
\begin{align}\label{eq:CB-empirical-objective-AL}
        \max_{w \in \cW} \min_{v \in \cV} \hat L^{\text{CB}}_{\text{AL}}(w, v) \coloneqq \frac{1}{N}\sum_{i=1}^N w(s_i, a_i) (r_i - v(s_i)) + v(s_i) - \Big(\sum_{a \in \cA} w(s_i, a) \mu(a|s_i) - 1\Big)^2
\end{align}   
\State \textbf{Return:} $\hat \pi = \pi_{\hat w}$.
\end{algorithmic}
\end{algorithm}

Solving a sample-based approximation to objective \eqref{eq:AL-population-objective-CB} and using function approximation for $w$ and $v$ gives us conservative offline CB with augmented Lagrangian as given in Algorithm \ref{alg:PAL-CB}. We analyze the suboptimality of Algorithm \ref{alg:PAL-CB} and present the following theorem, showing that the ALM achieves the optimal rate without requiring behavior regularization. The proof of this theorem can be found in Appendix \ref{app:PAL-CB-suboptimality}.
\begin{theorem}[Suboptimality of Algorithm \ref{alg:PAL-CB}]\label{thm:PAL-CB-suboptimality} 
Assume $0 \leq w(s, a)\leq B_w$ for $w \in \cW$ and $v(s) \leq B_v$ for $v \in \cV$.  Moreover, suppose realizability of $w^\star \in \cW$ and $v^\star \in \cV$ and concentrability of $\pi^\star = \pi_{w^\star}$. For any fixed $\delta \geq 0$, policy $\hat \pi$ returned by Algorithm \ref{alg:PAL-CB} achieves the following suboptimality bound with probability of at least $1-\delta$
\begin{align*}
    J(\pi^\star) - J(\hat \pi) \lesssim B_w^2(B_v +1) \sqrt{\frac{\log (|\cW||\cV|/\delta)}{N}}.  
\end{align*}
\end{theorem}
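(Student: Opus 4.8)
The plan is a population-to-empirical comparison that turns the (approximate) optimality of $(\hat w,\hat v)$ into control of two nonnegative quantities and then trades them off across states. Throughout I write $\mu(s)=\rho(s)$ (which holds in CB) and $g_w(s)\coloneqq\sum_a w(s,a)\mu(a|s)=d_w(s)/\rho(s)$, so that $\pi_w(a|s)=w(s,a)\mu(a|s)/g_w(s)$ whenever $g_w(s)>0$.

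First I would record two structural identities. (i) For $w^\star=d^{\pi^\star}/\mu$ one has $g_{w^\star}(s)\equiv 1$ because $d^{\pi^\star}(s)=\rho(s)$, so the quadratic and multiplier terms vanish and $L^{\text{CB}}_{\text{AL}}(w^\star,v)=J(\pi^\star)$ for \emph{every} $v\in\cV$. (ii) For the output $\hat w$, grouping $\sum_a \hat w(s,a)\mu(a|s)(\cdot)=g_{\hat w}(s)\sum_a \pi_{\hat w}(a|s)(\cdot)$ and using $v^\star(s)=V^\star(s)$ yields the exact decomposition
\[ L^{\text{CB}}_{\text{AL}}(\hat w,v^\star)=J(\pi^\star)-\sum_s\rho(s)g_{\hat w}(s)(V^\star(s)-V^{\hat\pi}(s))-\sum_s\rho(s)(g_{\hat w}(s)-1)^2, \]
where both subtracted sums are nonnegative since $V^{\hat\pi}(s)\le V^\star(s)$ pointwise and $g_{\hat w}\ge 0$; one checks this identity also holds at states with $g_{\hat w}(s)=0$, where both sides collapse.

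Next I would establish uniform concentration $\sup_{w\in\cW,v\in\cV}|\hat L^{\text{CB}}_{\text{AL}}(w,v)-L^{\text{CB}}_{\text{AL}}(w,v)|\le\epsilon_1$ with probability $1-\delta$, where $\epsilon_1\lesssim B_w^2(B_v+1)\sqrt{\log(|\cW||\cV|/\delta)/N}$. This is Hoeffding plus a union bound over $\cW\times\cV$ (covering numbers in the continuous case): the empirical objective is an i.i.d.\ average of the bounded summand $w(s_i,a_i)(r_i-v(s_i))+v(s_i)-(g_w(s_i)-1)^2$, with term ranges $O(B_w(1+B_v))$, $O(B_v)$, $O(B_w^2)$; crucially $g_w(s_i)$ is a \emph{known} function of $s_i$, so the quadratic term concentrates like an ordinary sample average, which is exactly the simplification CB enjoys over MDPs. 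Combining with (i) and (ii) through
\[ J(\pi^\star)-\epsilon_1\;\le\;\min_{v\in\cV}\hat L^{\text{CB}}_{\text{AL}}(w^\star,v)\;\le\;\hat L^{\text{CB}}_{\text{AL}}(\hat w,\hat v)\;\le\;\hat L^{\text{CB}}_{\text{AL}}(\hat w,v^\star)\;\le\;L^{\text{CB}}_{\text{AL}}(\hat w,v^\star)+\epsilon_1 \]
— the first inequality uses (i), so that \emph{every} $v$ gives empirical value $\ge J(\pi^\star)-\epsilon_1$ and no minimax theorem is needed; the middle two use that $\hat w$ attains the empirical maximin and $\hat v$ the corresponding inner minimum — and then substituting (ii) and cancelling $J(\pi^\star)$ gives
\[ \sum_s\rho(s)g_{\hat w}(s)(V^\star(s)-V^{\hat\pi}(s))+\sum_s\rho(s)(g_{\hat w}(s)-1)^2\;\le\;2\epsilon_1, \]
hence each sum is $\le 2\epsilon_1$ on its own.

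Finally I would convert this into a bound on $J(\pi^\star)-J(\hat\pi)=\sum_s\rho(s)(V^\star(s)-V^{\hat\pi}(s))$ by splitting states at a constant threshold, say $g_{\hat w}(s)\ge\tfrac12$ versus $<\tfrac12$. On the first set $\rho(s)(V^\star(s)-V^{\hat\pi}(s))\le 2\rho(s)g_{\hat w}(s)(V^\star(s)-V^{\hat\pi}(s))$, so that portion is $\le 4\epsilon_1$; on the second set $V^\star(s)-V^{\hat\pi}(s)\le 1$ while $(g_{\hat w}(s)-1)^2>\tfrac14$, so the $\rho$-mass of that set is $\le 8\epsilon_1$ and that portion is $\le 8\epsilon_1$. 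This yields $J(\pi^\star)-J(\hat\pi)\lesssim\epsilon_1\lesssim B_w^2(B_v+1)\sqrt{\log(|\cW||\cV|/\delta)/N}$, with concentrability entering only through the requirement $B_w\ge\|w^\star\|_\infty=C^\star$ needed for realizability of $w^\star$. I expect this last state-splitting step to be the crux: the naive alternative of bounding $\sum_s\rho(s)|1-g_{\hat w}(s)|$ by Cauchy--Schwarz loses a square root and delivers only an $N^{-1/4}$ rate, so the argument must exploit that the ALM penalty forces states with small normalization to carry only $O(\epsilon_1)$ probability mass and hence contribute negligibly irrespective of their value gap — the CB analogue of the ``$d_{\hat w}=\Omega(1)$'' mechanism from the MAB proof. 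A minor additional item is to carry an optimization-error term if $(\hat w,\hat v)$ only approximately solves the empirical maximin.
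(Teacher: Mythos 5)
Your proposal is correct and follows essentially the same route as the paper's proof: uniform Hoeffding concentration over $\cW\times\cV$, a comparison chain showing $L^{\text{CB}}_{\text{AL}}(w^\star,\cdot)-L^{\text{CB}}_{\text{AL}}(\hat w,v^\star)\le 2\epsilon_1$ (the paper's Lemma~\ref{lemma:objective-statistical-error-CB}), the identification of that gap as the sum of the two nonnegative terms $\sum_s\mu(s,a)\hat w(s,a)(r^\star(s)-r(s,a))$ and $\E_{s\sim\mu}[(\tfrac{d_{\hat w}}{\mu}(s)-1)^2]$ (Lemma~\ref{lemma:lowerbound-dhatw}), and the final state-split at the threshold $\tfrac12$. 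Your packaging is slightly cleaner (a single exact identity at $v^\star$ in place of the paper's five-term telescoping), but the mathematical content, including the key observation that states with small normalization carry only $O(\epsilon_1)$ mass, is the same.
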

 
\section{Markov decision processes}
We now turn our focus to offline RL. In addition to the offline dataset of interactions, we assume access to a dataset $\cD_0 = \{s_i\}_{i=1}^{N_0}$ with i.i.d.~samples from the initial distribution $\rho$, similar to prior works \cite{lee2021optidice, zhan2022offline}. The linear programming formulation of RL \cite{puterman2014markov} solves the following constrained optimization problem:
\begin{align}\label{eq:MDP-LP-program}
    \max_{d \geq 0} \E_{s,a \sim d} [r(s,a)] \qquad 
    \text{s.t.} \quad & d(s) = (1-\gamma) \rho(s) + \gamma \sum_{s', a'} P(s|s', a') d(s', a') \quad \forall s \in \mathcal{S}. 
\end{align}
The constraints are known as the Bellman flow equations and restrict the search to the space of valid occupancy distributions $d^\pi$ that can be induced in the MDP by running a policy $\pi$.
   
\subsection{Conservative offline RL with augmented Lagrangian}
Motivated by the success of ALM in bandits, we propose the following extension to offline RL:   
\begin{align}\label{eq:MDP-population-AL-objective}
        \max_{w \geq 0} \min_{v} L^{\text{MDP}}_{\text{AL}} (w, v) \coloneqq (1-\gamma) \E_{\rho} [v(s)] + \E_{\mu} \left[w(s,a) e_v(s,a)\right] - \E_{d^{\pi_w}} \left[ \left(\frac{d_w(s)}{d^{\pi_w}(s)} - 1\right)^2 \right],
\end{align}
where $e_v(s,a) \coloneqq r(s,a) + \gamma \sum_{s'} P(s'|s,a) v(s') - v(s)$. 
One can check that the first two terms equate to the Lagrange dual of \eqref{eq:MDP-LP-program} and the last term is a generalization of the ALM terms in bandits. Each iterand of the ALM sum encourages the occupancy $d_{w}(s)$ to be close in ratio to the actual occupancy $d^{\pi_{w}}(s)$ in each state and as before, the ALM iterands are weighted according to actual state visitations $d^{\pi_{w}}(s)$. Our particular ALM construction can be intuitively understood as follows: the MIS formulation learns bounded weights $\hat w(s,a) = d_{\hat w}(s,a)/\mu(s,a) \leq B_w$, and the ALM term ensures that $d_{\hat w}(s,a)/d^{\pi_{\hat w}}(s,a) = d_{\hat w}(s)/d^{\pi_{\hat w}}(s) = \Omega(1)$ for most states, which translates to ${d^{\pi_{\hat w}}(s,a)}/{\mu(s,a)} \lesssim B_w$.

The ALM term in \eqref{eq:MDP-population-AL-objective} is difficult {to estimate} as it involves an expectation over the unknown occupancy $d^{\pi_w}$ and computing the ratio $d_w(s)/d^{\pi_w}(s)$. In the following section, we resolve this difficulty by converting the ALM term into a variational form.
\subsection{Estimating the ALM term and CORAL algorithm}
We view the ALM term as the negative $f$-divergence\footnote{Although $d_w$ may not be a valid distribution, the variational form still holds. The case of $f(x) = (x-1)^2$ corresponds to the chi-squared divergence but many of our results hold more generally.} between $d_w$ and $d^{\pi_w}$ with $f(x) \coloneqq (x-1)^2$ and express it in the variational form \cite{nguyen2010estimating}:
\begin{align} \label{eq:f-divergence-variational-form}
    - \E_{d^{\pi_w}} \left[ \left(\frac{d_w(s)}{d^{\pi_w}(s)} - 1\right)^2 \right]
    = - D_f(d_w \| d^{\pi_w}) = \min_{x} \E_{d^{\pi_w}} [f_*(x(s,a))] - \E_{d_w} [x(s,a)].
\end{align}
Here, $f_*$ is the convex conjugate of $f$ and we used the fact that $d_w(s,a)/d^{\pi_w}(s,a) = d_w(s)/d^{\pi_w}(s)$. Notice that $\E_{d^{\pi_w}} [f_*(x(s,a))]$ is the value of $\pi_w$ in the same MDP but with rewards $f_*(x(s,a))$. Define $u$ as the fixed point of the following Bellman equation  
\begin{align}
\label{eq:Bellman_u_and_x}
    u(s,a) \coloneqq f_*(x(s,a)) + \gamma (\mathbb{P}^{\pi_w} u)(s,a).
\end{align}
Since $u(s,a)$ is the state-action value function (Q-function) of $\pi_w$ with rewards $f_*(x(s,a))$, we can rewrite \eqref{eq:f-divergence-variational-form} as   
\begin{align}\label{eq:AL-variational-form-model-based}
    \eqref{eq:f-divergence-variational-form} = \min_u (1-\gamma) \E_{s \sim \rho, a \sim \pi_w} [u(s, a)] - \E_\mu \left[ w(s,a) f_*^{-1}\left(u(s,a) - \gamma  (\mathbb{P}^{\pi_w} u)(s,a)\right)\right].
\end{align}
Equation \eqref{eq:AL-variational-form-model-based} involves expectations over $\rho$ and $\mu$, which can be estimated empirically using interaction dataset $\cD$ and dataset of initial states $\cD_0$, yet, it also includes a term that involves the transition operator $\mathbb{P}^{\pi_w}$. In the rest of this section, we discuss model-free and model-based methods for estimating the term involving the transition operator $\mathbb{P}^{\pi_w}$. We include some details on practical implementations in Appendix \ref{app:practical-implementation}.

\begin{algorithm}[b]
\caption{Conservative Offline RL with Augmented Lagrangian (CORAL) --- Model-based}
\label{alg:PAL-MDP-model-based}
\begin{algorithmic}[1]
\State \textbf{Inputs:} Datasets $\cD$, $\cD_0$, $\cD_m$, function classes $\cW, \cV, \cU, \cP$, $f_*^{-1}(x) = 2 \sqrt{x + 1} -2$.
\State Estimate transitions via maximum likelihood:
$ \hat P = \argmax_{P \in \cP} \sum_{i=1}^{N_m} \ln P(s'_i|s_i, a_i).$
\State Find a solution $\hat w, \hat v, \hat u$ to the following problem 
\begin{align}\label{eq:MDP-empirical-objective-AL-model-based}
    \begin{split}
        & \max_{w \in \cW} \min_{v \in \cV} \min_{u \in \cU} \hat L^{\text{model-based}}_{AL}(w, v)  \coloneqq \frac{(1-\gamma)}{N_0}\sum_{i=1}^{N_0} \Big(v(s_i)  +  \sum_a u(s_i,a) \pi_w(a|s_i)\Big)\\ 
    & \;+ \frac{1}{N} \sum_{i=1}^N w(s_i, a_i) \left[r_i + \gamma v(s_i') - v(s_i) - f_*^{-1}\left(u(s_i, a_i) - \gamma (\hat{\mathbb{P}}^{\pi_w} u)(s_i,a_i)\right)  \right] 
    \end{split}
\end{align}     
\State \textbf{Return:} $\hat \pi = \pi_{\hat w}$.
\end{algorithmic}
\end{algorithm}

\subsubsection{Model-based CORAL}
The model-based ALM population objective is obtained by directly substituting the variational form \eqref{eq:AL-variational-form-model-based} in the original objective \eqref{eq:MDP-population-AL-objective} 
\begin{align}\label{eq:population-objective-model-based-MDP}
    \begin{split}
    & \max_{w \geq 0} \min_{v} \min_{u} L_{\text{AL}}^{\text{model-based}}(w, v, u) \coloneqq (1-\gamma) \E_{s \sim \rho} \left[v(s) + \sum_a u(s,a) \pi_w(a|s) \right] \\
    & \quad  + \E_{s,a \sim \mu} \left[w(s,a) \left(e_v(s,a) - f_*^{-1} \left(u(s,a) - \gamma (\mathbb{P}^{\pi_w} u)(s,a)\right) \right)\right]
    \end{split}
\end{align}
Note that optimal solution $w^\star$ of \eqref{eq:MDP-population-AL-objective} is also the optimal solution to \eqref{eq:population-objective-model-based-MDP} since $d_{w^\star} = d^{\pi_{w^\star}}$ (c.f. Lemma \ref{lemma:optimal-solution-with-AL-unchanged}).

To obtain a sample-based approximation of the above objective, we assume access to a realizable function class $\cP$ that contains the true transitions and an additional dataset on interactions $\cD_m = \{ (s_i, a_i, s_i')\}_{i=1}^{N_m}$, where $s_i, a_i \sim \mu$ and $s_i' \sim P(\cdot \mid s_i, a_i)$. Given $\cD_m$, we obtain a maximum likelihood estimate of transitions and then approximate the expectations using $\cD_0$ and $\cD$. We assume access to the independent dataset $\cD_m$ to simplify the proofs and practical implementations can reuse interaction dataset $\cD$ instead. This leads to Algorithm \ref{alg:PAL-MDP-model-based}, which we call model-based conservative offline RL with augmented Lagrangian (CORAL).

\subsubsection{Model-free CORAL}

As an alternative, we consider developing a model-free that uses a single-sample estimate of $f_*^{-1}\big(u(s,a) - \gamma (P^{\pi_w} u)(s,a)\big)$. Using a single-sample estimate leads to the infamous double sampling problem \citep{baird1995residual}. To circumvent this difficulty, in Appendix \ref{app:model-free-objective} we use the dual embedding trick in \citet{nachum2019dualdice} to derive the following model-free population objective
\begin{align}\label{eq:population-objective-model-free-MDP}
    \begin{split}
       & \max_{w \geq 0} \min_{v} \min_{u} \max_{\zeta < 0} L_{\text{AL}}^{\text{model-free}}(w,v,u,\zeta)= (1-\gamma) \E_{s \sim \rho} \left[v(s) + \sum_a u(s,a) \pi_w(a|s) \right]   \\ & \ \ +
    \E_{(s,a,s') \sim \mu, a' \sim \pi_w(\cdot|s')} [ w(s,a) \left( e_v(s,a) + (u(s,a) - \gamma u(s',a'))\zeta(s,a) +g_*(\zeta(s,a) ) \right) ],
    \end{split}
\end{align}
where $g_*(x) \coloneqq x+2+1/x $. Empirical approximation of objective \eqref{eq:population-objective-model-free-MDP} leads to model-free CORAL presented in Algorithm \ref{alg:PAL-MDP-model-free}.

\begin{algorithm}[h]
\caption{Conservative Offline RL with Augmented Lagrangian (CORAL) --- Model-free}
\label{alg:PAL-MDP-model-free}
\begin{algorithmic}[1]
\State \textbf{Inputs:} Datasets $\cD$, $\cD_0$, function classes $\cW, \cV, \cU, \cZ$, $g_*(x) = x + 2 + \frac{1}{x}$.
\State Find a solution $\hat w, \hat v, \hat u, \hat \zeta$ to $\max_{w \in \cW} \min_{v \in \cV} \min_{u \in \cU} \max_{\zeta \in \cZ} \hat L_{\text{AL}}^{\text{model-free}}(w,v,u,\zeta)$ defined as    
\begin{align}\label{eq:MDP-empirical-objective-AL-model-free}
    \begin{split}
        & \frac{(1-\gamma)}{N_0}\sum_{i=1}^{N_0} v(s_i)  +  \sum_a u(s_i,a) \pi_w(a|s_i) + \frac{1}{N} \sum_{i=1}^N w(s_i, a_i) \Big[r_i + \gamma v(s_i') - v(s_i) \\
        & \quad +\zeta(s_i,a_i)\Big(u(s_i,a_i) - \gamma \sum_{a' \in \cA} u(s'_i, a') \pi_w(a'|s'_i )\Big) + g_*(\zeta(s_i,a_i) )  \Big] 
    \end{split}
\end{align}\normalsize
\State \textbf{Return:} $\hat \pi = \pi_{\hat w}$.
\end{algorithmic}
\end{algorithm}
\subsection{CORAL performance upper bound}
Theorem \ref{thm:CORAL-suboptimality} shows that both variants of CORAL enjoy optimal rates; see Appendix \ref{app:CORAL-suboptimality} for the proof.
\begin{theorem}[CORAL Suboptimality]\label{thm:CORAL-suboptimality} 
Assume $0 \leq w(s,a) \leq B_w$ for $w \in \cW$, $|v(s)| \leq B_v$ for $v \in \cV$, and $|u(s,a)| \leq B_u$. Suppose realizability of $w^\star \in \cW$ and $v^\star(s) = V^\star(s) \in \cV$ and concentrability of $\pi^\star = \pi_{w^\star}$. Let $\tilde x_w (s,a) = \clip (x^\star_w(s,a), -B_x, B_x)$, where $x^\star_w$ is a solution to \eqref{eq:f-divergence-variational-form} and $B_x = (1-\gamma)/4$, and define $u^\star_w$ as the fixed-point solution to \eqref{eq:Bellman_u_and_x} when $x = \tilde x_w$. Assume $u^\star_w \in \cU$ for any $w \in \cW$. Then, $B_u$ satisfies $(1-\gamma)^{-1}(B_x^2/4 + B_x) \leq B_u \leq \frac{1}{2}$. Moreover, for any fixed $\delta \geq 0$, the following statements hold:
\begin{enumerate}[label= (\Roman*)]
    \item Assume $N = N_0 = N_m$ for simplicity. If $P^\star \in \cP$, then $\hat \pi$ returned by Algorithm \ref{alg:PAL-MDP-model-based} achieves 
    \begin{align*}
        J(\pi^\star) - J(\hat \pi) \lesssim \frac{B_v + B_u + (1+B_v)B_w}{(1-\gamma)^3}\sqrt{\frac{B_u\log(|\cP||\cU||\cW||\cV|/\delta) }{N}}.
    \end{align*}
    \item Assume $N = N_0$ for simplicity. Let $\zeta^\star_{w, u} = \argmax_{\zeta < 0} L^{\text{model-free}}_{AL}(w, v, u, \zeta)$ defined in \eqref{eq:population-objective-model-free-MDP}. Assume $\zeta^\star_{w^\star, u} \in \cZ$ for $u \in \cU$ and $B_{\zeta, L} \leq |\zeta(s,a)| \leq B_{\zeta, U}$ for $\zeta \in \cZ$, where $B_{\zeta, L} \in (0, 2/(2+B_x))$ and $B_{\zeta, U} \geq 2/(2-B_x)$.  Let $B_{\zeta} = \max \{ B_{\zeta, U}, B_{\zeta,L}^{-1}\}$. Then, $\hat \pi$ returned by Algorithm \ref{alg:PAL-MDP-model-free} achieves   
    \begin{align*}
        J(\pi^\star) - J(\hat \pi) \lesssim \frac{B_v + B_u + (1+B_v +B_{\zeta}(B_u+1))B_w}{(1-\gamma)^3}\sqrt{\frac{\log(|\cU||\cW||\cV||\cZ|/\delta) }{N}}.
    \end{align*}
\end{enumerate}
\end{theorem}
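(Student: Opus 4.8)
\textbf{Proof proposal.}
The plan is to follow the template established in the bandit analyses (Theorems~\ref{thm:suboptimality-PAL-MAB} and~\ref{thm:PAL-CB-suboptimality}): decompose the suboptimality into (i) an \emph{optimality/value} term controlled by plugging $w^\star$ into the objective, (ii) a \emph{statistical} term from replacing population expectations by empirical ones over $\cD,\cD_0$ (and, in the model-based case, the MLE error of $\hat P$ over $\cD_m$), and (iii) an \emph{occupancy-validity} term that the ALM penalty forces to be small. First I would fix notation: write $J(\pi^\star)-J(\hat\pi) = \big(J(\pi^\star) - L^{\text{MDP}}_{\text{AL}}(w^\star, \cdot)\big) + \big(L^{\text{MDP}}_{\text{AL}}(w^\star,\cdot) - L^{\text{MDP}}_{\text{AL}}(\hat w, \hat v)\big) + \big(L^{\text{MDP}}_{\text{AL}}(\hat w,\hat v) - J(\hat\pi)\big)$, using that $L^{\text{MDP}}_{\text{AL}}(w^\star,v^\star) = J(\pi^\star)$ by Lemma~\ref{lemma:optimal-solution-with-AL-unchanged} and strong duality of the LP \eqref{eq:MDP-LP-program}. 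The middle (saddle-point) term is nonpositive up to uniform-convergence error by the optimality of $(\hat w,\hat v)$ for the empirical problem, and the last term is where the ALM penalty enters: exactly as in the MAB/CB proofs, I would show $J(\hat\pi) - (\text{Lagrange part at }\hat w, \hat v) = (1/d^{\pi_{\hat w}}\text{-type factor}) \cdot (\text{chi-squared term})$, and the penalty $-\E_{d^{\pi_{\hat w}}}[(d_{\hat w}(s)/d^{\pi_{\hat w}}(s)-1)^2]$ being (nearly) maximized at $\approx 0$ forces the ratio to be $\Omega(1)$ in aggregate, giving the desired bound.

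Second, I would handle the variational reformulation carefully, since the algorithm optimizes \eqref{eq:population-objective-model-based-MDP}/\eqref{eq:population-objective-model-free-MDP} rather than \eqref{eq:MDP-population-AL-objective} directly. The key facts to establish are: (a) the variational identity \eqref{eq:f-divergence-variational-form} holds even though $d_w$ need not be a probability distribution (this is noted in the footnote and follows from the standard Fenchel duality argument for $f$-divergences applied pointwise); (b) the inner minimizer $x^\star_w$ can be clipped to $[-B_x, B_x]$ with $B_x = (1-\gamma)/4$ at negligible loss — this requires checking that the unclipped optimizer $x^\star_w = f'(d_w/d^{\pi_w}) = 2(d_w(s)/d^{\pi_w}(s) - 1)$ lies in a bounded range because $d_w(s)/d^{\pi_w}(s) = \hat w(s,a)\mu(s,a)/d^{\pi_w}(s,a)$ is controlled by $B_w$ and concentrability (plus the aggregate-validity bootstrap), and that clipping only shrinks the divergence estimate; (c) translating $x$ to the Q-function $u^\star_w$ via \eqref{eq:Bellman_u_and_x} and bounding $|u^\star_w| \le B_u \le 1/2$ by geometric summation of the clipped rewards $f_*(x) \in [0, B_x^2/4 + B_x]$; and (d) for the model-free case, verifying the dual-embedding identity with $g_*(x) = x + 2 + 1/x$ and that the inner maximizer $\zeta^\star_{w,u}$ lies in $[B_{\zeta,L}, B_{\zeta,U}]$, which is what pins down $B_\zeta$. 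Given these, the objective the algorithm actually solves has the right population optimum and the right sensitivity structure.

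Third, the statistical/concentration step: apply a uniform Bernstein- or Hoeffding-type bound over the (finite or covering-number) classes $\cW,\cV,\cU$ (and $\cP,\cZ$) to control $|\hat L - L|$ by $\poly(B_w,B_v,B_u,B_\zeta,(1-\gamma)^{-1})\sqrt{\log(|\cP||\cU||\cW||\cV|/\delta)/N}$; the $(1-\gamma)^{-3}$ blowup in the statement comes from: one factor $(1-\gamma)^{-1}$ from the $d^\pi$-to-$\mu$ change of measure / horizon in the value decomposition, one from the Bellman-fixed-point sensitivity of $u$ and $v$ to perturbations, and one from the final division by the $\Omega(1-\gamma)$-type lower bound on the aggregate occupancy ratio that the ALM guarantees (the analogue of $d_{\hat w} = \Omega(1)$ in the MAB proof, now $\Omega(1-\gamma)$ because of the $(1-\gamma)$ in the Bellman-flow normalization). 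For the model-based variant, the transition error enters through $\hat{\mathbb{P}}^{\pi_w} u$ and is converted to a TV/KL bound on $\hat P$ via the MLE guarantee, contributing the $\log|\cP|$ term and an extra $\|u\|_\infty \lesssim B_u$ factor — this is the source of the $\sqrt{B_u}$ rather than $\sqrt{1}$ in part~(I).

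The main obstacle I expect is step~(b)–(c): closing the loop between the a~priori boundedness of $x^\star_w$ / $u^\star_w$ (needed to even state the realizability assumption $u^\star_w \in \cU$ and to get the $B_u \le 1/2$ bound) and the aggregate occupancy-validity bound, which itself depends on the ALM penalty being small, which in turn depends on $\hat u$ being a good approximation of $u^\star_{\hat w}$ — a circular dependency. I would break it the same way the MAB proof breaks the circularity for $d_{\hat w}$: first derive an unconditional (possibly loose) bound on the chi-squared term purely from $w \le B_w$ and $\mu$-concentrability of the \emph{data}, use that to get a crude lower bound on $d^{\pi_{\hat w}}(s)/d_{\hat w}(s)$ on the relevant support, then feed it back to sharpen the clipping range and the final constant. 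The bookkeeping of which $\poly(1-\gamma)^{-1}$ factor comes from where, and making sure the model-free $\zeta$-range constants $B_{\zeta,L} \in (0, 2/(2+B_x))$, $B_{\zeta,U} \ge 2/(2-B_x)$ are exactly the ones that make $g_*$'s inner optimization well-posed, will be the other delicate part, but it is routine once the variational identities are nailed down.
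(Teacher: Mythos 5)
Your proposal gets the supporting-lemma structure right: the variational reformulation with clipping, the bounds on $u^\star_w$ and $\zeta^\star_{w,u}$ (these are exactly Lemma \ref{lemma:u-zeta-bounded}), the uniform concentration over $\cW,\cV,\cU,\cZ$ plus the MLE/Hellinger bound for $\hat P$ (Lemmas \ref{lemma:stat-error-model-based-MDP} and \ref{lemma:stat-error-model-free-MDP}), and the idea that the ALM penalty enforces occupancy validity. But the final assembly has a genuine gap. You write the third term of your decomposition as $L^{\text{MDP}}_{\text{AL}}(\hat w,\hat v) - J(\hat\pi)$ and claim it can be handled ``exactly as in the MAB/CB proofs'' via a $1/d^{\pi_{\hat w}}$-type normalization factor. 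This does not carry over: in MAB/CB the induced policy's occupancy is literally $d_{\hat w}$ renormalized, so $J(\hat\pi)$ and the Lagrangian differ by a scalar normalization. In an MDP, $d^{\pi_{\hat w}}$ is the occupancy obtained by actually running $\pi_{\hat w}$ through the transition dynamics and bears no algebraic relation to $d_{\hat w}/\|d_{\hat w}\|_1$. The paper's resolution is the performance difference lemma: $J(\pi^\star)-J(\hat\pi)=\E_{s\sim d^{\hat\pi}}[\sum_a -A^\star(s,a)\hat\pi(a|s)]$, after which the ALM term --- shown pointwise nonnegative and $\gtrsim(1-\gamma)^2$ on the bad set $\cS_s=\{s: d_{\hat w}(s)\le d^{\hat\pi}(s)/2\}$ via the three-case analysis of the clipped variational form --- bounds the $d^{\hat\pi}$-mass of $\cS_s$ by $(1-\gamma)^{-2}\epsilon_{\text{stat}}$, and on $\cS_s^c$ one changes measure $d^{\hat\pi}(s)\hat\pi(a|s)\le 2\hat w(s,a)\mu(s,a)$ and invokes $\E_\mu[-A^\star\hat w]\lesssim\epsilon_{\text{stat}}$. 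That last bound in turn requires evaluating the population objective gap at the fixed $v^\star$ (so that $e_{v^\star}=A^\star$) and splitting it into two individually nonnegative pieces; your decomposition, which carries $\hat v$ into the middle and last terms, loses this interpretation and I do not see how to close it.

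Two smaller points. First, the circularity you anticipate between bounding $x^\star_w$ and establishing occupancy validity does not arise in the paper's construction: $\tilde x_w$ is clipped to $[-B_x,B_x]$ by fiat, $u^\star_w$ is defined as the fixed point with the clipped $x$ (which is what the realizability assumption $u^\star_w\in\cU$ refers to), and the analysis never needs the unclipped optimizer to be bounded --- it only needs the clipped variational expression to be pointwise nonnegative and bounded below by $B_x^2/4$ on $\cS_s$, which is a direct computation. Your proposed bootstrap is solving a problem the construction avoids. Second, your accounting of the $(1-\gamma)^{-3}$ is off: one factor comes from $-A^\star\le(1-\gamma)^{-1}$ and the remaining $(1-\gamma)^{-2}$ from the $B_x^2\asymp(1-\gamma)^2$ lower bound on the penalty over $\cS_s$, not from a Bellman-sensitivity or change-of-measure argument.
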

In Theorem \ref{thm:CORAL-suboptimality}, we make realizability assumptions on $u^\star_w \in \cU$ for $w \in \cW$ and $\zeta^\star_{w^\star, u} \in \cZ$ for $u \in \cU$. Such assumptions are common in the theory of RL with function approximation \cite{munos2008finite,xie2021bellman, jiang2020minimax} and removing them can be difficult or even impossible. For example, Bellman completeness is proved to be necessary for polynomial sample complexity in value-based methods \cite{foster2021offline}. Recently, \citet{zhan2022offline, chen2022offline} propose algorithms that only require optimal solution realizability, however, these algorithms are either computationally intractable or statistically suboptimal.

\subsection{Example: Behavior regularization vs. augmented Lagrangian} \label{sec:example}

    \begin{figure}[b]
    \centering
    \scalebox{1}{
    \begin{tikzpicture}[observed/.style={circle, draw=black, fill=black!10, thick, minimum size=10mm},
    hidden/.style={circle, draw=black, thick, minimum size=5mm},
    squarednode/.style={rectangle, draw=black, fill=black, very thick, minimum size=3mm},]
    \node[hidden] (A) at (0,1.1) {$A$};
    \node[hidden] (C) at (0.8,0) {$C$};
    \node[hidden] (B) at (-0.8,0) {$B$};
    \node[squarednode] (t1) at (-0.8,-1.2) {$ $};
    \node at (-0.8,-1.9) {{+1}};
    \node[squarednode] (t2) at (0.3,-1.2) {$ $};
    \node at (0.3,-1.7) {{+0}};
    \node at (0.3,-2.1) {{+1}};
    \node[squarednode] (t3) at (1.3,-1.2) {$ $};
    \node at (1.3,-1.7) {{+1}};
    \node at (1.3,-2.1) {{+0}};
    \draw[dashed, line width=1pt, citeColor] (-0.1,-2.4+0.9)--(-0.1,-0.8+1.3)--(1.7,-0.8+1.3)--(1.7,-2.4+0.9)--(-0.1,-2.4+0.9);
    \draw[->, thick, >=stealth] (A.south east) -- (C.north)  node[pos=0.4,sloped,above] {\small R};
    \draw[->, thick, >=stealth] (A.south west) -- (B.north)  node[pos=0.4,sloped,above] {\small L};
    \draw[->, thick, >=stealth] (B.south) -- (t1.north);
    \draw[->, thick, >=stealth] (C.south west) -- (t2.north) node[pos=0.4,sloped,above] {\small L};
    \draw[->, thick, >=stealth] (C.south east) -- (t3.north) node[pos=0.4,sloped,above] {\small R};
    \end{tikzpicture}}  
    \caption{The agent always starts from state $A$. Action $L$ leads to state $B$, from where the agent collects a +1 reward. Action $R$ leads to state $C$, from where only one action leads to a +1 reward. Nature decides which MDP is presented to the learner. Data distribution is $\mu(A, L) = 1/4, \mu(A, R) = 1/2, \mu(B) = 1/4, \mu(C) = 0$, which satisfies $\pi_{w_1}$-concentrability.}
    \label{fig:MDP_example}
\end{figure}
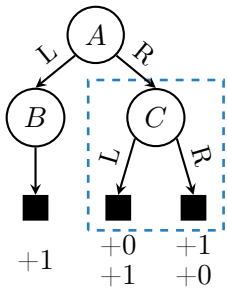

We examine a hard offline RL instance (Figure \ref{fig:MDP_example}) developed by \citet{zhan2022offline} and we compare the performance of unregularized MIS, PRO-RL (MIS with behavior regularization), and MIS with ALM. Assume $\cV = \{v^\star\}$ and $\cW = \{w_1, w_2\}$, where $w_1$ always selects $L$ from $A$ and $w_2$ always selects $R$ from $A$. One can check $w_1(A,L) = 2, w_1(A,R) = 0$ and $w_2(A,L) = 0, w_2(A,R) = 1$.

\paragraph{Unregularized algorithm.} As \citet{zhan2022offline} state, the unregularized algorithm, which solves the objective \eqref{eq:MDP-population-AL-objective} with $\alpha = 0$, fails to distinguish between $w_1$ and $w_2$ even with infinite data since the objectives at $w_1$ and $w_2$ are exactly equal.

\paragraph{Behavior regularization.} Consider an instantiation of PRO-RL with regularizer $-\alpha \E_\mu [w^2(s,a)]$. Since in this example $\E_\mu[w_1^2(s,a)] > \E_\mu[w_2^2(s,a)]$, PRO-RL picks the wrong weight $w_2$ and thus suffers from a constant suboptimality. However, note that this does not contradict theoretical guarantees of PRO-RL as these guarantees additionally assume concentrability of $\pi^\star_\alpha$. Intuitively, behavior regularization causes $\pi^\star_\alpha$ to be more stochastic and thus requiring $\mu(s,a) > 0$ for more states and actions. Here, since $\mu$ covers both $(A,L)$ and $(A,R)$, behavior regularization causes $\pi^\star_\alpha(R|A) > 0$ and thus $d^{\pi^\star_\alpha}(C) > 0$. Therefore to handle the MDP in Figure \ref{fig:MDP_example}, PRO-RL additionally requires $\mu(C) > 0$ to satisfy $\pi^\star_\alpha$-concentrability.

\paragraph{ALM.} In this example, ALM successfully picks the optimal $w_1$, as it avoids a mismatch between the actual and learned occupancies. This is because in \eqref{eq:MDP-population-AL-objective} the ALM term is zero at $w_1$ due to realizability whereas at $w_2$, it has a lower bound $\mathbb{E}_{s \sim d^{\pi_{2}}} \left({d_{w_2}(C)}/{d^{\pi_2}(C)} - 1\right)^2 \geq d^{\pi_{2}}(C) > 0$.

\section{Discussion}
We present a set of practical and statistically optimal algorithms for offline MAB, CB, and RL, under general function approximation and single-policy concentrability. Our algorithms are designed within the MIS formulation combined with a novel application of the augmented Lagrangian method. Importantly, our optimality guarantees hold under MIS combined with ALM alone, without any additional form of conservatism such as via regularization or uncertainty quantification. Furthermore, we investigate the role of regularizers in MIS algorithms. Although the empirical benefits of such regularizers are often attributed to conservatism, our analysis suggests that conservatism stems from the MIS formulation while the role of regularizers is to ensure the validity of learned occupancy. 

Apart from RL, our work on bandits is related to the selection problem \cite{hong2021review}, though the majority of works in this area are in the online setting. Additionally, in our analysis, we solve a subset of stochastic optimization problems with possibly large or infinite stochastic constraints involving conditional expectations. To our knowledge, finite-sample properties of such stochastic optimization problems have not been addressed \cite{shapiro2021lectures} and our work may open up avenues for further research in this area. Other interesting future directions include conducting empirical evaluations of ALM, examining the possibility of removing strong realizability assumptions, and investigating practical and optimal offline RL algorithms whose guarantees hold under milder variants of single-policy concentrability.

\subsubsection*{Acknowledgments}
The authors are grateful to Amy Zhang and Yuandong Tian. 
This work occurred under Meta AI-BAIR Commons at the University of California, Berkeley. PR is supported by the Open Philanthropy Foundation.  Part of the work was done when HZ was a visiting researcher at Meta.

\bibliographystyle{plainnat}
\bibliography{references}

\newpage 

\appendix
\section{Related work}\label{app:related-work}

We covered a number of related works in the introduction and throughout the paper. In this section, we review more related literature.

\subsection{Concentrability assumptions}
 The lack of sufficient coverage in the offline dataset is one of the main challenges in offline RL. In RL theory, dataset coverage has often been characterized by concentrability definitions \cite{munos2007performance, scherrer2014approximate}. Earlier works on offline RL impose all-policy concentrability on the density ratio for all states and actions \cite{scherrer2014approximate,liu2019neural,chen2019information,jiang2019value,wang2019neural,liao2020batch,zhang2020variational}, with some requiring this ratio to be bounded for every time step \cite{szepesvari2005finite, munos2007performance, antos2008learning,farahmand2010error,antos2007fitted}. The works \citet{xie2020batch, feng2019kernel, uehara2020minimax} use slightly milder definitions, such as requiring a bound on a weighted norm of density ratios. The work \citet{xie2020batch} makes even stronger assumptions such as lower bounded conditionals $\mu(a|s)$ and exploratoriness of state marginals to circumvent the Bellman completeness requirement. 

To handle partial coverage, recent algorithms are analyzed based on variants of single-policy concentrability \cite{rashidinejad2021bridging}. Some variants such as the ones presented in works \citet{uehara2021pessimistic} (model-based) or \citet{xie2021bellman,song2022hybrid} (model-free) are more suited to function approximation as they avoid bounded ratio assumption for all states and actions. However, existing offline RL algorithms based on these weaker definitions are either computationally intractable \cite{uehara2021pessimistic, xie2021bellman} or their statistical rate is suboptimal \cite{pmlr-v162-cheng22b}. The most related works are 
\citet{zhan2022offline}, which requires two-policy concentrability, and \citet{chen2022offline}, which requires single-policy concentrability on density ratio for all states and actions.

\subsection{Conservative offline RL}

A series of recent works on offline RL have focused on addressing partial coverage of offline dataset through conservative algorithm design. Broadly speaking, these methods can be broken down into several categories. The first category of methods applies policy constraints, enforcing the learned policy to be close to the behavior policy. Such constraints are applied either explicitly \cite{fujimoto2019off,ghasemipour2020emaq,jaques2019way, siegel2020keep, kumar2019stabilizing,wu2019behavior, fujimoto2021minimalist}, implicitly \cite{peng2019advantage, nair2020accelerating}, or through importance sampling \cite{liu2019off, swaminathan2015batch, nachum2019algaedice, lee2021optidice, zhang2020gradientdice, Zhang2020GenDICE:}. Another category involves learning conservative values such as conservative Q-learning \cite{kumar2020conservative}, fitted Q-iteration with conservative update \cite{liu2020provably}, subtracting penalties \cite{rezaeifar2022offline}, and critic regularization \cite{kostrikov2021offline}. The last category includes model-based methods such as learning pessimistic models \cite{kidambi2020morel, guo2022model}, adversarial model learning \cite{rigter2022rambo}, forming penalties using model ensembles \cite{yu2020mopo}, or incorporating a combination of model and values \cite{yu2021combo}.

On the theoretical side, as discussed in the introduction, the majority of works design pessimistic offline RL algorithms that rely on some form of uncertainty quantification \cite{yin2021towards, uehara2021representation, zhang2022corruption, yan2022efficacy, yin2022near, kumar2021should, shi2022distributionally, wang2022gap}. One exception is the work of \citet{zanette2021provable} that uses value-function perturbation with actor-critic in linear function approximation setting. Other examples include the recent theoretical works on MIS \cite{zhan2022offline, chen2022offline} and adversarially trained actor-critic \cite{pmlr-v162-cheng22b}.

Most related to our work are methods that focus on provable conservative offline RL under general function approximation and partial coverage. \citet{uehara2021pessimistic} propose a pessimistic model-based algorithm that under a generalization of single-policy concentrability to bounded TV distance ratio, enjoys a $1/\sqrt{N}$ rate but is computationally intractable. The work of \citet{xie2021bellman} presents a pessimistic model-free algorithm under a variant of single-policy concentrability framework that requires a bounded ratio of average Bellman error and Bellman completeness. While the original version of the algorithm achieves the optimal $1/\sqrt{N}$ rate, it is computationally intractable. A practical version of the algorithm is presented and has a suboptimal $1/N^{1/5}$ guarantee. Another related work by \citet{chen2022offline} studies MIS combined with value function approximation under $\pi^\star$-concentrability and proves a $1/\sqrt{\text{gap}(Q^\star)N}$ rate, yet the guarantee degrades with $Q^\star$ gap and the algorithm is computationally intractable. \citet{pmlr-v162-cheng22b} propose an adversarially trained actor-critic method that enjoys provable $1/N^{1/3}$ rate under the single-policy concentrability definition of \citet{xie2021bellman} and Bellman completeness and performs well in offline RL benchmarks when combined with deep neural networks.

\section{Proofs for multi-armed bandits}

We start in Appendix \ref{app:regularized-primal-dual} by characterizing the bias caused by adding the behavior regularization in \eqref{eq:population-objective-MAB}. In Appendix \ref{app:MAB-alpha-0-fails}, we prove Proposition \ref{prop:MAB-alpha-0-fails} that demonstrates the failure of unregularized MIS for solving offline MABs, even when the optimal solutions are realizable and the optimal policy is covered in the offline data. Appendix \ref{app:suboptimality-PRO-MAB} is devoted to the proof of Theorem \ref{thm:suboptimality-MAB}, which gives a tight performance upper bound of the PRO-MAB algorithm. Finally in Appendix \ref{app:MAB-constraint-sufficient}, we prove Proposition \ref{prop:MAB-constraint-sufficient}, showing that constraint satisfaction is sufficient for the success of unregularized MIS.

\subsection{Solutions to the primal-dual regularized objective}\label{app:regularized-primal-dual}

In the following lemma, we characterize the optimal solution $(w^\star_\alpha, v^\star_\alpha)$ to the behavior-regularized population objective \eqref{eq:population-objective-MAB} as well as the suboptimality of the policy induced by $w^\star_\alpha$. 

\begin{lemma}[Regularized primal-dual solutions, MAB]\label{lemma:regularized-unregularized-closeness-MAB} Let $f$ be differentiable, strictly convex, nonnegative, and bounded by $B_f$. Denote $r^\star \coloneqq \max_{a \in \cA} r(a)$. Then, the following statements hold:
\begin{enumerate}[label=(\Roman*)]
    \item\label{claim:wstar0-equal-wstar} $w^\star_0 = w^\star$, where $w^\star$ is the importance weight corresponding to an optimal policy;
    \item\label{claim:vstaralpha-bound} $v^\star_\alpha = r^\star - c \alpha$, where $0 \leq c \leq f'(C^\star)$;
    \item\label{claim:suboptimality-pistar-alpha} policy $\pi^\star_\alpha \coloneqq \pi_{w^\star_\alpha}$ satisfies $J(\pi^\star) - J(\pi^\star_\alpha) \leq \alpha B_f$. 
\end{enumerate}
\end{lemma}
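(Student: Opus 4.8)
The plan is to pass to the primal form of \eqref{eq:population-objective-MAB} and then read off statements (I) and (III) directly, invoking the KKT conditions only for (II), where the dual variable is needed. By strong duality, \eqref{eq:population-objective-MAB} is equivalent to the concave program $\max_{d \in \Delta(\cA)} \sum_a d(a) r(a) - \alpha \E_{a \sim \mu}[f(d(a)/\mu(a))]$ under the identification $d = \mu w$: the inner $\min_v$ of the bilinear term is $-\infty$ unless $\sum_a d(a) = 1$, and on that set the $v$-term vanishes. Write $d^\star_\alpha \coloneqq d_{w^\star_\alpha}$ for a maximizer (unique when $\alpha > 0$ by strict convexity of $f$), and fix an optimal arm $a^\star \in \argmax_a r(a)$, $r^\star = r(a^\star) = J(\pi^\star)$, taking $w^\star$ to be the point mass on $a^\star$ so that $C^\star \ge 1/\mu(a^\star) \ge 1$. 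Since $d_{w^\star_\alpha} = 1 > 0$, we have $\pi^\star_\alpha(a) = d^\star_\alpha(a)$ and hence $J(\pi^\star_\alpha) = \sum_a d^\star_\alpha(a) r(a)$.

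For (I): at $\alpha = 0$ the objective is linear over the simplex, so any maximizer $d^\star_0$ is supported on $\argmax_a r(a)$; thus $\pi_{w^\star_0} = d^\star_0$ is an optimal policy, i.e.\ $w^\star_0 = w^\star$ induces an optimal policy (and correspondingly the dual optimum is $v^\star_0 = r^\star$). For (III): the vertex $d^\star$ of $\Delta(\cA)$ supported on $a^\star$ is feasible with $\sum_a d^\star(a) r(a) = r^\star$ and $\E_{a\sim\mu}[f(d^\star(a)/\mu(a))] \le B_f$ (since $f \le B_f$ on the range $[0,C^\star]$ of the weights in play and $\mu$ is a distribution); optimality of $d^\star_\alpha$ then yields $\sum_a d^\star_\alpha(a) r(a) - \alpha \E_{a\sim\mu}[f(d^\star_\alpha(a)/\mu(a))] \ge r^\star - \alpha B_f$, and since $f \ge 0$ the subtracted term on the left is nonnegative, so $\sum_a d^\star_\alpha(a) r(a) \ge r^\star - \alpha B_f$, which is exactly $J(\pi^\star) - J(\pi^\star_\alpha) \le \alpha B_f$.

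Part (II) is where the real work lies. Returning to the Lagrangian, stationarity with complementary slackness for the constraint $d \ge 0$ reads $r(a) - v^\star_\alpha - \alpha f'(w^\star_\alpha(a)) \le 0$ for all $a$, with equality whenever $w^\star_\alpha(a) > 0$. I would first show that every optimal arm lies in the support of $d^\star_\alpha$: if $w^\star_\alpha(a^\star) = 0$, combining the inequality at $a^\star$ with the equality at a support arm and using strict monotonicity of $f'$ (from strict convexity of $f$) forces the support arm's weight to be $\le 0$, a contradiction (this uses $\alpha > 0$; $\alpha = 0$ is covered by (I)). Hence $v^\star_\alpha = r^\star - \alpha f'(w^\star_\alpha(a^\star))$, so $c = f'(w^\star_\alpha(a^\star))$. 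Comparing stationarity across support arms and using that $r(a^\star)$ is maximal then gives $w^\star_\alpha(a^\star) = \max_a w^\star_\alpha(a)$, whence $w^\star_\alpha(a^\star) \ge \sum_a \mu(a) w^\star_\alpha(a) = d_{w^\star_\alpha} = 1$; and $w^\star_\alpha(a^\star) = d^\star_\alpha(a^\star)/\mu(a^\star) \le 1/\mu(a^\star) \le C^\star$. Since $f'$ is non-decreasing, $f'(1) \le c \le f'(C^\star)$, and with the standard divergence-generator normalization $f(1) = 0$ (so $1$ minimizes $f$ and $f'(1) = 0$) this gives $0 \le c \le f'(C^\star)$, the interval being non-degenerate because $C^\star \ge 1$.

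I expect the main obstacle to be making the KKT step in (II) fully rigorous --- first reducing via the unconstrained $\min_v$ to a concave maximization over the simplex (where a Slater point is immediate) and then handling complementary slackness carefully --- along with two smaller points: the sign $c \ge 0$ genuinely relies on $f$ being a divergence generator (minimum at $1$) rather than on mere nonnegativity, and ties among optimal arms require some minor bookkeeping over the set of optimal arms (taking $w^\star$ to be a point mass on one of them and summing the weight bound over all of them).
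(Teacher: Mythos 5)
Your proposal is correct and follows essentially the same route as the paper: part (I) via strong duality / linearity, part (III) by comparing the regularized objective at $w^\star_\alpha$ and $w^\star$ and dropping the nonnegative $f$-term, and part (II) via the KKT stationarity conditions, the identity $v^\star_\alpha = r^\star - \alpha f'(w^\star_\alpha(a^\star))$, and the constraint-based bound $w^\star_\alpha(a^\star) \le 1/\mu(a^\star) \le C^\star$. Your part (II) is in fact slightly more careful than the paper's at the two places it is terse --- the paper justifies $w^\star_\alpha(a^\star) > 0$ only by an appeal to ``realizability'' where you give a contradiction argument from monotonicity of $f'$, and it attributes $c \ge 0$ to ``convexity'' where you correctly note this needs the divergence-generator normalization $f'(1)=0$ (together with $w^\star_\alpha(a^\star)\ge 1$) rather than mere nonnegativity of $f$.
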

\begin{proof}
Part \ref{claim:wstar0-equal-wstar} follows directly by strong duality. For part \ref{claim:vstaralpha-bound}, notice that KKT conditions imply the following relation between $w^\star_\alpha(a)$ and $v^\star_\alpha$:
\begin{align*}
    w_\alpha^\star(a)=\max\left\{0,(f')^{-1}\left(\frac{r(a)-v_\alpha^\star}{\alpha}\right)\right\}.
\end{align*}
Since $f$ is strictly convex, $f'$ is a monotonically increasing function. Therefore, the optimal arm $a^\star$ has the largest $w^\star_\alpha(a)$, which should be nonzero due to realizability of $w^\star_\alpha$. In other words,
\begin{align}\label{eq:vstar-alpha-equation-MAB}
    w_\alpha^\star(a^\star)=(f')^{-1}\left(\frac{r^\star -v_\alpha^\star(s)}{\alpha}\right)\ \Rightarrow
    v_\alpha^\star=r^\star -\alpha f'(w_\alpha^\star(a^\star)).
\end{align}
We now proceed to find a bound on $f'(w_\alpha^\star(a^\star))$. 
Since $w^\star_\alpha$ is the optimal solution to \eqref{eq:population-objective-MAB}, it must satisfy the constraint
\begin{align*}
    \sum_{a \in \cA} \mu(a) w_\alpha^\star(a) = 1 \Rightarrow w_\alpha^\star(a^\star) \leq \frac{1}{\mu(a^\star)} \leq C^\star,
\end{align*}
where the last inequality stems from the single-policy concentrability assumption of $\pi^*$. Since $f'$ is an increasing function, we have $f'(w^\star_\alpha(a^\star)) \leq f'(C^\star)$, which combined with \eqref{eq:vstar-alpha-equation-MAB} yields the following lower bound on $v^\star_\alpha$
\begin{align*}
    v^\star_\alpha \geq r^\star - \alpha f'(C^\star).
\end{align*}
Moreover, the convexity of $f$ immediately gives the upper bound on $v^\star_\alpha \leq r^\star$, which completes the proof of part \ref{claim:vstaralpha-bound}.

We now prove the last part. Since $w_\alpha^\star$ is the optimal solution to the regularized population objective \eqref{eq:population-objective-MAB}, by strong duality, we have
    \begin{align*}
        \E_{a \sim d_{w_\alpha^\star}} [r(a)] - \alpha \E_{a \sim \mu}[f(w^\star_\alpha(a))] \geq  \E_{a \sim d^\star} [r(a)] - \alpha \E_{a \sim \mu}[f(w^\star(a))]
    \end{align*}
    where $d_{w_\alpha^\star}(a) = \mu(a)w_\alpha^\star(a)$ by definition \eqref{eq:definitions_dw_sa} and we used the fact that $\E_{a \sim \mu}[w^\star_\alpha(a)] - 1 = \E_{a \sim \mu}[w^\star(a)] - 1 = 0$. Therefore, the suboptimality of $\pi^\star_\alpha$ can be bounded as follows
    \begin{align*}
        J(\pi^\star) - J(\pi^\star_\alpha) & = \E_{a \sim d^\star} [r(a)] 
        - \E_{a \sim d_{w_\alpha^\star}} [r(a)]\\
        & \leq \alpha \E_{a \sim \mu}[f(w^\star(a))] - \alpha \E_{a \sim \mu}[f(w^\star_\alpha(a))]\\
        & \leq \alpha \E_{a \sim \mu}[f(w^\star(a))] \leq \alpha f(C^\star) \leq \alpha B_f,
    \end{align*}
    where in the second to last inequality we used the non-negativity of $f$ and in the last equality, we used the boundedness of $f$.
\end{proof}

\subsection{Proof of Proposition \ref{prop:MAB-alpha-0-fails}}\label{app:MAB-alpha-0-fails}
Consider a 2-armed bandit instance with the following reward distributions, data distribution, and function classes.
\begin{itemize}
    \item \textit{Reward distributions:} The first arm is optimal with deterministic reward and the second arm has a Bernoulli distribution:
    \begin{align*}
        r(1) & = \frac{1}{2} \quad \text{w.p. } 1, \qquad r(2) \sim \text{Bernoulli}(1/3).
    \end{align*}
    \item \textit{Data distribution:} We consider a scenario where most data are concentrated on the optimal arm:
    \begin{align*}
        \mu(1) = 1 - \frac{2}{N}, \qquad \mu(2) = \frac{2}{N}.
    \end{align*}
    Here, the single-policy concentrability coefficient is $C^\star = 1/\mu(1)$ and is finite for $N > 2$. Let $N(a)$ denote the number of samples on arm $a$. To obtain upper and lower bounds on $N(a)$, we resort to the following lemma, which is a direct consequence of the Chernoff bound for binomial variables.

    \begin{lemma}[Chernoff bounds, binomial]\label{lemma:N-lower-bound-MAB} $ $
    \begin{enumerate}[label=(\Roman*)]
        \item With probability at least $1-\exp( - N \mu(a) \delta_u^2 /(2+\delta_u))$, one has $N(a) \leq (1+\delta_u) N\mu(a)$ for any $\delta_u > 0$;
        \item With probability at least $1-\exp( - N \mu(a) \delta_l^2 /2)$, one has $(1-\delta_l) N \mu(a) \leq N(a)$ for any $0 < \delta_l < 1$.
    \end{enumerate}
    \end{lemma}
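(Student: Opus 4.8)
The plan is to recognize that, since the samples $a_1,\dots,a_N$ are drawn i.i.d.\ from $\mu$, the count $N(a) = \sum_{i=1}^N \indc{a_i = a}$ is a sum of $N$ independent Bernoulli$(\mu(a))$ indicators and hence $N(a) \sim \mathrm{Binomial}(N,\mu(a))$ with mean $\lambda := N\mu(a)$. Both statements are then instances of the multiplicative (relative-error) Chernoff bound, so I would prove each tail by the standard exponential Markov argument. Writing $p := \mu(a)$, the moment generating function factorizes across the independent indicators as
\begin{align*}
    \mathbb{E}\big[e^{t N(a)}\big] = \big(1 - p + p e^t\big)^N = \big(1 + p(e^t-1)\big)^N \le \exp\!\big(\lambda (e^t - 1)\big),
\end{align*}
for every $t \in \mathbb{R}$, where the last step uses $1 + x \le e^x$. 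This single MGF estimate drives both parts.

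For part (I), I fix $t > 0$ and apply Markov's inequality to $e^{tN(a)}$, giving
\begin{align*}
    \mathbb{P}\big(N(a) \ge (1+\delta_u)\lambda\big) \le e^{-t(1+\delta_u)\lambda}\,\mathbb{E}\big[e^{tN(a)}\big] \le \exp\!\big(\lambda(e^t - 1) - t(1+\delta_u)\lambda\big).
\end{align*}
Minimizing the exponent over $t$ gives the optimal choice $t = \ln(1+\delta_u)$ and the classical form $\big(e^{\delta_u}/(1+\delta_u)^{1+\delta_u}\big)^{\lambda}$. For part (II), I run the identical argument on $e^{-tN(a)}$ with $t > 0$, so that $\mathbb{P}(N(a) \le (1-\delta_l)\lambda) \le \exp(\lambda(e^{-t}-1) + t(1-\delta_l)\lambda)$; optimizing yields $t = -\ln(1-\delta_l)$ (legitimate since $0 < \delta_l < 1$) and the form $\big(e^{-\delta_l}/(1-\delta_l)^{1-\delta_l}\big)^{\lambda}$.

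The remaining work, and the only real obstacle, is converting these exact exponents into the clean quadratic rates stated in the lemma. Concretely, part (I) follows from the elementary inequality $\delta - (1+\delta)\ln(1+\delta) \le -\delta^2/(2+\delta)$ valid for all $\delta > 0$, and part (II) from $-\delta - (1-\delta)\ln(1-\delta) \le -\delta^2/2$ valid for $0 < \delta < 1$; these account precisely for the denominators $2+\delta_u$ and $2$ appearing in the stated exponents. I would verify each by defining the single-variable gap function $\psi(\delta)$ equal to the difference of the two sides, checking $\psi(0)=0$, and differentiating to show $\psi' \le 0$ on the relevant range (for part (II) one can alternatively compare power series, since $-\delta-(1-\delta)\ln(1-\delta) = -\delta^2/2 - \delta^3/6 - \cdots$ is termwise dominated by $-\delta^2/2$). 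These are routine calculus facts, so once they are in hand the lemma follows immediately; I expect the sign analysis of $\psi'$ in part (I) to be the most delicate piece, but it is standard.
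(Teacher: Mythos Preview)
Your proposal is correct and is the standard textbook derivation of the multiplicative Chernoff bound. The paper does not give its own proof of this lemma; it simply states the result and remarks that it is ``a direct consequence of the Chernoff bound for binomial variables,'' so your write-up supplies exactly the omitted details (MGF bound, Markov, optimization in $t$, and the calculus inequalities yielding the clean exponents $\delta_u^2/(2+\delta_u)$ and $\delta_l^2/2$).
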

    We condition on the event that the number of samples on the second arm is between 1 and 5 which occurs with probability larger than $1 - \exp\left(-2 \cdot \frac{0.9^2}{2}\right) - \exp\left(-2\cdot \frac{1.95^2}{1.95+2}\right)  \geq  0.4$ due to Lemma \ref{lemma:N-lower-bound-MAB} when setting $\delta_l = 0.9$ and $\delta_u = 1.95$:
    \begin{align*}
        (1-0.9)\cdot N\mu(2) \leq N(2) \leq (1+1.95) \cdot N \mu(2) \quad \Rightarrow \quad 1 \leq N(2) \leq 5.
    \end{align*}
    \item \textit{Function classes:} Assume that $\cW = \{w_1 = (C^\star, 0), w_2 = (0, B_w)\}$ and $\cV = \{1/2\}$. By Lemma \ref{lemma:regularized-unregularized-closeness-MAB}, we have $v^\star_0 = r^\star = 1/2$. Therefore, the problem is realizable as $v^\star_0 \in \cV$ and $w^\star_0 = w^\star = (C^\star, 0) \in \cW$. Furthermore, notice that for the second candidate $w_2 = (0,B_w) \in \cW$, the normalization factor is small for a constant $B_w$ as $d_{w_2 } = \sum_a w_2(a) \mu(a) = 2B_w/N$. 
\end{itemize}
Consider the case where all $N(a)$ samples on the second arm observe a reward of 1, which happens with a probability of at least $\frac{1}{3^5}$ as we conditioned on the event that $1 \leq N(2) \leq 5$. We now compute $\hat w$ by solving the empirical objective \eqref{eq:MAB-empirical-objective} with $\alpha = 0$. Note that since $|\cV| = 1$, it suffices to compute $\hat w = \arg \max_{w \in \cW} \hat L^{\text{MAB}}_0(w, v = 1/2)$. We have
\begin{align*}
    \hat L_0^{\text{MAB}}(w_1, 1/2) & = \frac{N(1)}{N} \left[ C^\star \cdot \frac{1}{2} - \frac{1}{2} \left(C^\star - 1\right) \right] + \frac{N(2)}{2N} = \frac{1}{2}\\
    \hat L_0^{\text{MAB}}(w_2, 1/2) & = \frac{N(1)}{2N} + \frac{N(2)}{N} \left[ B_w - \frac{1}{2}(B_w - 1)\right] = \frac{1}{2} + \frac{N(2) B_w}{2N}
\end{align*}
Since we conditioned on the event with $N(2) \geq 1$, solving the optimization problem $\max_{w \in \cW} \hat L^{\text{MAB}}_0(w, v = 1/2)$ finds $\hat w = (0,B_w)$, leading to a policy that picks the second arm with probability one. Therefore, with constant probability of $0.4 \times 1/3^5 > 0.001 $, we have 
\begin{align*}
    J(\pi^\star) - J(\hat \pi) = \frac{1}{2} - \frac{1}{3} = \frac{1}{6}.
\end{align*}

\subsection{Proof of Theorem \ref{thm:suboptimality-MAB}}\label{app:suboptimality-PRO-MAB}

Before embarking on the main proof, we present two lemmas related to the primal-dual regularized approach. The first lemma shows the closeness of population objective \eqref{eq:population-objective-MAB} to its empirical approximation used in Algorithm \ref{alg:PAL-MAB}, which is a direct consequence of Hoeffding's inequality. We also show that closeness of objectives results in the closeness of $w^\star_\alpha$ and $\hat w$, which are respectively the optimums to \eqref{eq:population-objective-MAB} and \eqref{eq:MAB-empirical-objective}. The proof of this lemma is deferred to the end of this subsection.

\begin{lemma}[Empirical and population closeness, PRO-MAB]\label{lemma:objective-statistical-error}
Fix $\delta > 0$ and define
\begin{align}\label{eq:statistical-error-MAB}
    \epsilon_{\text{stat}, \alpha}^{\text{MAB}} \coloneqq ((B_w + 1)(B_v + 1) + \alpha B_f) \sqrt{\frac{\log |\cV||\cW|/\delta}{N}}.
\end{align}
For any $w \in \cW$ and $v \in \cV$, the following bounds hold with probability at least $1-\delta$
\begin{enumerate}[label=(\Roman*)]
    \item\label{claim:stat-error-bound-MAB} $|L_\alpha^{\text{MAB}}(w, v) - \hat L_\alpha^{\text{MAB}}(w, v)| \leq \epsilon_{\text{stat}, \alpha}^{\text{MAB}}$;
    \item\label{claim:what-wstar-close-MAB} $L_\alpha^{\text{MAB}}(w^\star_\alpha, v) -  L_\alpha^{\text{MAB}}(\hat w, v) \leq 2 \epsilon_{\text{stat}, \alpha}^{\text{MAB}}$.
\end{enumerate}
\end{lemma}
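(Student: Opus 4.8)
The plan is to prove \ref{claim:stat-error-bound-MAB} by a single concentration-plus-union-bound argument, and then deduce \ref{claim:what-wstar-close-MAB} from it using the $\max$-$\min$ optimality of $(\hat w,\hat v)$ together with the feasibility of the population optimum $w^\star_\alpha$. For \ref{claim:stat-error-bound-MAB}: fix $w\in\cW$ and $v\in\cV$ and set $g_{w,v}(a,r)\coloneqq w(a)r-v(w(a)-1)-\alpha f(w(a))$, so that $\hat L^{\text{MAB}}_\alpha(w,v)=\tfrac1N\sum_{i=1}^N g_{w,v}(a_i,r_i)$ and, since $\E[r_i\mid a_i]=r(a_i)$, its expectation equals $L^{\text{MAB}}_\alpha(w,v)$. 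The summands are i.i.d.\ and, because $r_i\in[0,1]$, $0\le w(a)\le B_w$, $|v|\le B_v$, and $0\le f(w(a))\le B_f$, they lie in a fixed interval of width $\lesssim (B_w+1)(B_v+1)+\alpha B_f$. Hoeffding's inequality bounds the deviation for this fixed pair, and a union bound over a covering of $\cW\times\cV$ (of size $|\cW||\cV|$) makes it uniform, yielding \ref{claim:stat-error-bound-MAB} with $\epsilon^{\text{MAB}}_{\text{stat},\alpha}$ as stated (the universal Hoeffding constant being absorbed).

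For \ref{claim:what-wstar-close-MAB}, I would condition on the event of \ref{claim:stat-error-bound-MAB} and exploit that $w^\star_\alpha$ satisfies the occupancy-validity constraint $\E_{a\sim\mu}[w^\star_\alpha(a)]=1$ (Appendix \ref{app:regularized-primal-dual}): then the multiplier term $v(\E_\mu[w^\star_\alpha]-1)$ vanishes, so $L^{\text{MAB}}_\alpha(w^\star_\alpha,\cdot)$ is constant in $v$ and in particular $L^{\text{MAB}}_\alpha(w^\star_\alpha,v)=\min_{v'\in\cV}L^{\text{MAB}}_\alpha(w^\star_\alpha,v')$. Combining this with the trivial $L^{\text{MAB}}_\alpha(\hat w,v)\ge\min_{v'\in\cV}L^{\text{MAB}}_\alpha(\hat w,v')$, then passing from $L^{\text{MAB}}_\alpha$ to $\hat L^{\text{MAB}}_\alpha$ via \ref{claim:stat-error-bound-MAB} (applied for the fixed weights $w^\star_\alpha\in\cW$ and $\hat w\in\cW$, uniformly over $v'\in\cV$, at a cost of $\epsilon^{\text{MAB}}_{\text{stat},\alpha}$ each), and finally using the definition of $(\hat w,\hat v)$ in the form $\min_{v'\in\cV}\hat L^{\text{MAB}}_\alpha(\hat w,v')=\max_{w\in\cW}\min_{v'\in\cV}\hat L^{\text{MAB}}_\alpha(w,v')\ge\min_{v'\in\cV}\hat L^{\text{MAB}}_\alpha(w^\star_\alpha,v')$, one obtains
\begin{align*}
L^{\text{MAB}}_\alpha(w^\star_\alpha,v)-L^{\text{MAB}}_\alpha(\hat w,v)
&\le \min_{v'\in\cV}L^{\text{MAB}}_\alpha(w^\star_\alpha,v')-\min_{v'\in\cV}L^{\text{MAB}}_\alpha(\hat w,v')\\
&\le \min_{v'\in\cV}\hat L^{\text{MAB}}_\alpha(w^\star_\alpha,v')-\min_{v'\in\cV}\hat L^{\text{MAB}}_\alpha(\hat w,v')+2\epsilon^{\text{MAB}}_{\text{stat},\alpha}\le 2\epsilon^{\text{MAB}}_{\text{stat},\alpha},
\end{align*}
which is \ref{claim:what-wstar-close-MAB}.

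The routine part is \ref{claim:stat-error-bound-MAB}. The subtlety — and the step to get right — is \ref{claim:what-wstar-close-MAB}: because $\hat w$ is only the \emph{outer} maximizer of a $\max$-$\min$ problem, there is no control on $\hat L^{\text{MAB}}_\alpha(\hat w,v)-\hat L^{\text{MAB}}_\alpha(w^\star_\alpha,v)$ at a single fixed $v$, so the comparison must be routed through $\min_{v'\in\cV}(\cdot)$; this routing costs nothing on the $w^\star_\alpha$ side precisely because the constraint $\E_\mu[w^\star_\alpha]=1$ makes $L^{\text{MAB}}_\alpha(w^\star_\alpha,\cdot)$ flat, so its value at $v$ already coincides with its minimum over $\cV$. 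The only external inputs are this feasibility of $w^\star_\alpha$ and its realizability $w^\star_\alpha\in\cW$, both in force from Theorem \ref{thm:suboptimality-MAB} and its surrounding setup.
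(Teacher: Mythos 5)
Your proposal is correct and follows essentially the same route as the paper: part \ref{claim:stat-error-bound-MAB} via Hoeffding plus a union bound with the same envelope $(B_w+1)(B_v+1)+\alpha B_f$, and part \ref{claim:what-wstar-close-MAB} by routing the comparison through $\min_{v'\in\cV}\hat L^{\text{MAB}}_\alpha(\cdot,v')$, using feasibility of $w^\star_\alpha$ to make $L^{\text{MAB}}_\alpha(w^\star_\alpha,\cdot)$ flat and the max-min optimality of $(\hat w,\hat v)$. The paper writes this second step as a five-term telescoping sum ($T_1=0$, $T_3,T_4\le 0$, $T_2,T_5\le\epsilon^{\text{MAB}}_{\text{stat},\alpha}$), which is exactly your chain of inequalities in expanded form.
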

The second lemma finds a lower bound on the occupancy normalization factor $d_{\hat w} = \sum_a \hat w(a) \mu(a)$ enforced by the behavior regularization.
\begin{lemma}[Occupancy validity enforced by behavior regularization]\label{lemma:log-convergence-hat-d-MAB} Let $f$ be an $M_f$-strongly-convex function and fix $\delta > 0$. Then, with probability at least $1-\delta$, one has 
\begin{align*}
    d_{\hat w} \geq 1 - \sqrt{\frac{4 \epsilon_{\text{stat}, \alpha}^{\text{MAB}}}{\alpha M_f}},
\end{align*}
where $\epsilon_{\text{stat}, \alpha}^{\text{MAB}}$ is defined in \eqref{eq:statistical-error-MAB}.
\end{lemma}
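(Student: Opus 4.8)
The plan is to show that the empirical maximizer $\hat w$ is close to the population maximizer $w^\star_\alpha$ in the $\mu$-weighted norm $\|u\|_\mu^2 \coloneqq \E_{a\sim\mu}[u(a)^2]$, and then to transfer this into control of the normalization factor $d_{\hat w} = \E_{a\sim\mu}[\hat w(a)]$ using that $d_{w^\star_\alpha} = \E_{a\sim\mu}[w^\star_\alpha(a)] = 1$, since $w^\star_\alpha$ satisfies the occupancy-validity constraint of \eqref{eq:MAB-constrained-optimization-with-d}.

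First I would invoke part~\ref{claim:what-wstar-close-MAB} of Lemma~\ref{lemma:objective-statistical-error} with the dual variable taken to be $v^\star_\alpha$ (realizable in $\cV$ by assumption), which gives, on an event of probability at least $1-\delta$,
\begin{align*}
    L^{\text{MAB}}_\alpha(w^\star_\alpha, v^\star_\alpha) - L^{\text{MAB}}_\alpha(\hat w, v^\star_\alpha) \le 2\,\epsilon_{\text{stat},\alpha}^{\text{MAB}}.
\end{align*}
Second, I would exploit strong concavity of $w \mapsto L^{\text{MAB}}_\alpha(w, v^\star_\alpha)$. In \eqref{eq:population-objective-MAB}, every term except $-\alpha\E_{a\sim\mu}[f(w(a))]$ is affine in $w$, and since $f$ is $M_f$-strongly convex, $\E_{a\sim\mu}[f(w(a))] = \sum_a \mu(a) f(w(a))$ is $M_f$-strongly convex with respect to $\|\cdot\|_\mu$ (its Hessian is at least $M_f\,\mathrm{diag}(\mu) = \tfrac{M_f}{2}\nabla^2\|\cdot\|_\mu^2$). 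Hence $w \mapsto L^{\text{MAB}}_\alpha(w, v^\star_\alpha)$ is $\alpha M_f$-strongly concave in $\|\cdot\|_\mu$ on the cone $\{w \ge 0\}$. By strong duality of \eqref{eq:population-objective-MAB}, $(w^\star_\alpha, v^\star_\alpha)$ is a saddle point, so $w^\star_\alpha$ maximizes $L^{\text{MAB}}_\alpha(\cdot, v^\star_\alpha)$ over \emph{all} $w \ge 0$, not merely over $\cW$ (this is also visible from the KKT characterization of $w^\star_\alpha$ in Lemma~\ref{lemma:regularized-unregularized-closeness-MAB}); since $\hat w \in \cW \subseteq (\mathbb{R}^+)^{|\cA|}$ is feasible, combining the first-order optimality inequality $\langle \nabla_w L^{\text{MAB}}_\alpha(w^\star_\alpha, v^\star_\alpha),\, \hat w - w^\star_\alpha\rangle \le 0$ with strong concavity yields
\begin{align*}
    L^{\text{MAB}}_\alpha(w^\star_\alpha, v^\star_\alpha) - L^{\text{MAB}}_\alpha(\hat w, v^\star_\alpha) \ge \frac{\alpha M_f}{2}\,\|\hat w - w^\star_\alpha\|_\mu^2 .
\end{align*}
Chaining the two displays gives $\|\hat w - w^\star_\alpha\|_\mu^2 \le 4\epsilon_{\text{stat},\alpha}^{\text{MAB}}/(\alpha M_f)$.

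Finally I would conclude by Cauchy--Schwarz (equivalently Jensen): using $\E_{a\sim\mu}[w^\star_\alpha(a)] = 1$,
\begin{align*}
    |d_{\hat w} - 1| = \big|\E_{a\sim\mu}[\hat w(a) - w^\star_\alpha(a)]\big| \le \sqrt{\E_{a\sim\mu}\big[(\hat w(a) - w^\star_\alpha(a))^2\big]} = \|\hat w - w^\star_\alpha\|_\mu \le \sqrt{\frac{4\epsilon_{\text{stat},\alpha}^{\text{MAB}}}{\alpha M_f}},
\end{align*}
which gives the stated bound $d_{\hat w} \ge 1 - \sqrt{4\epsilon_{\text{stat},\alpha}^{\text{MAB}}/(\alpha M_f)}$.

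The step I expect to require the most care is the strong-concavity argument: one must measure strong convexity of the behavior regularizer in the correct ($\mu$-weighted) geometry so that the constant matches the statement, and one must verify that $w^\star_\alpha$ maximizes $L^{\text{MAB}}_\alpha(\cdot,v^\star_\alpha)$ over the entire nonnegative cone (so the descent-type inequality is valid at the feasible point $\hat w$ rather than only at points of $\cW$). If one prefers not to use differentiability of $f$, the first-order optimality step can be replaced by a midpoint strong-concavity inequality, at the cost of a slightly worse absolute constant; the clean constant in the statement uses differentiability of $f$, which is available under the ambient assumptions.
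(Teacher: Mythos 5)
Your proof is correct and follows essentially the same route as the paper's: invoke the empirical–population closeness bound $L^{\text{MAB}}_\alpha(w^\star_\alpha,v^\star_\alpha)-L^{\text{MAB}}_\alpha(\hat w,v^\star_\alpha)\le 2\epsilon^{\text{MAB}}_{\text{stat},\alpha}$, combine it with $\alpha M_f$-strong concavity of $L^{\text{MAB}}_\alpha(\cdot,v^\star_\alpha)$ in the $\|\cdot\|_{2,\mu}$ geometry to bound $\|\hat w-w^\star_\alpha\|_{2,\mu}$, and finish with Cauchy--Schwarz against $\E_{a\sim\mu}[w^\star_\alpha(a)]=1$. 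Your write-up is in fact slightly more careful than the paper's (you verify the first-order optimality of $w^\star_\alpha$ over the full nonnegative cone, which the paper takes for granted), and all constants match.
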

For the rest of this proof, we condition on the high probability events of Lemmas \ref{lemma:objective-statistical-error} and \ref{lemma:log-convergence-hat-d-MAB}. Define 
\begin{align}
    \epsilon_{\hat w, r} \coloneqq \sum_a w^\star_\alpha(a) \mu(a) r(a) - \hat w(a) \mu(a) r(a).
\end{align}
By part \ref{claim:what-wstar-close-MAB} of Lemma \ref{lemma:objective-statistical-error}, we have $L_\alpha^{\text{MAB}} (v^\star_\alpha, w^\star_\alpha) - L_\alpha^{\text{MAB}} (v^\star_\alpha, \hat w) \leq 2  \epsilon_{\text{stat}, \alpha}^{\text{MAB}}$. Therefore, 
\begin{align}\label{eq:converted-obj-diff-bound-MAB}
    \epsilon_{\hat w, r} - \alpha \E_{\mu} [f(w^\star_\alpha (a)) - f( \hat w (a))] + v^\star_\alpha  (d_{\hat w} - 1)  \leq 2 \epsilon_{\text{stat}, \alpha}^{\text{MAB}}.
\end{align}
Recall from Lemma \ref{lemma:regularized-unregularized-closeness-MAB} that we have $v^\star_\alpha = r^\star - \alpha c$, where $c \leq f'(C^\star)$. Thus, combined with \eqref{eq:converted-obj-diff-bound-MAB}, we write 
\begin{align}\notag 
    \epsilon_{\hat w, r} + r^\star  (d_{\hat w} - 1) & \leq 2  \epsilon_{\text{stat}, \alpha}^{\text{MAB}} + \alpha \E_{\mu} [f(w^\star_\alpha (a)) - f( \hat w (a))] + \alpha c (d_{\hat w} -1)\\\label{eq:bound-on-wdifference-constraint-violation-MAB}
    & \leq 2  \epsilon_{\text{stat}, \alpha}^{\text{MAB}} + \alpha(2  B_f + \alpha f'(C^\star) B_w),
\end{align}
where in the second line we used the bounds $|f(x)| \leq B_f$ and $d_{\hat w} \leq B_w$. Note that setting $\alpha = 16 \epsilon_{\text{stat}, 1}^{\text{MAB}}/ M_f$, Lemma \ref{lemma:log-convergence-hat-d-MAB} asserts that $d_{\hat w} \geq 1/2$. Since $d_{\hat w} \geq 1/2$, the learned policy is written as $\hat \pi = \hat w(a) \mu(a)/ d_{\hat w}$. With simple algebraic manipulations, we find the following expression for the suboptimality of $\hat \pi$ with respect to $\pi^\star_\alpha$: 
\begin{align*}
    J(\pi^\star_\alpha) - J(\hat\pi)
    & = \sum_a w^\star_\alpha(a) \mu(a) r(a)- \frac{1}{d_{\hat w}} \hat w(a) \mu(a)r(a)\\
    & = \sum_a w^\star_\alpha(a) \mu(a) r(a) - \hat w(a) \mu(a) r(a) + \sum_a \left(1- \frac{1}{d_{\hat w}}\right) \hat w(a) \mu(a) r(a)\\
    & = \epsilon_{\hat w, r} + \left(d_{\hat w}- 1\right) \sum_a \frac{1}{d_{\hat w}} \hat w(a) \mu(a) r(a)\\
    &  = \epsilon_{\hat w, r} + \left(d_{\hat w}- 1\right) J(\hat \pi) \\
    &  = \epsilon_{\hat w, r} + \left(d_{\hat w}- 1\right) J(\pi^\star_\alpha) - \left(d_{\hat w}- 1\right) \left[ J(\pi^\star_\alpha) - J(\hat \pi) \right].
\end{align*}
Let $\epsilon_{\text{reg}} = J(\pi^\star) - J(\pi^\star_\alpha) =  r^\star - J(\pi^\star_\alpha)$ denote the suboptimality suffered due to behavior regularization. Suboptimality $J(\pi^\star_\alpha) - J(\hat\pi)$ can be expressed as
\begin{align*}
    J(\pi^\star_\alpha) - J(\hat \pi) & = \frac{1}{d_{\hat w}} \left(\epsilon_{\hat w, r} + \left(d_{\hat w}- 1\right) J(\pi^\star_\alpha) \right)\\
    & = \frac{1}{\hat d} \left(\epsilon_{\hat w, r} + \left(d_{\hat w}- 1\right) (r^\star - \epsilon_{\text{reg}}) \right)\\
    & \leq \frac{1}{d_{\hat w}} \left(\epsilon_{\hat w, r} + \left(d_{\hat w}- 1\right) r^\star\right) - \frac{1}{d_{\hat w}} \left(d_{\hat w}- 1\right) \epsilon_{\text{reg}}.
\end{align*}
We use the above inequality to bound the suboptimality with respect to the optimal policy:
\begin{align*}
    J(\pi^\star) - J(\hat \pi) & = J(\pi^\star) - J(\pi^\star_\alpha) + J(\pi^\star_\alpha) - J(\hat \pi)\\
    & = \epsilon_{\text{reg}} + J(\pi^\star_\alpha) - J(\hat \pi)\\
    & \leq \epsilon_{\text{reg}} + \frac{1}{d_{\hat w}} \left(\epsilon_{\hat w, r} + \left(d_{\hat w}- 1\right) r^\star\right) - \frac{1}{d_{\hat w}} \left(d_{\hat w}- 1\right) \epsilon_{\text{reg}}\\
    & \leq \frac{1}{d_{\hat w}} \left(\epsilon_{\hat w, r} + \left(d_{\hat w}- 1\right) r^\star\right) + \frac{1}{d_{\hat w}} \epsilon_{\text{reg}}.
\end{align*}
Recall that we have $1/d_{\hat w} \leq 2$ and that $\epsilon_{\text{reg}}$ is bounded by $\alpha B_f$ by Lemma \ref{lemma:regularized-unregularized-closeness-MAB}. Therefore,
\begin{align*}
    J(\pi^\star) - J(\hat \pi) & \leq \frac{1}{d_{\hat w}} \left(\epsilon_{\hat w, r} + \left(d_{\hat w}- 1\right) r^\star\right) + \frac{1}{d_{\hat w}} \epsilon_{\text{reg}}\\
    & \leq 2 \left(\epsilon_{\hat w, r} + \left(d_{\hat w}- 1\right) r^\star\right) + 2 \alpha B_f\\
    & \leq 4 \epsilon_{\text{stat}, \alpha}^{\text{MAB}} + \alpha (4B_f + 2f'(C^\star) B_w)) + 2 \alpha B_f\\
    & \lesssim \alpha (B_f + f'(C^\star) B_w).
\end{align*}
where the penultimate inequality relies on the bound derived in \eqref{eq:bound-on-wdifference-constraint-violation-MAB}.

\begin{proof}[Proof of Lemma \ref{lemma:objective-statistical-error}]
$\hat L_\alpha^{\text{MAB}}(w, v)$ is an empirical average over independent and bounded random variables, where the bound on individual variables is computed as 
    \begin{align*}
        |w(a_i) r_i - \alpha f(w(a_i)) - v(w(a_i) -1)| & \leq B_w + \alpha B_f + B_v (B_w +1)\\
        & \leq (B_w + 1)(B_v + 1) + \alpha B_f.
    \end{align*}
    It is easy to see that $\E_{\cD} [\hat L_\alpha^{\text{MAB}}(w, v)] = L_\alpha^{\text{MAB}}(w,v)$, where the expectation is taken with respect to the randomness in dataset $\cD$. Part \ref{claim:stat-error-bound-MAB} of this lemma is proved by applying Hoeffding's inequality along with a union bound on $w$ and $v$.

    The proof of part \ref{claim:what-wstar-close-MAB} is similar to Lemma 7 of \citet{zhan2022offline} and relies on decomposing the objective difference and using the fact that $(\hat w, \hat v)$ correspond to the saddle points of $L_\alpha^{\text{MAB}}$ and $\hat L_\alpha^{\text{MAB}}$. For any $w \in \cW$, define 
    \begin{align}
        \hat v_w = \arg \min_{v \in \cV} \hat L_\alpha^{\text{MAB}}(w, v)
    \end{align}
    We write
    \begin{align*}
    L_\alpha^{\text{MAB}}(w^\star_\alpha, v) -  L_\alpha^{\text{MAB}}(\hat w, v) & = \underbrace{L_\alpha^{\text{MAB}}(w^\star_\alpha, v) - L_\alpha^{\text{MAB}}(w^\star_\alpha, \hat v_{w^\star_\alpha})}_{\coloneqq T_1} + \underbrace{L_\alpha^{\text{MAB}}(w^\star_\alpha, \hat v_{w^\star_\alpha}) - \hat L_\alpha^{\text{MAB}}(w^\star_\alpha, \hat v_{w^\star_\alpha})}_{\coloneqq T_2}\\
    & + \underbrace{\hat L_\alpha^{\text{MAB}}(w^\star_\alpha, \hat v_{w^\star_\alpha}) - \hat L_\alpha^{\text{MAB}}(\hat w, \hat v)}_{\coloneqq T_3} + \underbrace{\hat L_\alpha^{\text{MAB}}(\hat w, \hat v) - \hat L_\alpha^{\text{MAB}}(\hat w, v)}_{\coloneqq T_4}\\
    & + \underbrace{\hat L_\alpha^{\text{MAB}}(\hat w, v) - L_\alpha^{\text{MAB}}(\hat w, v)}_{\coloneqq T_5},
\end{align*}
Each term is bounded as follows:
\begin{itemize}
    \item $T_1 = 0$ because $w^\star_\alpha$ satisfies the constraint $\sum_a w^\star_\alpha(a) \mu(a) = 1$ and for any $v_1, v_2$ we have $L_\alpha^{\text{MAB}}(w^\star_\alpha, v_1) = L_\alpha^{\text{MAB}}(w^\star_\alpha, v_2)$.
    \item $T_2 \leq \epsilon_{\text{stat}}$ due to Lemma \ref{lemma:objective-statistical-error}.
    \item $T_3 \leq 0$ because $\hat w = \arg \max_{w \in \cW} \hat L_{\alpha}(\hat v_w, w)$.
    \item $T_4 \leq 0$ because $\hat v = \arg \min_{v \in \cV} \hat L_\alpha^{\text{MAB}}(v, \hat w)$.
    \item $T_5 \leq \epsilon_{\text{stat}}$ due to Lemma \ref{lemma:objective-statistical-error}.
\end{itemize}
Summing up the bounds on each term yields the desired bound.
\end{proof}

\begin{proof}[Proof of Lemma \ref{lemma:log-convergence-hat-d-MAB}]
    This lemma is a direct consequence of Lemma 8 in \citet{zhan2022offline}. For completeness, we present a simplified proof for the multi-armed bandit setting.

    First, observe that since $f$ is $M_f$-strongly-convex, the function $L_\alpha^{\text{MAB}}(v^\star_\alpha, w)$ is $\alpha M_f$-strongly-concave with respect to $w$ and norm $\|\cdot \|_{2, \mu}$. Furthermore, since $w^\star_\alpha = \arg \max_w L_\alpha^{\text{MAB}} (v^\star, w)$, we have 
    \begin{align*}
        \|\hat w - w^\star_\alpha\|_{2, \mu} \leq \sqrt{\frac{2(L_\alpha^{\text{MAB}}(w^\star_\alpha,v^\star_\alpha) - L_\alpha^{\text{MAB}} (\hat w,v^\star_\alpha))}{\alpha M_f}}.
    \end{align*}
    The above bound along with the bound on $L_\alpha^{\text{MAB}}(w^\star_\alpha,v^\star_\alpha) - L_\alpha^{\text{MAB}} (\hat w,v^\star_\alpha) \leq 2 \epsilon_{\text{stat}, \alpha}^{\text{MAB}}$ showed in Lemma \ref{lemma:regularized-unregularized-closeness-MAB}, give the following bound on $|d_{\hat w} - 1|$
    \begin{align*}
        \left|d_{\hat w} - 1\right| = \left|\sum_a \hat w(a) \mu(a) - \sum_a w^\star_\alpha(a) \mu(a) \right| \leq \|\hat w - w^\star_\alpha\|_{1, \mu} \leq  \|\hat w - w^\star_\alpha\|_{2, \mu} \leq \sqrt{\frac{4 \epsilon_{\text{stat}, \alpha}^{\text{MAB}}}{\alpha M_f}},
    \end{align*}
    which completes the proof.
\end{proof}

\subsection{Proof of Proposition \ref{prop:MAB-constraint-sufficient}}\label{app:MAB-constraint-sufficient}
Consider the difference between population objective with $\alpha = 0$ at $w^\star_0 = w^\star$ and $\hat w$, which is bounded by Lemma \ref{lemma:objective-statistical-error}:
\begin{align}\label{eq:objective-stat-error-step1}
    L(w^\star, v^\star) - L(\hat w, v^\star) & = \E_{a \sim \mu} [r(a) (w^\star(a) - \hat w(a))] - v^\star \E_{a \sim \mu} [w^\star(a) - \hat w(a)] \lesssim \epsilon_{\text{stat}, \alpha}^{\text{MAB}}.
\end{align}
We have $\E_{a \sim \mu}[w^\star(a)] = 1$ due to realizability and $\E_{a \sim \mu} [\hat w(a)] = 1$ is our assumption. Thus the second term in \eqref{eq:objective-stat-error-step1} is zero. Moreover, note that $\hat \pi(a) = \hat w(a) \mu(a)/\E_{a \sim \mu}[w(a)] = \hat w(a)$. Substituting the expression for $\epsilon_{\text{stat}, \alpha}^{\text{MAB}}$ from \eqref{eq:statistical-error-MAB} with $\alpha =0$, we obtain 
\begin{align*}
    J(\pi^\star) - J(\hat \pi) & = \E_{a \sim \mu} [r(a) (w^\star(a) - \hat w(a))] \lesssim B_w(B_v + 1) \sqrt{\frac{\log |\cV||\cW|/\delta}{N}},
\end{align*}
where we used the fact that $B_w \asymp B_w + 1$ since $B_w \geq 1$ due to realizability of $w^\star$.

\section{Proofs for contextual bandits}

This section of the appendix is organized as follows. In Appendix \ref{app:PRO-CB}, we present details of the PRO-CB algorithm. Appendix \ref{app:PRO-CB-suboptimal} is devoted the proof of Proposition \ref{prop:MAB-alpha-0-fails}, which shows that the PRO-CB algorithm fails to achieve statistically optimal rate of $1/\sqrt{N}$. The proof of suboptimality upper bound for the conservative offline CB algorithm with ALM is presented in Theorem \ref{thm:PAL-CB-suboptimality}.

\subsection{Primal-dual regularized offline contextual bandits (PRO-CB)}\label{app:PRO-CB}

Define importance weights $w(s,a) = d(s,a)/\mu(s,a)$ to denote the ratio of occupancy and data distribution. The primal-dual regularized approach \cite{zhan2022offline} solves the following population objective\small 
\begin{align}\label{eq:population-objective-CB}
    & \max_{w \geq 0} \min_{v}  L_\alpha^{\text{CB}}(w, v) \coloneqq \E_{s,a \sim \mu} \left[ w(s,a)r(s,a) \right]  - \E_{s, a \sim \mu}[v(s) (w(s,a) - 1)] - \alpha \E_{s, a \sim \mu} \left[ f\left(w(s,a)\right) \right],
\end{align}\normalsize
The above optimization problem satisfies strong duality. We define $w^\star_\alpha, v^\star_\alpha$ to respectively denote the optimal solutions to the primal and dual variables. Approximating $w, v$ to belong to function classes $\cW, \cV$ and solving the empirical version of objective \eqref{eq:population-objective-CB} leads to the PRO-CB given in Algorithm \ref{alg:PRO-CB}.

\begin{algorithm}[h]
\caption{Primal-dual Regularized Offline Contextual Bandits (PRO-CB)}
\label{alg:PRO-CB}
\begin{algorithmic}[1]
\State \textbf{Inputs:} Dataset $\cD = \{(s_i, a_i, r_i)\}_{i=1}^N$, function classes $\cW, \cV$, function $f(\cdot)$, parameter $\alpha$
\State Find a solution $\hat w, \hat v$ to the following problem
\begin{align}
    \begin{split}\label{eq:PRO-CB-empirical-objective}
        \max_{w \in \cW} \min_{v \in \cV} \hat L_\alpha^{\text{CB}}(w, v) \coloneqq \frac{1}{N}\sum_{i=1}^N w(s_i, a_i) r_i - \alpha f(w(s_i, a_i)) - v(s_i) (w(s_i, a_i) - 1). 
    \end{split}
\end{align}
\State \textbf{Return:} $\hat \pi = \pi_{\hat w}$. 
\end{algorithmic}
\end{algorithm}

\subsection{Proof of Proposition \ref{prop:PRO-CB-suboptimal}}\label{app:PRO-CB-suboptimal}

 We separate the proof into two cases: $\alpha \geq N^{\beta}$ for $\beta > -1/2$ and $\alpha \leq \widetilde{O}( N^{-1/2})$. When $\alpha$ is large, we show that the large bias caused by regularization results in suboptimality of $\alpha$ even in MABs. When $\alpha$ is small, we construct a two-state CB instance (as the single-state case is indeed successful due to Theorem \ref{thm:suboptimality-MAB}), showing that such small $\alpha$ does not sufficiently enforce occupancy validity in states with a relatively small but still significant state distribution $\rho(s)$.

\subsubsection{Proof for large $\alpha$}
\label{sec:prop3-large-alpha}
If there exists $-\frac{1}{2}<\beta$ such that $\alpha\ge N^\beta$, then we consider a simple single-state two-arm contextual bandit (equivalently multi-armed bandit) instance:
\begin{itemize}
    \item \emph{Reward distribution}: Both arms have deterministic rewards and the suboptimal arm has a value gap of $\alpha$:
    \begin{align*}
        r(1) = 1 \quad \text{w.p. }1,\quad r(2) = \max\{0,1-\alpha\} \quad \text{w.p. }1.
    \end{align*}
    
    \item \emph{Data distribution}: We construct the data distribution such that both arms have constant probability density, which implies a constant concentrability ratio $C^\star$. Here we assume $M_f<100$ for convenience, but if $M_f$ is larger we can use the same construction with an even larger constant as the denominator. 
    \begin{align*}
        \mu(1)=\frac{M_f}{100},\quad\mu(2)=1-\frac{M_f}{100}.
    \end{align*}
    
    \item \emph{Function classes}: We assume both $\cW$ and $\cV$ contain only the optimal regularized solutions $(w^\star_\alpha, v^\star_\alpha)$ and the optimal unregularized solutions $(w^\star, v^\star)$, which satisfy the realizability requirements of PRO-CB:
    \begin{align*}
        \cW=\{w_\alpha^\star,w^\star\},\quad\cV=\{v_\alpha^\star,v^\star\}.
    \end{align*}
\end{itemize}
Our argument is broken down in two steps. In the first step, we show that the suboptimality of the optimal regularized policy, which is the policy induced by the regularized optimal weights $\pi^\star_\alpha \coloneqq \pi_{w^\star_\alpha}$, is at least of order $\min \{1, \alpha\}$. Then, in the second step, we prove that $w^\star_\alpha$ is chosen with a constant probability.

\paragraph{Step 1: Suboptimality of $\pi_\alpha^\star$.}
In the particular offline bandit instance above, we show the following lower bound on suboptimality of $\pi^\star_\alpha$
\begin{align}
    J(\pi^\star)-J(\pi_\alpha^\star)=\pi_\alpha^\star(2)\cdot(r(1)-r(2))
    =\mu(2) w_\alpha^\star(2) \cdot \min\{1,\alpha\} = \Omega( \min \{1, \alpha\}).\label{eq:tmp-suboptimality}
\end{align}
To establish \eqref{eq:tmp-suboptimality}, we show that $ w_\alpha^\star(2)> c$ for a fixed constant $c=\frac{1}{2}$. We prove this by contradiction. Suppose
\begin{align}
    w_\alpha^\star(2)\le c \label{eq:assump-prop3}.
\end{align}
By KKT conditions we have 
\begin{align*}
    w_\alpha^\star(2) & = \max \left\{0, (f')^{-1}\left(\frac{r(2)-v_\alpha^\star}{\alpha}\right)\right\} \ge (f')^{-1}\left(\frac{r(2)-v_\alpha^\star}{\alpha}\right).
\end{align*}
Therefore, using the fact that $f'$ is strictly increasing since $f$ is strictly convex, we lower bound $v^\star_\alpha$ according to
\begin{align*}
    v_\alpha^\star\ge r(2)-\alpha f'(w_\alpha^\star(2))\ge
    r(1)-(r(1)-r(2))-\alpha f'\left(c\right).
\end{align*}
Combining the above bound on $v_\alpha^\star$ with the KKT condition on $w_\alpha^\star(1)$, we then obtain
\begin{align}
    w_\alpha^\star(1)=&(f')^{-1}\left(\frac{r(1)-v_\alpha^\star}{\alpha}\right)
    \le (f')^{-1}\left(\frac{r(1)-r(2)}{\alpha}+f'\left(c\right)\right).
    \label{eq:tmp2-prop3}
\end{align}
Here, we used the fact that $v^\star_\alpha \geq r^\star = r(1)$ and that $f(0) = 0$ so $(f')^{-1}((r(1) - v^\star_\alpha) /\alpha) \geq 0$. 
Moreover, since the regularization function $f$ is $M_f$-strongly convex, we write
\begin{align}
    &f'\left(\frac{1-c\mu(2)}{\mu(1)}\right)-f'\left(c\right)
    \ge M_f\left(\frac{1-c\mu(2)}{\mu(1)}-c\right)
    =M_f\frac{1-c}{\mu(1)}=100(1-c)>1,\nonumber\\
    \Rightarrow\ & 
    \frac{r(1)-r(2)}{\alpha}+f'\left(c\right)\le
    1+f'\left(c\right)< f'\left(\frac{1-c\mu(2)}{\mu(1)}\right).\label{ineq:key}
\end{align}
Therefore, we can continue to upper bound the RHS of \eqref{eq:tmp2-prop3}:
\begin{align*}
    w_\alpha^\star(1)\le (f')^{-1}\left(\frac{r(1)-r(2)}{\alpha}+f'\left(c\right)\right)
    \underbrace{<}_{\text{by \eqref{ineq:key}}}(f')^{-1}\left(f'\left(\frac{1-c\mu(2)}{\mu(1)}\right)\right)=\frac{1-c\mu(2)}{\mu(1)},
\end{align*}
which further implies that
\begin{align}
    w_\alpha^\star(1)\mu(1)< 1-c\mu(2)
    \underbrace{\le}_{\text{ by \eqref{eq:assump-prop3}}} 1-w_\alpha^\star(2)\mu(2)\ \Rightarrow\ 
    \sum_a w_\alpha^\star(a)\mu(a)<1.\label{eq:prop3-case1}
\end{align}

Note that \eqref{eq:prop3-case1} contradicts with the fact that $(w_\alpha^\star,v_\alpha^\star)$ is the optimal min-max solution of $L_\alpha^{\text{MAB}}$ because it violates the constraint $\E_{\mu}[w(a)]=1$. Therefore, 
\eqref{eq:assump-prop3} should not hold in the first place, and
we must have
\begin{align}
    J(\pi^\star)-J(\pi_\alpha^\star)=\mu(2) w_\alpha^\star(2) \cdot (r(1)-r(2)) >c\left(1-\frac{M_f}{100}\right) \min\{1,\alpha\} \gtrsim \min\{1,\alpha\}
    \label{eq-sub-gap}
\end{align}

\paragraph{Step 2: $w_\alpha^\star$ is picked with large probability.}
We now show that $w_\alpha^\star$ is picked by the algorithm with at least a constant probability. Note that since $w_\alpha^\star$ and $w^\star$ both satisfy the constraint $\E_{\mu}[w]-1=0$, objectives $L_\alpha^{\text{MAB}}(w_\alpha^\star,v)$ and $L_\alpha^{\text{MAB}}(w_\alpha,v)$ do not depend on the Lagrange multiplier variable $v$. We argue that at the population level, we have the following lower bound on the gap  $L_\alpha^{\text{MAB}}(w_\alpha^\star,v)-L_\alpha^{\text{MAB}}(w^\star,v) \gtrsim \alpha $.
Using the definition of $L_{\alpha}^{\text{MAB}}$, one has 
\begin{align}
    &L_\alpha^{\text{MAB}}(w_\alpha^\star,\cdot)-L_\alpha^{\text{MAB}}(w^\star,\cdot )\nonumber\\
    =& \alpha \E_{\mu}[f(w^\star(a))-f(w_\alpha^\star(a))]-\mu(2) w_\alpha^\star(2) (r(1)-r(2))\nonumber\\
    =& \alpha\Bigg(
    \mu(1)\Big(f(w^\star(1))-f(w_\alpha^\star(1))\Big)+\mu(2)\Big(f(w^\star(2))-f(w_\alpha^\star(2))\Big)
    \Bigg)-\mu(2) w_\alpha^\star(2) (r(1)-r(2))
    \nonumber\\
    \ge&\alpha\left(
    \mu(1)\Big(w^\star(1)-w_\alpha^\star(1)\Big)\cdot f'(w_\alpha^\star(1))
    -\mu(2)f(w_\alpha^\star(2))
    \right)-\mu(2) w_\alpha^\star(2) (r(1)-r(2))\label{tmpeq:cvx}\\
    =&\alpha\mu(2)\left(
    f'(w_\alpha^\star(1))\cdot w_\alpha^\star(2)
    -f(w_\alpha^\star(2))-w_\alpha^\star(2)\cdot\frac{r(1)-r(2)}{\alpha}
    \right),\label{tmpeq3}
\end{align}
In \eqref{tmpeq:cvx}, we used the convexity of regularization function $f$ as well as the fact that $f(w^\star(2)) = f(0) = 0$. Moreover, \eqref{tmpeq3} holds because
\begin{align*}
    \mu(1)\left(w^\star(1)-w_\alpha^\star(1)\right)=
\mu(1)\left(\frac{1}{\mu(1)}-w_\alpha^\star(1)\right)
=1-\mu(1)w_\alpha^\star(1)=\mu(2)w_\alpha^\star(2).
\end{align*}
By KKT conditions we also have
\begin{align}
    f'(w_\alpha^\star(1))=\frac{r(1)-v_\alpha^\star}{\alpha}
    =\frac{r(1)-r(2)+\alpha f'(w_\alpha^\star(2))}{\alpha}=\frac{r(1)-r(2)}{\alpha}+f'(w_\alpha^\star(2)).\label{tmpeq2}
\end{align}
Plugging \eqref{tmpeq2} back into  \eqref{tmpeq3}, we obtain
\begin{align}
    L_\alpha^{\text{MAB}}(w_\alpha^\star,\cdot)-L_\alpha^{\text{MAB}}(w^\star,\cdot)\ge&\alpha\mu(2)\Big(
    f'(w_\alpha^\star(2))\cdot w_\alpha^\star(2)
    -f(w_\alpha^\star(2))
    \Big)\nonumber\\
    \ge&\alpha\mu(2)\cdot\frac{M_f}{2}w_\alpha^\star(2)^2 >\alpha\mu(2)\cdot\frac{M_f}{2} c^2 \gtrsim \alpha,\label{tmpeq4}
\end{align}
where \eqref{tmpeq4} is based on the fact that $f$ is $M_f$-strongly convex, and that $w_\alpha^\star(2)>c$ proved in Step 1. We now prove that such large lower bound on population objective difference leads the algorithm to select $w^\star_\alpha$. Recall from Lemma \ref{lemma:objective-statistical-error} that with at least constant probability (e.g. setting $\delta = 0.1$), for any $v \in \cV, w \in \cW$, one has the following bound on difference between the population and empirical objectives 
\begin{align*}
    \left| L_{\alpha}^{\text{MAB}}(w,v) - \hat L_{\alpha}^{\text{MAB}}(w,v) \right| \lesssim 2 \epsilon_{\text{stat},\alpha}^{\text{MAB}}, 
\end{align*}
where $\epsilon_{\text{stat},\alpha}^{\text{MAB}}$ is of order $1/\sqrt{N}$ as defined in \eqref{eq:statistical-error-MAB}. Combining the above inequality with \eqref{tmpeq4}, for any $v,v'\in\cV$ we have
\begin{align*}
    & \hat{L}_\alpha^{\text{MAB}}(w_\alpha^\star,v)-\hat{L}_\alpha^{\text{MAB}}(w^\star,v')\\
    & \gtrsim \alpha - \epsilon_{\text{stat},\alpha}^{\text{MAB}} \gtrsim
    \alpha-(1+\alpha)N^{-\frac{1}{2}}\gtrsim N^\beta-N^{-\frac{1}{2}}.
\end{align*}
Therefore, since $\beta > -1/2$, we conclude that $w^\star_\alpha$ is chosen by the algorithm with constant probability:
\begin{align*}
    \min_{v\in\cV}\hat{L}_\alpha^{\text{MAB}}(w_\alpha^\star,v)-\min_{v\in\cV}\hat{L}_\alpha^{\text{MAB}}(w^\star,v)>0\ \Rightarrow\ 
    w_\alpha^\star=\argmax_{w\in\cW}\min_{v\in\cV}\hat{L}_\alpha^{\text{MAB}}(w,v).
\end{align*}
Combining the above result with the suboptimality lower bound of $\pi_\alpha^\star$ in \eqref{eq-sub-gap} completes the proof for $\alpha \geq N^\beta$.
\subsubsection{Proof for small $\alpha$}
Now suppose $\alpha\le \widetilde{O}( N^{-\frac{1}{2}})$, where $\widetilde{O}$ hides the logarithmic factors. In this case, we consider the following two-state two-arm contextual bandit instance:
\begin{itemize}
    \item \emph{State and reward distributions}: We construct the states 
    such that
    state 1 has a very small probability mass. For state 1, the first arm is optimal with a Bernoulli-distributed reward and the second arm is suboptimal with a deterministic reward. For state 2, both arms have deterministic rewards. Importantly, state 1 has a constant value gap in its suboptimal action.
    \begin{align*}
        &\rho(1)=N^{-\frac{1}{4}},\quad r(1,1)\sim\text{Bernoulli} \left(\frac{1}{2} \right),\ r(1,2)\equiv\frac{1}{3};\\
        &\rho(2)=1-N^{-\frac{1}{4}},\quad r(2,1)\equiv\frac{1}{2},\  r(2,2)\equiv\frac{1}{3}.
    \end{align*}
    \item \emph{Data distribution}: We assume that for both states, most of the probability density is concentrated on the optimal arm.
    \begin{align*}
        &\mu(s)=\rho(s),\ s=1,2.\\
        &\mu(1|1)=\mu(1|2)=1-\frac{2}{N},\quad \mu(2|1)=\mu(2|2)=\frac{2}{N}.
    \end{align*}
    \item \emph{Function classes}: Let $w$ be defined as $\tilde w(2,a)=w_\alpha^\star(2,a)$ and $\tilde w(1,a)=0$ for $a=1,2$. Consider the following function classes $\cW$ and $\cV$:
    \begin{align}\label{eq:W-V-class-constructions}
        \cW=\{w_\alpha^\star,\tilde w\},\ \cV=\{v_\alpha^\star,v^\star\}.
    \end{align}
\end{itemize}

The proof is broken down into 4 steps. In the first step, we show that when $\alpha\le\widetilde{O}( N^{-\frac{1}{2}})$ and $N$ is sufficiently large, the regularized optimal policy is the same as the unregularized optimal policy, i.e., $w_\alpha^\star=w^\star$. Therefore, the function class $\cW$ defined in \eqref{eq:W-V-class-constructions} is realizable $w^\star_\alpha = w^\star \in \cW$. In the second step, we prove that with constant probability $v^\star_\alpha = \argmin_{v \in \cV} \hat L^{\text{CB}}_\alpha (\tilde w,v)$. Then, we show that solving the saddle point of the empirical objective $\hat L_\alpha^{\text{CB}}(w,v)$ selects $\tilde w$ over $w^\star_\alpha$ with a constant probability. Finally, we prove that $\tilde w$ induces a policy $\pi_{\tilde w}$ that suffers from suboptimality of order $N^{-1/4}$, which completes the proof.

\paragraph{Step 1: Regularized optimal weights coincides with unregularized optimal weights.} Since the population optimization problem \eqref{eq:population-objective-CB} is independent across states at a population level, we can use the result of Lemma \ref{lemma:regularized-unregularized-closeness-MAB} to conclude that 
\begin{align*}
    &v_\alpha^\star(s)=r^\star(s)-c(s)\alpha,\text{ and }\\
    &w_\alpha^\star(s,a)=\max\left\{0,(f')^{-1}\left(\frac{r(s,a)-v_\alpha^\star(s)}{\alpha}\right)\right\}=\max\left\{0,(f')^{-1}\left(c(s)-\frac{r^\star(s)-r(s,a)}{\alpha}\right)\right\},
\end{align*}
where $0\le c(s)\le f'(C^\star)$ for $s\in\{1,2\}$. Since $r^\star(s)-r(s,2)=\frac{1}{6}=\Theta(1)$, for $N\ge (6f'(C^\star))^2$, we have $w_\alpha^\star(s,2)=0$ for the suboptimal arm $2$. Thus $w_\alpha^\star(s)=w^\star(s)=\frac{1}{\mu(1|s)}$. Correspondingly, we can use the KKT conditions to compute $v_\alpha^\star(s)=r^\star(s)-\alpha f'\left(\frac{1}{\mu(1|s)}\right)$.

\paragraph{Step 2: $v^\star_\alpha = \argmin_{v \in \cV} \hat L^{\text{CB}}_\alpha (\tilde w,v)$ with constant probability.}
Let $\hat{\mu}$ denote the empirical state-arm distribution and $\hat{r}$ denote the empirical mean reward. Define the following event:
\begin{align}\label{eq:event-def}
   \cE \coloneqq \left \{ \sum_a \hat{\mu}(a|s) \tilde w(s,a)\le 1 \; \text{ for } \; s\in\{1,2\} \right\}.
\end{align}
Recall that we defined $\tilde w(1,a) = 0$ and $\tilde w(2,a) = w^\star(2,a)$. Thus, the above event can be equivalently written as 
\begin{align}\label{tmpeq:equiv1}
    \sum_a \hat \mu(a|2) w^\star_\alpha(2, a) \leq 1 \iff \sum_a \left(\hat \mu(a|2) - \mu(a|2)\right)w^\star_\alpha(2, a) \leq 0.
\end{align}
Here we used the fact that $\sum_a \mu(a|2) w^\star_\alpha(s,2)=1$. Moreover, in Step 1 we showed that  $w^\star_\alpha = w^\star$, thus $w^\star_\alpha(2,2) = 0$ and \eqref{tmpeq:equiv1} corresponds to the following event
\begin{align}\label{tmpeq:equiv-form-final}
    \cE = \left \{ \hat \mu(1|2) - \mu (1|2) \leq 0 \right\}.
\end{align}
Since $\hat{\mu}(1|2)$ is an empirical version of the conditional probability $\mu(1|2)$, event $\cE$ happens with probability $\frac{1}{2}$. 

We condition on the event $\cE$ for the rest of the proof. Using the fact that $v_\alpha^\star(s) \le r^\star(s)= v^\star(s)$, we conclude that
\begin{align}
    \hat L_\alpha^{\text{CB}}(\tilde w, v^\star_\alpha)\le \hat L_\alpha^{\text{CB}}(\tilde w, v^\star)\ \Rightarrow\ 
    \hat L_\alpha^{\text{CB}}(\tilde w, v^\star_\alpha)=\min_{v\in\cV}\hat L_\alpha^{\text{CB}}(\tilde w, v).\label{eq:argmin=v-alpha}
\end{align}
\paragraph{Step 3: Analyzing the probability of picking $w_\alpha^\star$.}
Now we compare the value of $\hat L_\alpha^{\text{CB}}(\cdot,v_\alpha^\star)$ evaluated at $\tilde w$ and $w^\star_\alpha$. We use the definition $\tilde w(2, a) = w^\star_\alpha(2,a)$ and write
\begin{align}
    &\hat L_\alpha^{\text{CB}}(w_\alpha^\star, v^\star_\alpha) - \hat L_\alpha^{\text{CB}}(\tilde w, v^\star_\alpha)\nonumber\\
    = & \hat{\mu}(1) \Bigg[\hat{r}(1,1) \hat{\mu}(1|1)  w_\alpha^\star(1,1) 
    + \alpha 
     \hat \mu(1|1) f(w_\alpha^\star(1,1))
     + v_\alpha^\star(1) \left(\sum_{a} \hat\mu(a|1) \left( w(1,a)-w_\alpha^\star(1,a)\right)\right) \Bigg]\nonumber
\end{align}
Noting that $v_\alpha^\star(1)=r(1,1)-\alpha f'\left(w_\alpha^\star(s_1,a_1)\right)$, $\tilde w(1, a) = 0$, and $w_\alpha^\star(1,1)=w^\star(1,1)=\frac{1}{\mu(1|1)}$, we further simplify the above equation
\begin{align}
    &\hat L_\alpha^{\text{CB}}(w_\alpha^\star, v^\star_\alpha) - \hat L_\alpha^{\text{CB}}(\tilde w, v^\star_\alpha)\nonumber \\
    = & \hat{\mu}(1) \Bigg[\left(\hat{r}(1,1)-r(1,1)+\alpha f'\left(\frac{1}{\mu(1|1)}\right)\right)
     \frac{\hat\mu(1|1)}{\mu(1|1)}   
     + \alpha 
     \hat \mu(1|1) f\left(\frac{1}{\mu(1|1)}\right)
     \Bigg]\label{eq-exp-1}\\
     = & \hat{\mu}(1,1) \Bigg[
     \frac{\hat{r}(1,1)-r(1,1)}{\mu(1|1)}
     +\alpha\cdot
     \left( \frac{1}{\mu(1|1)} f'\left(\frac{1}{\mu(1|1)}\right)
     +f\left(\frac{1}{\mu(1|1)}\right)
     \right)
     \Bigg].
     \label{eq:case2-prop3}
\end{align}
We then prove that with constant probability, the first term in \eqref{eq:case2-prop3} is negative with a magnitude larger than the second term:
\begin{align}
    \frac{\hat{r}(1,1)-r(1,1)}{\mu(1|1)} \lesssim - N^{-3/8}.\label{eq:cond-case2-prop3}
\end{align}
The proof of this inequality relies on anti-concentration bounds of binomial random variables and is presented at the end of this section. By Inequality \eqref{eq:cond-case2-prop3} combined with \eqref{eq:argmin=v-alpha}, we conclude that 
\begin{align}
    \min_{v\in\cV}\hat L_\alpha^{\text{CB}}(\tilde w, v)= \hat L_\alpha^{\text{CB}}(\tilde w, v^\star_\alpha) > \hat L_\alpha^{\text{CB}}( w_\alpha^\star, v^\star_\alpha)
    \ge\min_{v\in\cV}\hat L_\alpha^{\text{CB}}( w_\alpha^\star, v),
\end{align}
which guarantees that the algorithm picks $\tilde w$ with a constant probability.

\paragraph{Step 4: Suboptimality of $\pi_w$}
Finally, for the policy $\pi_w$ induced by $w$, we have
\begin{align*}
    J(\pi^\star_\alpha)-J(\pi_w)=\mu(s_1)\pi_w(2|1) (r(1,1)-r(1,2))=\frac{N^{-\frac{1}{4}}}{12}\ge \Omega(N^\beta),
\end{align*}
for $\beta=-\frac14>-\frac12$, as desired. The proof for small $\alpha$ is thus complete.

\textit{Proof of Inequality \eqref{eq:cond-case2-prop3}.} Using the Chernoff bounds for binomial random variables given in Lemma \ref{lemma:N-lower-bound-MAB}, one can conclude that the following event $\cE'$ happens with probability at least 0.5:
\begin{align}
    \cE'\coloneqq\left\{N(1,1)\ge 0.1 N \mu(1,1)\ge 0.05 N^{\frac{3}{4}}\right\}.
\end{align}
Furthermore, $\cE$ and $\cE'$ are independent because the random variable $\hat{r}(s_1,a_1)$ is independent from the arm distribution within state $s_2$.
Therefore, conditioning on $\cE\cap\cE'$ which happens with probability $0.5\times0.5=0.25$, we use the anti-concentration bounds for Binomial random variables Lemma \ref{lemma:anti-concentration-binomial} to obtain the following lower bound:
\begin{align}
    \Pr\left.\left(\hat r(1,1) - r(1,1) \leq - \sqrt{\frac{\log(2c_1)}{c_2 N(1,1) }} \le - c' N^{-\frac{3}{8}}\right|\cE\cap\cE'\right) \geq 0.5,
\end{align}
where $c'=\sqrt{\frac{20\log(2c_1)}{c_2 }} $ is a universal constant. Therefore, we have established that \eqref{eq:cond-case2-prop3} holds with constant probability.

\begin{lemma}[Anti-concentration of Binomial random variables, adapted from Proposition 7.3.2 of \citet{matouvsek2001probabilistic}]
\label{lemma:anti-concentration-binomial} 
    Let $X_1,\cdots, X_n$ be independent random variables following the Bernoulli distribution with mean $\frac{1}{2}$, and let $\overline{X}=\frac{1}{n}\sum_{i=1}^n X_i $ be the empirical mean. Then we have that for any $t\in[0,\frac{1}{8}]$ and universal constants $c_1,c_2$,
    \begin{align}
        \Pr\left(\overline{X}\le\E[\overline{X}]-t\right)\ge c_1 e^{-c_2 t^2n}.
    \end{align}
\end{lemma}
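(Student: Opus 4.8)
The plan is to reduce the claim to a lower bound on the lower tail of a symmetric binomial and estimate that tail directly via Stirling's formula. Writing $S_n=\sum_{i=1}^n X_i\sim\mathrm{Binomial}(n,1/2)$ and noting $\E[\overline{X}]=1/2$, the event $\{\overline{X}\le 1/2-t\}$ is exactly $\{S_n\le n/2-tn\}$, so I must show $\Pr(S_n\le n/2-tn)\ge c_1 e^{-c_2 t^2 n}$ for universal $c_1,c_2$. The whole argument is a one-sided anti-concentration estimate around the mean, and because only existence of the constants is asserted, I am free to split off small $n$ and impose mild conditions like $n\ge 64$.

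First I would record the pointwise Stirling bound on the mass function: there is a universal $c_0>0$ such that for all $n$ and all integers $j$ with $n/4\le j\le 3n/4$,
$$\binom{n}{j}2^{-n}\ \ge\ \frac{c_0}{\sqrt n}\,e^{-nD(j/n)},\qquad D(p)\coloneqq p\ln(2p)+(1-p)\ln\!\big(2(1-p)\big),$$
where $D$ is the binary relative entropy to $\mathrm{Bernoulli}(1/2)$. This follows from the two-sided estimate $\sqrt{2\pi m}\,(m/e)^m\le m!\le \sqrt{2\pi m}\,(m/e)^m e^{1/(12m)}$ applied to $n!,j!,(n-j)!$, the prefactor $\sqrt{n/(j(n-j))}\ge c/\sqrt n$ being controlled since $j/n\in[1/4,3/4]$. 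I would also record the elementary bound $D(1/2-s)\le c_2' s^2$ for $|s|\le 1/4$, which comes from the Taylor expansion $D(1/2-s)=2s^2+O(s^4)$ (equivalently, $D(1/2-s)/s^2$ extends continuously and is bounded on the compact interval $[0,1/4]$).

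The key idea, and the only step needing care, is that lower bounding the tail by its single largest summand $\binom{n}{\lfloor n/2-tn\rfloor}2^{-n}$ loses a spurious $1/\sqrt n$ factor, whereas the target has no such factor. To recover it I would sum over a whole block of indices just below the threshold,
$$B\coloneqq\Big\{\,j\in\mathbb{Z}:\ \tfrac n2-tn-\sqrt n\ \le\ j\ \le\ \tfrac n2-tn\,\Big\}.$$
For $n\ge 64$ and $t\le 1/8$ one checks $n/2-tn-\sqrt n\ge n/4$, so every $j\in B$ lies in $[n/4,n/2]$ and the mass-function bound applies; moreover $s\coloneqq 1/2-j/n\in[t,\,t+1/\sqrt n]\subseteq[0,1/4]$, giving $nD(j/n)\le c_2' n(t+1/\sqrt n)^2\le 2c_2' nt^2+2c_2'$. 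Hence each summand is at least $\tfrac{c_0}{\sqrt n}e^{-2c_2'}e^{-2c_2' nt^2}$, and since $|B|\ge\sqrt n/2$ and $B\subseteq\{j\le n/2-tn\}$,
$$\Pr(S_n\le n/2-tn)\ \ge\ \sum_{j\in B}\binom{n}{j}2^{-n}\ \ge\ \frac{\sqrt n}{2}\cdot\frac{c_0}{\sqrt n}e^{-2c_2'}e^{-2c_2' nt^2}\ =\ \frac{c_0 e^{-2c_2'}}{2}\,e^{-2c_2' nt^2},$$
which is the desired inequality with $c_1=c_0e^{-2c_2'}/2$ and $c_2=2c_2'$.

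Finally, for the leftover range $n<64$ I would argue trivially: since $t\le 1/2$ the event contains $\{S_n=0\}$, so $\Pr(S_n\le n/2-tn)\ge 2^{-n}\ge 2^{-63}$, while $e^{-c_2 t^2 n}\le 1$; shrinking $c_1$ to $\min\{c_0e^{-2c_2'}/2,\ 2^{-63}\}$ yields a single pair of universal constants valid for all $n$. The main obstacle is thus purely the $\sqrt n$ bookkeeping in the block step — keeping the block inside the region $[n/4,3n/4]$ where the Stirling bound is valid and ensuring the entropy exponent stays uniformly $O(t^2)$ across the block rather than inflating — while everything else reduces to routine Stirling estimation.
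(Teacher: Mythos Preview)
Your proof is correct. The paper itself does not prove this lemma; it simply states it as adapted from Proposition~7.3.2 of \citet{matouvsek2001probabilistic} and uses it as a black box in the proof of Proposition~\ref{prop:PRO-CB-suboptimal}. Your self-contained argument --- Stirling's approximation to lower-bound individual binomial probabilities by $\tfrac{c_0}{\sqrt n}e^{-nD(j/n)}$, followed by summing over a block of $\Theta(\sqrt n)$ indices just below the threshold to cancel the $1/\sqrt n$ prefactor --- is the standard route to such anti-concentration bounds and is essentially how the cited reference proceeds. The small-$n$ disposal and the verification that the block stays inside $[n/4,3n/4]$ (so that both the Stirling estimate and the quadratic entropy bound apply uniformly) are handled correctly.
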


\subsection{Proof of Theorem \ref{thm:PAL-CB-suboptimality}}\label{app:PAL-CB-suboptimality}

Proof of this theorem largely follows similar steps as the proof we presented for Theorem \ref{thm:suboptimality-MAB}. In particular, we start by presenting two lemmas. The first lemma leverages Hoeffding's inequality to establish the closeness of the population objective \eqref{eq:AL-population-objective-CB} and empirical objective \eqref{eq:CB-empirical-objective-AL}. Additionally, we show that this result leads to the closeness of population objective at $w^\star$ and $\hat w$. Proof of this lemma is presented at the end of this subsection. 

\begin{lemma}[Empirical and population closeness, CB]\label{lemma:objective-statistical-error-CB}
Fix $\delta > 0$ and define 
\begin{align}\label{eq:statistical-error-CB}
    \epsilon_{\text{stat}}^{\text{CB}} \coloneqq 3(B_w + 1)^2 (B_v+1) \sqrt{\frac{\log (|\cW||\cV|/\delta)}{N}}.
\end{align}
For any $w \in \cW$ and $v \in \cV$, the following statements hold with probability at least $1-\delta$
\begin{enumerate}[label=(\Roman*)]
    \item\label{claim:CB-empirical-popoulation-close} $\left|L_{\text{AL}}^{\text{CB}}(w,v) - \hat L_{\text{AL}}^{\text{CB}}(w,v) \right| \leq \epsilon_{\text{stat}}^{\text{CB}}$;
    \item\label{claim:CB-what-wstar-close} $L_{\text{AL}}^{\text{CB}}(w^\star,v) - L_{\text{AL}}^{\text{CB}}(\hat w,v) \leq 2\epsilon_{\text{stat}}^{\text{CB}}$.
\end{enumerate}
\end{lemma}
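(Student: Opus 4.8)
The plan is to recognize $\hat L^{\text{CB}}_{\text{AL}}(w,v)$ as an empirical average of \iid bounded random variables whose mean is exactly $L^{\text{CB}}_{\text{AL}}(w,v)$, and then follow the two-part structure of Lemma~\ref{lemma:objective-statistical-error}. For part~\ref{claim:CB-empirical-popoulation-close}, I would write $\hat L^{\text{CB}}_{\text{AL}}(w,v) = \frac1N\sum_{i=1}^N X_i(w,v)$ with $X_i(w,v) \coloneqq w(s_i,a_i)(r_i - v(s_i)) + v(s_i) - \big(\sum_{a} w(s_i,a)\mu(a|s_i) - 1\big)^2$. Since $(s_i,a_i)\sim\mu$ and $\E[r_i\mid s_i,a_i] = r(s_i,a_i)$, a short computation gives $\E[X_i(w,v)] = L^{\text{CB}}_{\text{AL}}(w,v)$; crucially, because the conditional $\mu(\cdot\mid s)$ is assumed known, the quadratic ALM summand $\big(\sum_a w(s_i,a)\mu(a|s_i)-1\big)^2$ is a \emph{deterministic} bounded function of $s_i$, so it introduces no estimation bias and raises no double-sampling concern. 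Using $r_i\in[0,1]$, $|v|\le B_v$, $0\le w\le B_w$, and $\sum_a w(s,a)\mu(a|s)\le B_w$, each $X_i(w,v)$ lies in an interval of width $\lesssim (B_w+1)^2(B_v+1)$; Hoeffding's inequality together with a union bound over the (finite/covering) classes $\cW\times\cV$ then yields $|L^{\text{CB}}_{\text{AL}}(w,v) - \hat L^{\text{CB}}_{\text{AL}}(w,v)|\le\epsilon^{\text{CB}}_{\text{stat}}$ simultaneously for all $w\in\cW$, $v\in\cV$ with probability at least $1-\delta$.

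For part~\ref{claim:CB-what-wstar-close}, I would reuse the five-term telescoping decomposition from the proof of part~\ref{claim:what-wstar-close-MAB} of Lemma~\ref{lemma:objective-statistical-error}. Setting $\hat v_{w^\star} \coloneqq \argmin_{v\in\cV}\hat L^{\text{CB}}_{\text{AL}}(w^\star,v)$ and letting $\hat v$ be the empirical inner minimizer at $\hat w$, write
\begin{align*}
L^{\text{CB}}_{\text{AL}}(w^\star,v) - L^{\text{CB}}_{\text{AL}}(\hat w,v) = \underbrace{L^{\text{CB}}_{\text{AL}}(w^\star,v) - L^{\text{CB}}_{\text{AL}}(w^\star,\hat v_{w^\star})}_{T_1} + \underbrace{L^{\text{CB}}_{\text{AL}}(w^\star,\hat v_{w^\star}) - \hat L^{\text{CB}}_{\text{AL}}(w^\star,\hat v_{w^\star})}_{T_2} + \underbrace{\hat L^{\text{CB}}_{\text{AL}}(w^\star,\hat v_{w^\star}) - \hat L^{\text{CB}}_{\text{AL}}(\hat w,\hat v)}_{T_3} + \underbrace{\hat L^{\text{CB}}_{\text{AL}}(\hat w,\hat v) - \hat L^{\text{CB}}_{\text{AL}}(\hat w,v)}_{T_4} + \underbrace{\hat L^{\text{CB}}_{\text{AL}}(\hat w,v) - L^{\text{CB}}_{\text{AL}}(\hat w,v)}_{T_5}.
\end{align*}
Here $T_1 = 0$ because $w^\star$ is the optimal solution of the constrained problem \eqref{eq:CB-constrained-optimization-with-d}, so $\sum_a w^\star(s,a)\mu(a|s)=1$ on the support of $\mu=\rho$, which simultaneously annihilates the Lagrange term $\E_\mu[v(s)(w^\star(s,a)-1)]$ and the quadratic ALM term in \eqref{eq:AL-population-objective-CB}, making $L^{\text{CB}}_{\text{AL}}(w^\star,\cdot)$ independent of $v$. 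Next, $T_3\le 0$ and $T_4\le 0$ follow from $(\hat w,\hat v)$ being the empirical saddle point: $\hat L^{\text{CB}}_{\text{AL}}(\hat w,\hat v) = \max_{w\in\cW}\min_{v\in\cV}\hat L^{\text{CB}}_{\text{AL}}(w,v)\ge\min_{v\in\cV}\hat L^{\text{CB}}_{\text{AL}}(w^\star,v) = \hat L^{\text{CB}}_{\text{AL}}(w^\star,\hat v_{w^\star})$ and $\hat L^{\text{CB}}_{\text{AL}}(\hat w,\hat v) = \min_{v\in\cV}\hat L^{\text{CB}}_{\text{AL}}(\hat w,v)\le\hat L^{\text{CB}}_{\text{AL}}(\hat w,v)$. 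Finally, $T_2\le\epsilon^{\text{CB}}_{\text{stat}}$ and $T_5\le\epsilon^{\text{CB}}_{\text{stat}}$ by part~\ref{claim:CB-empirical-popoulation-close} (evaluated at $(w^\star,\hat v_{w^\star})$ and $(\hat w,v)$ respectively). Summing yields the claimed bound of $2\epsilon^{\text{CB}}_{\text{stat}}$.

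The main obstacle is essentially bookkeeping rather than anything conceptual: forcing the per-sample range to land exactly on the stated constant $3(B_w+1)^2(B_v+1)$ requires carefully combining $|w(s_i,a_i)(r_i-v(s_i))|\le B_w(1+B_v)$, $|v(s_i)|\le B_v$, and $\big(\sum_a w(s_i,a)\mu(a|s_i)-1\big)^2\le(B_w+1)^2$ and invoking $B_w+1\ge1$ (and $B_w\ge1$, which holds by realizability of $w^\star$). One should also note that part~\ref{claim:CB-what-wstar-close} implicitly uses $w^\star\in\cW$, which is inherited from the ambient hypotheses of Theorem~\ref{thm:PAL-CB-suboptimality}. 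The only element genuinely new relative to the MAB lemma is the quadratic ALM term; but because the conditional $\mu(\cdot\mid s)$ is known it is a bounded deterministic transform of $s_i$, so it folds into the same Hoeffding estimate without the variance blow-up that a naive single-sample estimate of a squared expectation would incur.
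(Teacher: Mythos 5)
Your proposal is correct and follows essentially the same route as the paper: part (I) via Hoeffding plus a union bound after observing that the known conditional $\mu(\cdot\mid s)$ makes the ALM summand a deterministic bounded function of $s_i$ (so the empirical objective is unbiased), and part (II) via the identical five-term telescoping decomposition with $T_1=0$ from constraint satisfaction at $w^\star$, $T_3,T_4\le 0$ from the empirical saddle-point property, and $T_2,T_5$ controlled by part (I). The constant bookkeeping also matches the paper's bound of $3(B_w+1)^2(B_v+1)$.
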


In the second lemma, we prove that the ALM term enforces a lower bound on normalization factors $\frac{d_{\hat w}}{\mu}(s) \coloneqq \sum_a \hat w(s,a) \mu(a|s)$ for significant states.

\begin{lemma}[Occupancy validity enforced by the ALM]\label{lemma:lowerbound-dhatw}
Define the state space subset 
\begin{align}\label{eq:Ss-def}
    \mathcal{S}_s \coloneqq \left\{ s \ \middle| \  \frac{d_{\hat w}}{\mu}(s) \leq \frac{1}{2} \right\}.
\end{align}
For any fixed $\delta > 0$, the following statements hold with probability at least $1-\delta$,
\begin{enumerate}[label = (\Roman*)]
    \item\label{claim:avg-over-dhat-small}  $\E_{s, a \sim \mu} \left[ (r^\star(s) - r(s,a)) \hat w(s,a)\right] \lesssim \epsilon^{\text{CB}}_{\text{stat}}$;
    \item\label{claim:mass-on-invalid-small} $\sum_{s \in \cS_s} \mu(s) \lesssim \epsilon^{\text{CB}}_{\text{stat}}$; 
\end{enumerate}
where $\epsilon^{\text{CB}}_{\text{stat}}$ is defined in \eqref{eq:statistical-error-CB}.
\end{lemma}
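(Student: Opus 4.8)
\medskip

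The plan is to exploit the near-optimality of $\hat w$ at the population level --- namely Lemma~\ref{lemma:objective-statistical-error-CB}, part~\ref{claim:CB-what-wstar-close}, instantiated at the dual variable $v=v^\star$ (legitimate since $v^\star\in\cV$ by realizability) --- and then rewrite the resulting gap as a sum of two nonnegative quantities, so that each is individually controlled by $2\epsilon_{\text{stat}}^{\text{CB}}$. Throughout I would work on the $1-\delta$ event of that lemma; everything else is deterministic, which is why the probability in the statement is exactly $1-\delta$.

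The first step is to evaluate $L_{\text{AL}}^{\text{CB}}(w^\star,v^\star)$. Since $w^\star(s,a)=d^{\pi^\star}(s,a)/\mu(s,a)$ and in the CB setting $d^{\pi^\star}(s)=\rho(s)=\mu(s)$, one gets $\sum_a w^\star(s,a)\mu(a|s)=d^{\pi^\star}(s)/\mu(s)=1$ for every $s$ in the support of $\mu$; hence both the Lagrange term $\E_\mu[v^\star(s)(w^\star(s,a)-1)]$ and the quadratic ALM term vanish, leaving $L_{\text{AL}}^{\text{CB}}(w^\star,v^\star)=\E_\mu[w^\star(s,a)r(s,a)]=\E_{s,a\sim d^{\pi^\star}}[r(s,a)]=J(\pi^\star)$, which in turn equals $\E_{s\sim\mu}[v^\star(s)]$ because $v^\star=V^\star$ and the state marginal of $\mu$ is $\rho$.

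The second step is the key algebraic identity. Writing $g(s)\coloneqq\sum_a\hat w(s,a)\mu(a|s)=\frac{d_{\hat w}}{\mu}(s)$ and expanding $L_{\text{AL}}^{\text{CB}}(\hat w,v^\star)$, the $v^\star$-dependent pieces regroup as $J(\pi^\star)-\E_\mu[\hat w(s,a)r(s,a)]+\E_{s\sim\mu}[v^\star(s)(g(s)-1)]=\E_{s,a\sim\mu}[\hat w(s,a)(v^\star(s)-r(s,a))]$, using $J(\pi^\star)=\E_{s\sim\mu}[v^\star(s)]$; and since $v^\star(s)=r^\star(s)=\max_a r(s,a)$ in CB, this last expression equals $\E_{s,a\sim\mu}[\hat w(s,a)(r^\star(s)-r(s,a))]\ge 0$. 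Consequently $L_{\text{AL}}^{\text{CB}}(w^\star,v^\star)-L_{\text{AL}}^{\text{CB}}(\hat w,v^\star)=\E_{s,a\sim\mu}[\hat w(s,a)(r^\star(s)-r(s,a))]+\E_{s\sim\mu}[(g(s)-1)^2]$. By Lemma~\ref{lemma:objective-statistical-error-CB}, part~\ref{claim:CB-what-wstar-close}, the left side is at most $2\epsilon_{\text{stat}}^{\text{CB}}$, and since both summands on the right are nonnegative, each is at most $2\epsilon_{\text{stat}}^{\text{CB}}$; the first of these is precisely part~\ref{claim:avg-over-dhat-small}. For part~\ref{claim:mass-on-invalid-small}, on $\cS_s$ we have $g(s)\le\frac12$, hence $(g(s)-1)^2\ge\frac14$, so $\frac14\sum_{s\in\cS_s}\mu(s)\le\E_{s\sim\mu}[(g(s)-1)^2]\le 2\epsilon_{\text{stat}}^{\text{CB}}$, i.e.\ $\sum_{s\in\cS_s}\mu(s)\lesssim\epsilon_{\text{stat}}^{\text{CB}}$.

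The only non-routine point is the exact cancellation in the second step: it hinges on choosing the dual variable to be $v^\star=V^\star$ (a generic $v$ leaves a cross term that is not sign-definite and cannot be folded into the two nonnegative pieces) and on $w^\star$ satisfying the per-state validity constraints exactly, so that $L_{\text{AL}}^{\text{CB}}(w^\star,v^\star)$ collapses to $J(\pi^\star)$. Once this two-term decomposition is in hand the rest is elementary; the same mechanism is what will drive the suboptimality bound of Algorithm~\ref{alg:PAL-CB} in Theorem~\ref{thm:PAL-CB-suboptimality}.
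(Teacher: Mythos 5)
Your proposal is correct and follows essentially the same route as the paper: the paper likewise evaluates the objective gap from Lemma~\ref{lemma:objective-statistical-error-CB}\ref{claim:CB-what-wstar-close} at the dual variable $v(s)=r^\star(s)=v^\star(s)$, obtains the identical two-term decomposition into $\E_{s,a\sim\mu}[\hat w(s,a)(r^\star(s)-r(s,a))]$ plus $\E_{s\sim\mu}[(\tfrac{d_{\hat w}}{\mu}(s)-1)^2]$, and bounds each nonnegative summand separately, finishing part~\ref{claim:mass-on-invalid-small} via the same $(\tfrac{d_{\hat w}}{\mu}(s)-1)^2\geq\tfrac14$ argument on $\cS_s$.
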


Given the two lemmas above, our suboptimality analysis can be broken down into two simple steps. First, we partition the states based on $\cS_{s}$ defined in \eqref{eq:Ss-def} and decompose the policy suboptimality accordingly:
\begin{align}\notag
    \sum_s \mu(s) V^\star(s) - \sum_s \mu(s) V^{\hat \pi}(s)
    & = \sum_{s \in \cS_{\text{s}}} \mu(s) (V^\star(s) - V^{\hat \pi}(s)) + \sum_{s \not \in \cS_{\text{s}} } \mu(s) (V^\star(s) - V^{\hat \pi}(s)),\\ \label{eq:partition-bound-CB} 
    & \lesssim \epsilon_{\text{stat}}^{\text{CB}} + \sum_{s \not \in \cS_{\text{s}} } \mu(s) (V^\star(s) - V^{\hat \pi}(s))\\ \label{eq:bound-dhatw-inSl-CB}
    & \leq \epsilon_{\text{stat}}^{\text{CB}} + 2 \sum_s {d_{\hat w}(s)} (V^\star(s) - V^{\hat \pi}(s))
\end{align}
In \eqref{eq:partition-bound-CB}, we used part \ref{claim:mass-on-invalid-small} in Lemma \ref{lemma:lowerbound-dhatw} to bound the first term and \eqref{eq:bound-dhatw-inSl-CB} uses the fact that by definition, for all $s \not \in \cS_{\text{s}}$ we have $\mu(s) < 2\hat d(s)$ and $V^\star(s) - V^{\hat \pi}(s) \geq 0$. Moreover, the second term in \eqref{eq:bound-dhatw-inSl-CB} is bounded by part \ref{claim:avg-over-dhat-small} of Lemma \ref{lemma:lowerbound-dhatw} since
\begin{align*}
    \sum_s d_{\hat w}(s) (V^\star(s) - V^{\hat \pi}(s)) & = \sum_{s: d_{\hat w}(s) > 0} d_{\hat w}(s) \left(r^\star(s) - \sum_a \hat \pi(a|s) r(s,a) \right)\\
    & = \sum_{s: d_{\hat w}(s) > 0} \sum_a d_{\hat w}(s,a) r^\star(s) - d_{\hat w}(s) \frac{\hat w(s,a)\mu(s,a)}{d_{\hat w}(s)} r(s,a)\\
    & = \sum_{s: d_{\hat w}(s) > 0} \sum_a {\hat w}(s,a)\mu(s,a) r^\star(s) - \hat w(s,a) \mu(s,a) r(s,a)\\
    & \leq \sum_{s,a} \mu(s,a) {\hat w}(s,a) (r^\star(s) - r(s,a)) \lesssim \epsilon^{\text{CB}}_{\text{stat}},
\end{align*}
where the equations follow from the definition of $\hat \pi$. The final suboptimality bound is proved by noting that $(B_w +1)^2 \asymp B_w^2$ since $B_w \geq 1$ due to realizability of $w^\star(s,a^\star) \geq 1$.

\begin{proof}[Proof of Lemma \ref{lemma:objective-statistical-error-CB}]
To prove part \ref{claim:CB-empirical-popoulation-close}, notice that $\E_\mu \left[\hat L^{\text{CB}}_{\text{AL}}(w, v)\right] = L^{\text{CB}}_{\text{AL}}(w, v)$. Furthermore, $\hat L^{\text{CB}}_{\text{AL}}(w, v)$ is an empirical average of i.i.d.~random variables which are bounded by 
\begin{align*}
    & \left|w(s,a) r(s,a) - v(s) (w(s,a) - 1) - \left( \sum_a w(s,a) \mu(a|s) - 1\right)^2 \right|\\
    & \quad \leq B_w + B_v(B_w+1) + B_w^2\\
    & \quad \leq 3(B_w+1)^2 (B_v+1)
\end{align*}
Applying Hoeffding's inequality along with a union bound on $w \in \cW$ and $v \in \cV$ finishes the proof of part \ref{claim:CB-empirical-popoulation-close}.

We now prove part \ref{claim:CB-what-wstar-close}. For the primal-dual objective without the AL term
\begin{align*}
\max_{w \geq 0} \min_{v}  L^{\text{CB}}(w, v) 
    &  \;\coloneqq \E_{s, a \sim \mu} \left[ w(s,a)r(s,a) \right] 
    - \E_{s, a \sim \mu}[v(s) (w(s,a) - 1)],
\end{align*}
we have $(w^\star, v^\star) \in \argmax_{w \geq 0} \argmin_{v} L^{\text{CB}}(w,v)$ by strong duality. Moreover, since $w^\star$ is realizable, it satisfies the validity constraint $\E_{a \sim \mu(\cdot|s)}[w^\star(s,a)] = 1$ for all $s$. Therefore, by Lemma \ref{lemma:optimal-solution-with-AL-unchanged} adding the ALM term does not change the optimal solution and we have $(w^\star, v^\star) \in \argmax_{w \geq 0} \argmin_{v} L^{\text{CB}}_{\text{AL}}(w,v)$.

We follow similar steps as in the proof of Lemma \ref{lemma:regularized-unregularized-closeness-MAB} and decompose $L^{\text{CB}}_{\text{AL}}(w^\star, v) -  L^{\text{CB}}_{\text{AL}}(\hat w, v)$ according to 
\begin{align*}
    & L^{\text{CB}}_{\text{AL}}(w^\star, v) -  L^{\text{CB}}_{\text{AL}}(\hat w, v)\\
    & \quad = \underbrace{L^{\text{CB}}_{\text{AL}}(w^\star, v) - L^{\text{CB}}_{\text{AL}}(w^\star, \hat v(w^\star))}_{\coloneqq T_1} + \underbrace{L^{\text{CB}}_{\text{AL}}(w^\star, \hat v(w^\star)) - \hat L^{\text{CB}}_{\text{AL}}(w^\star, \hat v(w^\star))}_{\coloneqq T_2}\\
    & \qquad+ \underbrace{\hat L^{\text{CB}}_{\text{AL}}(w^\star, \hat v(w^\star)) - \hat L^{\text{CB}}_{\text{AL}}(\hat w, \hat v)}_{\coloneqq T_3} + \underbrace{\hat L^{\text{CB}}_{\text{AL}}(\hat w, \hat v) - \hat L^{\text{CB}}_{\text{AL}}(\hat w, v)}_{\coloneqq T_4}\\
    & \qquad + \underbrace{\hat L^{\text{CB}}_{\text{AL}}(\hat w, v) - L^{\text{CB}}_{\text{AL}}(\hat w, v)}_{\coloneqq T_5},
\end{align*}
where $\hat v_w = \arg \min_{v \in \cV} \hat L^{\text{CB}}_{\text{AL}}(w, v).$ Each term is bounded as follows:
\begin{itemize}
    \item $T_1 = 0$ because $w^\star$ satisfies the optimization constraints. 
    \item $T_2 \leq \epsilon^{\text{CB}}_{\text{stat}}$ due to Lemma \ref{lemma:objective-statistical-error-CB}.
    \item $T_3 \leq 0$ because $\hat w = \arg \max_{w \in \cW} \hat  L^{\text{CB}}_{\text{AL}}(\hat v_w, w)$.
    \item $T_4 \leq 0$ because $\hat v = \arg \min_{v \in \cV} \hat L^{\text{CB}}_{\text{AL}}(v, \hat w)$.
    \item $T_5 \leq \epsilon^{\text{CB}}_{\text{stat}}$ due to Lemma \ref{lemma:objective-statistical-error-CB}.
\end{itemize}
Summing up the bounds on each term proves part \ref{claim:CB-what-wstar-close}.
\end{proof}

\begin{proof}[Proof of Lemma \ref{lemma:lowerbound-dhatw}]
We leverage the closeness of the objective at $w^\star$ and $\hat w$ established in Lemma \ref{lemma:objective-statistical-error-CB} to show that the ALM term at $\hat w$ is small. Since $w^\star$ satisfies the validity constraints, the objective at $w^\star$ simplifies to
\begin{align*}
    L^{\text{CB}}_{\text{AL}}(w^\star,v)
    & = \E_{s,a \sim \mu}[r(s,a) w^\star(s,a)] + \underbrace{\E_{s,a \sim \mu}[v(s) (1-w^\star(s,a))]}_{=0} - \underbrace{  \E_{s \sim \mu} [(\E_{a \sim \mu(\cdot|s)}[w(s,a)] - 1)^2]}_{=0}\\
    & = \E_{s,a \sim \mu}[r(s,a) w^\star(s,a)].
\end{align*}
Consider the objective difference at $v(s) = r^\star(s) \coloneqq \max_a r(s,a)$:
\begin{align*}
    & L^{\text{CB}}_{\text{AL}}(w^\star, r^\star) -  L^{\text{CB}}_{\text{AL}}(\hat w, r^\star)\\
    = & \sum_{s} \mu(s) r^\star(s) - \sum_{s,a} \mu(s,a) r(s,a) \hat w(s,a) - \sum_s r^\star(s) \left(\mu(s) - \sum_a \mu(s,a) \hat w(s,a)\right) \\
    & \quad +   \E_{s \sim \mu}\left[\left(\frac{d_{\hat w}}{\mu}(s) -1
\right)^2\right]\\
    = & \sum_{s,a} \mu(s,a) [r^\star(s) - r(s,a)] \hat w(s,a) +   \E_{s \sim \mu}\left[\left(\frac{d_{\hat w}}{\mu}(s) -1
\right)^2\right].
\end{align*}
Since $L^{\text{CB}}_{\text{AL}}(w^\star, v) -  L^{\text{CB}}_{\text{AL}}(\hat w, v) \lesssim \epsilon^{\text{CB}}_{\text{stat}}$ by Lemma \ref{lemma:objective-statistical-error-CB}, we conclude that 
\begin{align*}
    \sum_{s,a} \mu(s,a) [r^\star(s) - r(s,a)] \hat w(s,a) +   \E_{s \sim \mu}\left[\left(\frac{d_{\hat w}}{\mu}(s) -1
\right)^2\right] \lesssim \epsilon^{\text{CB}}_{\text{stat}} 
\end{align*}
Moreover, since the first term is nonnegative due to $\hat w(s,a) \geq 0$ and $r^\star(s) - r(s,a) \geq 0$, both of the terms in the above inequality are bounded by $ \epsilon^{\text{CB}}_{\text{stat}}$ and thereby proving part \ref{claim:avg-over-dhat-small}. 

The above result also allows us to bound the mass on the subset $\cS_{\text{s}}$ that contains the states that violate state occupancy validity
\begin{align*}
    \epsilon^{\text{CB}}_{\text{stat}} \gtrsim \sum_s \mu(s) \left[\left(\frac{d_{\hat w}}{\mu}(s) - 1\right)^2\right]
     \geq \sum_{s \in \cS_{\text{s}}} \mu(s) \left[\left(\frac{d_{\hat w}}{\mu}(s) - 1\right)^2\right]
    \geq \frac{1}{4}\sum_{s \in \cS_{\text{s}}} \mu(s) \gtrsim \sum_{s \in \cS_{\text{s}}} \mu(s),
\end{align*}
where we used the fact that $\frac{d_{\hat w}}{\mu}(s) \leq \frac{1}{2}$ and thus $\left(\frac{d_{\hat w}}{\mu}(s) - 1\right)^2 \geq \frac{1}{4}$ by definition of $\cS_{\text{s}}$. This concludes the proof of part \ref{claim:mass-on-invalid-small}.
\end{proof}

\section{Proofs for MDPs}

In this section, we begin by introducing some additional notation. The original primal-dual objective without ALM term is given by
\begin{align}\label{eq:MDP-population-objective}
        \max_{w \geq 0} \min_{v} L^{\text{MDP}} (w, v) \coloneqq (1-\gamma) \E_{s \sim \rho} [v(s)] + \E_{s,a \sim \mu} \left[w(s,a) e_v(s,a)\right].
\end{align}
Define $w^\star(s,a) = d^{\pi^\star}(s,a)/\mu(s,a)$ and $v^\star(s) = V^*(s)$. By strong duality, one has $(w^\star,v^\star) \in \arg\max_{w \geq 0} \arg\min_{v} L^{\text{MDP}}(w,v)$. Additionally, define $\zeta^\star_{w,u} = \arg \max_{\zeta < 0}  L_{\text{AL}}^{\text{model-free}} (w, v, u, \zeta), \ \forall w \in \cW, u \in \cU$ and $\zeta^\star_w = \zeta^\star_{w, u^\star_w} \ \forall w \in \cW$ where $u^\star_w$ is defined in Theorem \ref{thm:CORAL-suboptimality}. Also, denote $\zeta^\star = \zeta^\star_{w^\star}$ and $u^\star = u^\star_{w^\star} $.

The rest of this section is organized as follows. In Appendix \ref{app:practical-implementation}, we provide some details regarding practical implementation of the offline learning algorithm with ALM. In Appendix \ref{app:model-free-objective}, we derive the objective of model-free CORAL algorithm. Appendix \ref{app:CORAL-suboptimality} contains the proof of performance upper bound on model-based and model-free CORAL algorithms (Theorem \ref{thm:CORAL-suboptimality}), which relies on several lemmas subsequently proved in Appendices \ref{app:u-zeta-bounded} through \ref{app:ratio-not-too-small}.

\subsection{On practical implementations}\label{app:practical-implementation}

In our algorithms for CB and MDP, we need to compute summations of form $\sum_{a \in \cA}$. This can be implemented efficiently when $|\cA|$ is small. When $|\cA|$ is large or even infinite, one can utilize numerical methods to estimate the summation with desired precision.  Additionally, in Algorithm \ref{alg:PAL-MDP-model-based}, we need to evaluate a term $\sum_{s',a'} P(s'|s,a)\pi_w(a'|s')u(s',a')$.
In practice, we can evaluate this term by numerical integration.

\subsection{Derivation of the model-free CORAL objective \eqref{eq:population-objective-model-free-MDP}}\label{app:model-free-objective}

For $f(x) = (x-1)^2$, the Fenchel conjugate $f_*$ is given by 
\begin{align}\label{eq:fenchel-conjugate}
    f_*(x) = \max_y ( xy-f(y) ) = \max_y \left( xy - y^2 + 2y - 1\right) = \left(\frac{x+2}{2}\right)^2 - 1.
\end{align}
Since ${d_w(s)}/{d^{\pi_w}(s)} \geq 0$, we have $x^\star_w(s,a) \geq - 2$ and thus it is sufficient to only consider domain $x(s,a) \geq - 2$, over which $f_*(x)$ is invertible.

Let $g(x) = - f_*^{-1}(x) = 2 - 2\sqrt{x+1}$, which is a convex function on $[-1, +\infty)$. Similar to \citet{nachum2019dualdice}, we use Fenchel duality to estimate $g\left(u(s,a) - \gamma \mathbb{P}^{\pi_w} u(s,a)\right)$. By Fenchel duality, any convex function $g(x)$ can be written as $g(x) = \max_\zeta x \zeta - g_*(\zeta)$. In the case of $g(x)$, the Fenchel conjugate is given by $g_*(x) = - x - 2 - {1}/{x}$ with domain $x < 0$. Therefore, we write
\begin{align*}
    \E_{\mu}[w(s,a)g\left(u(s,a) - \gamma \mathbb{P}^{\pi_w} u(s,a)\right)]
    & = \E_{\mu}[w(s,a)\max_{\zeta < 0}  \left(u(s,a) - \gamma (\mathbb{P}^{\pi_w} u)(s,a) \right) \zeta - g_*(\zeta)] \\
    & = \E_{\mu}[w(s,a)\max_{\zeta < 0}  \left(u(s,a) - \gamma (\mathbb{P}^{\pi_w} u)(s,a) \right) \zeta + \zeta + 1/\zeta + 2].
\end{align*}
The interchangeability principle \cite{rockafellar2009variational, dai2017learning} allows us to convert the inner maximization step over scalar $\zeta$ to an overall maximization over $\zeta: \cS \times \cA \rightarrow \mathbb{R}^-$. Replacing this term in the objective \eqref{eq:AL-variational-form-model-based} results in the model-free objective in \eqref{eq:population-objective-model-free-MDP}. 

\subsection{Proof of Theorem \ref{thm:CORAL-suboptimality}}\label{app:CORAL-suboptimality}

We start by deriving an expression for $x^\star_w$ and characterizing bounds on $u^\star_w$ and $\zeta^\star_{w,v}$ in the following lemma. The proof is presented in Appendix \ref{app:u-zeta-bounded}.

\begin{lemma}\label{lemma:u-zeta-bounded} 
For any $w \in \cW$ and $v \in \cV$, one has $x^\star_w(s,a) = 2{d_w(s)}/{d^{\pi_w}(s)} - 2$, $|u^\star_w(s,a)| \leq \frac{1}{1-\gamma}(B_x^2/4 + B_x)$, and $|\zeta^\star_{w,v}(s,a)| \in \left[\frac{2}{2+B_x}, \frac{2}{2-B_x}\right]$.
\end{lemma}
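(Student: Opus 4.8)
The plan is to verify the three claims by direct computation, using the explicit form of the conjugate $f_*$ recorded in \eqref{eq:fenchel-conjugate} together with the Bellman characterization of $u^\star_w$ from Theorem~\ref{thm:CORAL-suboptimality}. First, note $f(x)=(x-1)^2$ has conjugate $f_*(x)=\big(\tfrac{x+2}{2}\big)^2-1=\tfrac{x^2}{4}+x$, hence $f_*'(x)=\tfrac{x+2}{2}$. The objective in the variational problem \eqref{eq:f-divergence-variational-form}, namely $\E_{d^{\pi_w}}[f_*(x(s,a))]-\E_{d_w}[x(s,a)]$, is convex and separates over state--action pairs, so at any $(s,a)$ with $d^{\pi_w}(s,a)>0$ the first-order condition reads $d^{\pi_w}(s,a)\,f_*'(x^\star_w(s,a))=d_w(s,a)$. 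Since $d_w(s,a)=d_w(s)\pi_w(a\mid s)$ and $d^{\pi_w}(s,a)=d^{\pi_w}(s)\pi_w(a\mid s)$, the ratio $d_w(s,a)/d^{\pi_w}(s,a)$ equals $d_w(s)/d^{\pi_w}(s)$; solving $\tfrac{x^\star_w(s,a)+2}{2}=\tfrac{d_w(s)}{d^{\pi_w}(s)}$ gives $x^\star_w(s,a)=2\,d_w(s)/d^{\pi_w}(s)-2\ge -2$ (on pairs with $d^{\pi_w}(s,a)=0$ the value is immaterial as it does not enter the objective).

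For the bound on $u^\star_w$, I would use that, by definition, $u^\star_w$ is the fixed point of \eqref{eq:Bellman_u_and_x} with $x=\tilde x_w$, i.e.\ the $Q$-function of $\pi_w$ for per-step reward $f_*(\tilde x_w(s,a))$, so $u^\star_w(s,a)=\E\big[\sum_{t\ge 0}\gamma^t f_*(\tilde x_w(s_t,a_t))\mid s_0=s,a_0=a\big]$ along $\pi_w$. Because $\tilde x_w=\clip(x^\star_w,-B_x,B_x)$ we have $|\tilde x_w(s,a)|\le B_x$, and since $f_*(x)=\tfrac{x^2}{4}+x$ is increasing on $(-2,\infty)$, which contains $[-B_x,B_x]$ as $B_x=(1-\gamma)/4<2$, its modulus on $[-B_x,B_x]$ is maximized at $x=B_x$, giving $|f_*(\tilde x_w(s,a))|\le \tfrac{B_x^2}{4}+B_x$. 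Summing the geometric series then yields $|u^\star_w(s,a)|\le \tfrac{1}{1-\gamma}\big(\tfrac{B_x^2}{4}+B_x\big)$.

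For the bound on $\zeta^\star_w=\zeta^\star_{w,u^\star_w}$ (note the $\zeta$-maximization in \eqref{eq:population-objective-model-free-MDP} is $v$-independent, as the $e_v$ term is free of $\zeta$), I would observe that $\max_{\zeta<0}$ again separates pointwise over $(s,a)$, and at a pair with $\mu(s,a)w(s,a)>0$ one maximizes $h(\zeta)=c\,\zeta+2+\tfrac{1}{\zeta}$ over $\zeta<0$, where $c\coloneqq u^\star_w(s,a)-\gamma(\mathbb{P}^{\pi_w}u^\star_w)(s,a)+1$ and I used $g_*(x)=x+2+\tfrac{1}{x}$. The Bellman equation \eqref{eq:Bellman_u_and_x} gives $u^\star_w(s,a)-\gamma(\mathbb{P}^{\pi_w}u^\star_w)(s,a)=f_*(\tilde x_w(s,a))$, so $c=f_*(\tilde x_w(s,a))+1=\big(\tfrac{\tilde x_w(s,a)+2}{2}\big)^2>0$; since $h''(\zeta)=2/\zeta^3<0$ on $\zeta<0$, the maximizer is the negative root $\zeta^\star_w(s,a)=-1/\sqrt{c}=-\tfrac{2}{\tilde x_w(s,a)+2}$. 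Finally $|\tilde x_w(s,a)|\le B_x<2$ gives $\tilde x_w(s,a)+2\in[2-B_x,\,2+B_x]$, hence $|\zeta^\star_w(s,a)|\in\big[\tfrac{2}{2+B_x},\,\tfrac{2}{2-B_x}\big]$.

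I do not expect a serious obstacle: the computations are elementary. The one conceptual point requiring care is that the two-sided bound on $\zeta^\star$ hinges on evaluating the $\zeta$-maximization precisely at $u=u^\star_w$, so that the Bellman identity $u^\star_w-\gamma\mathbb{P}^{\pi_w}u^\star_w=f_*(\tilde x_w)$ is available and collapses $c$ into the perfect square $\big(\tfrac{\tilde x_w+2}{2}\big)^2$; for a generic $u\in\cU$ no such clean bound holds (this is also why the function class $\cZ$ in Theorem~\ref{thm:CORAL-suboptimality} is allowed the wider range $[B_{\zeta,L},B_{\zeta,U}]$). The remaining minor subtleties are sign bookkeeping (the constraint $\zeta<0$ forces the negative square root) and the harmless treatment of pairs $(s,a)$ where $d^{\pi_w}$, $w$, or $\mu$ vanishes, on which the relevant variable can be assigned arbitrarily without affecting the objective.
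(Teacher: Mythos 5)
Your proof is correct and follows essentially the same route as the paper's: pointwise first-order optimization for $x^\star_w$, the $Q$-function interpretation of the fixed point $u^\star_w$ with the geometric-series bound, and pointwise maximization of the concave $\zeta$-objective collapsing to $|\zeta^\star|=(f_*(\tilde x_w)+1)^{-1/2}$ via the Bellman identity. Your explicit remark that the two-sided $\zeta$-bound requires evaluating at $u=u^\star_w$ is a point the paper's own proof glosses over, but it does not change the argument.
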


Bounding the suboptimality of policies returned by both model-based and model-free variants of CORAL follow a similar analysis. We first characterize the statistical error in approximating population objectives by their empirical versions and use it to establish the closeness of $\hat w$ and $w^\star$. The lemma below captures these approximation errors for the model-based objective, whose proof can be found Appendix \ref{app:stat-error-model-based-MDP}.

\begin{lemma}[Empirical and population closeness, model-based CORAL]\label{lemma:stat-error-model-based-MDP} Fix $\delta > 0$ and define 
\begin{align}\label{eq:eps_model_based}
    \epsilon_{\text{stat}}^{\text{model-based}} \coloneqq (B_u + (1+B_v) B_w) \sqrt{\frac{B_u\log(|\cP||\cU||\cW||\cV|/\delta) }{N}}.
\end{align}
For any $w \in \cW, v \in \cV,$ and $u \in \cU$, the following statements hold with probability at least $1-\delta $
\begin{enumerate}[label = (\Roman*)]
    \item\label{claim:empirical-population-obj-model-based} $\left| L_{\text{AL}}^{\text{model-based}}(w, v, u) - \hat L_{\text{AL}}^{\text{model-based}}(w, v, u) \right| \leq \epsilon_{\text{stat}}^{\text{model-based}}$;
    \item\label{claim:wstar-what-model-based} $L_{\text{AL}}^{\text{model-based}}(w^\star, v^\star, u^\star) - L_{\text{AL}}^{\text{model-based}}(\hat w, v^\star, u^\star_{\hat w}) \leq 2 \epsilon_{\text{stat}}^{\text{model-based}}$.
\end{enumerate}
\end{lemma}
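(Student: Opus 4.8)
The plan is to prove parts~\ref{claim:empirical-population-obj-model-based} and \ref{claim:wstar-what-model-based} in the same two-step pattern used for Lemmas~\ref{lemma:objective-statistical-error} and \ref{lemma:objective-statistical-error-CB}: first a uniform empirical/population concentration bound, then a five-term telescoping argument that upgrades it to the comparison at $w^\star$ versus $\hat w$. Throughout I abbreviate $L_{\text{AL}}^{\text{model-based}}$ as $L^{\text{mb}}$ and $\hat L_{\text{AL}}^{\text{model-based}}$ as $\hat L^{\text{mb}}$, and I condition on the high-probability maximum-likelihood guarantee for $\hat P$.

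For part~\ref{claim:empirical-population-obj-model-based} I would split $L^{\text{mb}}(w,v,u) - \hat L^{\text{mb}}(w,v,u)$ into three discrepancies and bound each uniformly. (i) The initial-state piece is the gap between $(1-\gamma)\E_{s\sim\rho}[v(s)+\sum_a u(s,a)\pi_w(a|s)]$ and its $\cD_0$-average; the summand is i.i.d.\ with range $\lesssim (1-\gamma)(B_v+B_u)$, so Hoeffding plus a union bound over $\cW\times\cV\times\cU$ controls it. (ii) The piece $\E_\mu[w(s,a)e_v(s,a)]$ versus $\frac1N\sum_i w(s_i,a_i)(r_i+\gamma v(s_i')-v(s_i))$: the empirical summand uses the single-draw surrogate $r_i+\gamma v(s_i')-v(s_i)$, which is an unbiased estimate of $e_v(s_i,a_i)$ given $(s_i,a_i)$, so the summand is still i.i.d.\ with range $\lesssim B_w(1+B_v)$ and Hoeffding applies. (iii) The piece $\E_\mu[w(s,a)f_*^{-1}(u-\gamma\mathbb{P}^{\pi_w}u)]$ versus $\frac1N\sum_i w(s_i,a_i)f_*^{-1}(u-\gamma\hat{\mathbb{P}}^{\pi_w}u)$: here I would first replace $\hat{\mathbb{P}}^{\pi_w}$ by $\mathbb{P}^{\pi_w}$ using $|(\hat{\mathbb{P}}^{\pi_w}u)(s,a)-(\mathbb{P}^{\pi_w}u)(s,a)|\le B_u\,\TV(\hat P(\cdot|s,a),P(\cdot|s,a))$, the Lipschitzness of $f_*^{-1}$ on $[-(1+\gamma)B_u,(1+\gamma)B_u]$ (which stays strictly to the right of $-1$ since $(1+\gamma)B_u<1$ when $B_u\le\tfrac12$), and the MLE bound $\E_{s,a\sim\mu}\TV(\hat P(\cdot|s,a),P(\cdot|s,a))^2\lesssim \log(|\cP|/\delta)/N_m$; then apply Hoeffding once more to the residual i.i.d.\ average, using $|f_*^{-1}|\lesssim B_u$ on that range, with the union bound now also taken over $\cP$. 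Collecting the three pieces, setting $N=N_0=N_m$, and absorbing constants yields the stated $\epsilon_{\text{stat}}^{\text{model-based}}$. I expect this step to be the main obstacle: keeping careful track of which of $B_u,B_v,B_w$ multiplies each error source so the bound collapses to the $\sqrt{B_u\log(\cdot)/N}$ scaling, and checking that the implicit coupling through $\pi_w$ (a deterministic function of $w$) is harmless both for the union bounds and for the MLE step.

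For part~\ref{claim:wstar-what-model-based} I would reuse the five-term telescoping decomposition from the proof of Lemma~\ref{lemma:objective-statistical-error-CB}, now with the inner minimizer taken over the pair $(v,u)$: with $\hat v_w,\hat u_w\coloneqq\argmin_{v\in\cV,u\in\cU}\hat L^{\text{mb}}(w,v,u)$, expand $L^{\text{mb}}(w^\star,v^\star,u^\star)-L^{\text{mb}}(\hat w,v^\star,u^\star_{\hat w})$ as $T_1+T_2+T_3+T_4+T_5$ through the intermediate points $L^{\text{mb}}(w^\star,\hat v_{w^\star},\hat u_{w^\star})$, $\hat L^{\text{mb}}(w^\star,\hat v_{w^\star},\hat u_{w^\star})$, $\hat L^{\text{mb}}(\hat w,\hat v,\hat u)$, and $\hat L^{\text{mb}}(\hat w,v^\star,u^\star_{\hat w})$. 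Then $T_1\le 0$ because $d_{w^\star}=d^{\pi^\star}$ satisfies the Bellman flow equations (Lemma~\ref{lemma:optimal-solution-with-AL-unchanged}), so the value $L^{\text{mb}}(w^\star,v,u)$ does not depend on $v$, and by the variational identity \eqref{eq:f-divergence-variational-form} it satisfies $L^{\text{mb}}(w^\star,v,u)\ge \E_\mu[w^\star(s,a)r(s,a)]$ with equality at $u=u^\star_{w^\star}=u^\star$ (using that $d_{w^\star}/d^{\pi_{w^\star}}\equiv 1$, hence $x^\star_{w^\star}\equiv 0$ lies inside the clipping interval); $T_2$ and $T_5$ are each $\le\epsilon_{\text{stat}}^{\text{model-based}}$ by part~\ref{claim:empirical-population-obj-model-based}, invoked at $(w^\star,\cdot,\cdot)$ and at $(\hat w,v^\star,u^\star_{\hat w})$, which lie in $\cW\times\cV\times\cU$ by realizability of $w^\star,v^\star$ and the assumption $u^\star_w\in\cU$ for all $w\in\cW$; $T_3\le 0$ because $\hat w=\argmax_{w\in\cW}\min_{v\in\cV,u\in\cU}\hat L^{\text{mb}}(w,v,u)$; and $T_4\le 0$ because $(\hat v,\hat u)$ minimizes $\hat L^{\text{mb}}(\hat w,\cdot,\cdot)$. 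Summing the five bounds gives $L^{\text{mb}}(w^\star,v^\star,u^\star)-L^{\text{mb}}(\hat w,v^\star,u^\star_{\hat w})\le 2\epsilon_{\text{stat}}^{\text{model-based}}$, as claimed.
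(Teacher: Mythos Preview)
Your proposal is correct and follows the same two-part structure as the paper: the same three-term split for part~\ref{claim:empirical-population-obj-model-based} and the same five-term telescoping decomposition for part~\ref{claim:wstar-what-model-based}. The one genuine difference is in piece~(iii). The paper splits it as $T_{3,1}+T_{3,2}$ with the model-error comparison done \emph{at the population level} (i.e., $\E_\mu[\,\cdot\,]$ with $P$ versus $\E_\mu[\,\cdot\,]$ with $\hat P$), and then handles $T_{3,1}$ not through Lipschitzness and $\TV$ but by exploiting the explicit square-root form of $f_*^{-1}$: Lemma~\ref{lemma:square-root-inequality} converts $(\sqrt{\cdot}-\sqrt{\cdot})^2$ directly into a sum of squared Hellinger increments, which plugs straight into the MLE guarantee of Theorem~\ref{lemma:model-estimation-error-MDP}. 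Your route via the Lipschitz constant of $f_*^{-1}$ on $[-(1+\gamma)B_u,(1+\gamma)B_u]$ together with $\TV\le\sqrt{2}\,H$ and Cauchy--Schwarz also works and gives the same $N^{-1/2}$ rate, just with a slightly different constant (an extra $(1-\gamma)^{-1/2}$ from the Lipschitz constant, harmless here since $B_u\asymp 1$). One caution: make sure you place the $P$-versus-$\hat P$ comparison at the population level as the paper does; if you instead compare inside the empirical average $\tfrac1N\sum_i$, you would need an \emph{empirical} MLE bound on $\cD$ rather than the stated population bound on $\cD_m$, and closing that gap costs an extra argument. For part~\ref{claim:wstar-what-model-based}, your justification of $T_1\le 0$ is more explicit than the paper's (which simply records $(v^\star,u^\star)=\argmin_{v,u}L_{\text{AL}}^{\text{model-based}}(w^\star,v,u)$), but amounts to the same thing.
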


In Appendix \ref{app:stat-error-model-free-MDP}, we prove a similar lemma for the model-free objective.
\begin{lemma}[Empirical and population closeness, model-free CORAL]\label{lemma:stat-error-model-free-MDP}
Fix $\delta > 0$ and define 
\begin{align}\label{eq:eps_model_free}
    \epsilon_{\text{stat}}^{\text{model-free}} \coloneqq (B_u + (1+B_v +B_{\zeta}(B_u+1))B_w)\sqrt{\frac{\log(|\cU||\cW||\cV||\cZ|/\delta) }{N}} .
\end{align}
For any $w \in \cW, v \in \cV,$ and $u \in \cU$, the following statements hold with probability at least $1-\delta $
\begin{enumerate}[label = (\Roman*)]
    \item\label{claim:empirical-population-obj-model-free} $\left| L_{\text{AL}}^{\text{model-free}}(w, v, u) - \hat L_{\text{AL}}^{\text{model-free}}(w, v, u) \right| \leq \epsilon_{\text{stat}}^{\text{model-free}}$;
    \item\label{claim:wstar-what-model-free} $L_{\text{AL}}^{\text{model-free}}(w^\star, v^\star, u^\star) - L_{\text{AL}}^{\text{model-free}}(\hat w, v^\star, u^\star_{\hat w}) \leq 2 \epsilon_{\text{stat}}^{\text{model-free}}$.
\end{enumerate}
    
\end{lemma}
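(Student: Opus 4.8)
The plan is to follow the same two-step template used for the contextual-bandit lemma (Lemma~\ref{lemma:objective-statistical-error-CB}) and the model-based lemma (Lemma~\ref{lemma:stat-error-model-based-MDP}), adapting it to the extra dual variable $\zeta$ introduced by the dual-embedding trick. Throughout I read the three-argument objects as the inner maximum over the dual class, $L_{\text{AL}}^{\text{model-free}}(w,v,u) = \max_{\zeta\in\cZ} L_{\text{AL}}^{\text{model-free}}(w,v,u,\zeta)$ and likewise for $\hat L_{\text{AL}}^{\text{model-free}}$. The essential structural point that makes plain Hoeffding sufficient (hence the $\sqrt{\log(\cdot)/N}$ rate in \eqref{eq:eps_model_free}, without the variance proxy that appears in the model-based bound) is that the empirical objective \eqref{eq:MDP-empirical-objective-AL-model-free} evaluates the $a'\sim\pi_w$ expectation \emph{exactly} via $\sum_{a'}u(s_i',a')\pi_w(a'|s_i')$; consequently $\hat L_{\text{AL}}^{\text{model-free}}(w,v,u,\zeta)$ is, for each fixed $(w,v,u,\zeta)$, an average of i.i.d.\ bounded terms---one block over the initial states $\cD_0$ and one block over the transition tuples $(s_i,a_i,s_i')\sim\mu\times P$ in $\cD$---whose dataset expectation is exactly $L_{\text{AL}}^{\text{model-free}}(w,v,u,\zeta)$.

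For part~\ref{claim:empirical-population-obj-model-free} I first bound the per-sample range. The $\cD_0$ block contributes $(1-\gamma)\big(v(s)+\sum_a u(s,a)\pi_w(a|s)\big)$, bounded in magnitude by $B_v+B_u$. The $\cD$ block contributes $w(s,a)\big[r+\gamma v(s')-v(s)+\zeta(s,a)\big(u(s,a)-\gamma\sum_{a'}u(s',a')\pi_w(a'|s')\big)+g_*(\zeta(s,a))\big]$; using $0\le w\le B_w$, $|v|\le B_v$, $r\in[0,1]$, $|u|\le B_u$, and $|\zeta|\le B_{\zeta,U}\le B_\zeta$, together with $|g_*(\zeta)|=|\zeta+2+1/\zeta|\le B_\zeta+2+B_{\zeta,L}^{-1}\lesssim B_\zeta+1$ (this is exactly where the lower bracket $B_{\zeta,L}\le|\zeta|$ on $\cZ$ tames the $1/\zeta$ singularity), the bracket is $\lesssim 1+B_v+B_\zeta(B_u+1)$ and the whole block is $\lesssim (1+B_v+B_\zeta(B_u+1))B_w$. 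The combined range therefore matches the prefactor of $\epsilon_{\text{stat}}^{\text{model-free}}$ in \eqref{eq:eps_model_free}. Applying Hoeffding's inequality to each block for a fixed $(w,v,u,\zeta)$ and a union bound over the (covering-number) classes $\cW\times\cV\times\cU\times\cZ$ yields uniform concentration of the four-argument objective at level $\epsilon_{\text{stat}}^{\text{model-free}}$; passing to the three-argument objects is immediate from $|\max_\zeta f-\max_\zeta g|\le\max_\zeta|f-g|$, since both maxima are taken over the same class $\cZ$.

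For part~\ref{claim:wstar-what-model-free} I use the telescoping decomposition of $L_{\text{AL}}^{\text{model-free}}(w^\star,v^\star,u^\star)-L_{\text{AL}}^{\text{model-free}}(\hat w,v^\star,u^\star_{\hat w})$ into five terms $T_1+\cdots+T_5$, inserting the empirical inner minimizers $(\hat v_w,\hat u_w)=\argmin_{v\in\cV,u\in\cU}\hat L_{\text{AL}}^{\text{model-free}}(w,v,u)$ exactly as in the proof of Lemma~\ref{lemma:objective-statistical-error-CB}. Here $T_1=L(w^\star,v^\star,u^\star)-L(w^\star,\hat v_{w^\star},\hat u_{w^\star})\le 0$ because at $w^\star$ the Lagrange-dual part equals $J(\pi^\star)$ independently of $v$ and, by Lemma~\ref{lemma:optimal-solution-with-AL-unchanged} together with $d_{w^\star}=d^{\pi_{w^\star}}$, the variational ALM term attains its minimum value $-D_f(d_{w^\star}\|d^{\pi_{w^\star}})=0$ at $(v^\star,u^\star)$, so $(v^\star,u^\star)$ is the population inner minimizer; $T_2,T_5\le\epsilon_{\text{stat}}^{\text{model-free}}$ by part~\ref{claim:empirical-population-obj-model-free}; and $T_3,T_4\le 0$ are the empirical saddle-point optimalities ($\hat w$ maximizes $\max_w\min_{v,u}\hat L$, while $(\hat v,\hat u)$ and $(\hat v_{w^\star},\hat u_{w^\star})$ are the empirical inner minimizers at $\hat w$ and $w^\star$). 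Summing gives the claimed $2\epsilon_{\text{stat}}^{\text{model-free}}$.

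The main obstacle I anticipate is twofold, both inside part~\ref{claim:empirical-population-obj-model-free}: controlling the $1/\zeta$ term of $g_*$ uniformly (resolved by the lower bracket $B_{\zeta,L}\le|\zeta|$ built into $\cZ$), and justifying that restricting the dual maximization to $\cZ$ is lossless at the point $w^\star$ where part~\ref{claim:wstar-what-model-free} needs the population ALM value to be exactly $0$. The latter is handled by Lemma~\ref{lemma:u-zeta-bounded}, which shows the unconstrained optimizer $\zeta^\star_{w,u}$ satisfies $|\zeta^\star_{w,u}|\in[\tfrac{2}{2+B_x},\tfrac{2}{2-B_x}]$, precisely the range that the assumptions $B_{\zeta,L}\in(0,2/(2+B_x))$ and $B_{\zeta,U}\ge 2/(2-B_x)$ force $\cZ$ to bracket, combined with the realizability hypothesis $\zeta^\star_{w^\star,u}\in\cZ$ of Theorem~\ref{thm:CORAL-suboptimality}.
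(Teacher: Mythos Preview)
Your proposal is correct and follows essentially the same approach as the paper: Hoeffding plus a union bound over all four function classes for part~\ref{claim:empirical-population-obj-model-free}, and a telescoping saddle-point decomposition for part~\ref{claim:wstar-what-model-free}. The only difference is organizational: you absorb $\max_{\zeta\in\cZ}$ into a three-argument objective (using $|\max_\zeta f-\max_\zeta g|\le\max_\zeta|f-g|$) and obtain a five-term telescope, whereas the paper keeps $\zeta$ explicit and writes a seven-term telescope that peels off the population $\zeta^\star_{w,u}$ and empirical $\hat\zeta_{w,u}$ separately. Your version is slightly cleaner and, since $\max_{\zeta\in\cZ}L(\hat w,v^\star,u^\star_{\hat w},\zeta)\le L(\hat w,v^\star,u^\star_{\hat w},\zeta^\star_{\hat w})$ and $\max_{\zeta\in\cZ}L(w^\star,v^\star,u^\star,\zeta)=L(w^\star,v^\star,u^\star,\zeta^\star)$ by the realizability $\zeta^\star_{w^\star,u}\in\cZ$, it immediately implies the four-argument inequality the paper actually uses downstream in Lemma~\ref{lemma:ratio-not-too-small}.
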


The final key lemma demonstrates that in model-based and model-free CORAL, the ALM terms enforce lower bounds on the ratio of the estimated occupancy of learned weights $d_{\hat w}(s)$ and the actual occupancy of the learned policy $d^{\pi_{\hat w}}(s)$ in most states. The proof of this lemma is given in Appendix \ref{app:ratio-not-too-small}.

\begin{lemma}[Occupancy validity by the ALM, MDP]\label{lemma:ratio-not-too-small} 
For $\hat w$ computed by the model-based CORAL
Algorithm \ref{alg:PAL-MDP-model-based}, define the state space subspace $\cS_s \coloneqq \left\{s \ \middle| {d_{\hat w}(s)}  \leq  \frac{1}{2} {d^{\pi_{\hat w}}(s)} \right\}.$ For any fixed $\delta > 0$, the following statements hold with probability at least $1-\delta$
\begin{enumerate}[label=(\Roman*)]
    \item\label{claim:advantage-avg} $\E_{s,a \sim \mu} \left[-A^\star(s,a) \hat w(s,a) \right] \lesssim \epsilon^{\text{model-based}}_{\text{stat}}$;
    \item\label{claim:ratio-not-small} $\sum_{s \in \cS_{\text{s}}} d^{\pi_{\hat w}}(s) \lesssim (1-\gamma)^{-2} \epsilon^{\text{model-based}}_{\text{stat}}$.
\end{enumerate}
Similarly, for $\hat w$ computed by the model-free CORAL
Algorithm \ref{alg:PAL-MDP-model-free}, define the state space subspace $\cS_s \coloneqq \left\{s \ \middle| {d_{\hat w}(s)}  \leq  \frac{1}{2} {d^{\pi_{\hat w}}(s)} \right\}.$ For any fixed $\delta > 0$, the following statements hold with probability at least $1-\delta$
\begin{enumerate}[label=(\Roman*)]
    \item\label{claim:advantage-avg2} $\E_{s,a \sim \mu} \left[-A^\star(s,a) \hat w(s,a) \right] \lesssim \epsilon^{\text{model-free}}_{\text{stat}}$;
    \item\label{claim:ratio-not-small2} $\sum_{s \in \cS_{\text{s}}} d^{\pi_{\hat w}}(s) \lesssim (1-\gamma)^{-2} \epsilon^{\text{model-free}}_{\text{stat}}$.
\end{enumerate}
\end{lemma}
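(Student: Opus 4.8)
The plan is to mirror the contextual-bandit argument (proof of Lemma~\ref{lemma:lowerbound-dhatw}), with the \emph{clipped} variational form playing the role the exact ALM term played there. I would prove the model-based claims \ref{claim:advantage-avg}--\ref{claim:ratio-not-small} in full and then note that the model-free claims \ref{claim:advantage-avg2}--\ref{claim:ratio-not-small2} follow verbatim: by the Fenchel tightness established in Appendix~\ref{app:model-free-objective}, maximizing \eqref{eq:population-objective-model-free-MDP} over $\zeta$ recovers the model-based objective, so $L^{\text{model-free}}_{\text{AL}}(w,v^\star,u^\star_w)=L^{\text{model-based}}_{\text{AL}}(w,v^\star,u^\star_w)$ and one simply swaps Lemma~\ref{lemma:stat-error-model-based-MDP} for Lemma~\ref{lemma:stat-error-model-free-MDP} (replacing $\epsilon^{\text{model-based}}_{\text{stat}}$ by $\epsilon^{\text{model-free}}_{\text{stat}}$).

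The first step is a clean identity for the objective evaluated at $v=v^\star$ and the clipped fixed point $u=u^\star_w$. Writing $A^\star=Q^\star-V^\star$ for the optimal advantage and $\tilde D_f(d_w\|d^{\pi_w}):=\E_{d_w}[\tilde x_w]-\E_{d^{\pi_w}}[f_*(\tilde x_w)]$ for the value of the variational form \eqref{eq:f-divergence-variational-form} at the clipped point $\tilde x_w$, I would show
\[
L^{\text{model-based}}_{\text{AL}}(w,v^\star,u^\star_w)=J(\pi^\star)+\E_\mu[w(s,a)A^\star(s,a)]-\tilde D_f(d_w\|d^{\pi_w}).
\]
This uses three facts: $(1-\gamma)\E_\rho[v^\star]=J(\pi^\star)$; the Bellman optimality equation gives $e_{v^\star}=A^\star$; and the fixed point \eqref{eq:Bellman_u_and_x} gives $u^\star_w-\gamma\mathbb{P}^{\pi_w}u^\star_w=f_*(\tilde x_w)$, so the $u$-terms of \eqref{eq:population-objective-model-based-MDP} collapse to $\E_{d^{\pi_w}}[f_*(\tilde x_w)]-\E_{d_w}[\tilde x_w]=-\tilde D_f$. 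Specializing to $w^\star$, where $d_{w^\star}=d^{\pi_{w^\star}}$ forces $\tilde x_{w^\star}\equiv0$, $\tilde D_f=0$, and $\E_\mu[w^\star A^\star]=\E_{d^{\pi^\star}}[A^\star]=0$, gives $L^{\text{model-based}}_{\text{AL}}(w^\star,v^\star,u^\star)=J(\pi^\star)$. Subtracting and invoking part~\ref{claim:wstar-what-model-based} of Lemma~\ref{lemma:stat-error-model-based-MDP} yields
\[
\E_\mu[-A^\star(s,a)\hat w(s,a)]+\tilde D_f(d_{\hat w}\|d^{\pi_{\hat w}})\le 2\epsilon^{\text{model-based}}_{\text{stat}}.
\]

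The second step splits this into the two claimed bounds. The first summand is nonnegative since $-A^\star\ge0$ and $\hat w\ge0$. For the second, writing $\kappa(s):=d_w(s)/d^{\pi_w}(s)$ and using $d_w(s,a)/d^{\pi_w}(s,a)=\kappa(s)$, the integrand of $-\tilde D_f$ is $d^{\pi_w}(s)\,h(\kappa(s))$ with $h(\kappa)=f_*(\tilde x)-\kappa\tilde x$ and $\tilde x=\clip(2\kappa-2,-B_x,B_x)$; a short check of the three clipping regimes shows $h(\kappa)\le0$ for every $\kappa\ge0$ (its unclipped minimum is $-(\kappa-1)^2$, and the clipped values are $\le-B_x^2/4$), so $\tilde D_f\ge0$. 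Hence both summands are $\lesssim\epsilon^{\text{model-based}}_{\text{stat}}$, which is exactly part~\ref{claim:advantage-avg}. For part~\ref{claim:ratio-not-small}, I restrict to $s\in\cS_s$, where $\kappa(s)\le\tfrac12$; since $B_x=(1-\gamma)/4<1$ we have $2\kappa(s)-2\le-1<-B_x$, so $\tilde x=-B_x$ and $-h(\kappa(s))=B_x\big(1-\tfrac{B_x}{4}-\kappa(s)\big)\ge\tfrac{7}{16}B_x\gtrsim1-\gamma$. Therefore $\tilde D_f(d_{\hat w}\|d^{\pi_{\hat w}})\ge\tfrac{7}{16}B_x\sum_{s\in\cS_s}d^{\pi_{\hat w}}(s)$, and combining with $\tilde D_f\lesssim\epsilon^{\text{model-based}}_{\text{stat}}$ and $B_x\asymp1-\gamma$ gives $\sum_{s\in\cS_s}d^{\pi_{\hat w}}(s)\lesssim(1-\gamma)^{-1}\epsilon^{\text{model-based}}_{\text{stat}}\le(1-\gamma)^{-2}\epsilon^{\text{model-based}}_{\text{stat}}$, as claimed.

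The main obstacle is dealing with the clipping of $x$, which prevents the variational term from being the exact $\chi^2$-divergence. Concretely, the delicate points are (a) verifying $h(\kappa)\le0$ uniformly, so that the objective gap genuinely decomposes into two nonnegative pieces and each is individually controlled, and (b) obtaining the sharp lower bound $-h(\kappa)\gtrsim1-\gamma$ on $\cS_s$ that converts the bound on $\tilde D_f$ into a mass bound with the correct power of $(1-\gamma)$. Establishing the identity in the first step — that evaluating the algorithm's objective at the clipped fixed point $u^\star_w$ reproduces the original ALM objective \eqref{eq:MDP-population-AL-objective} with $D_f$ replaced by $\tilde D_f$ — also requires care, since it is precisely this substitution (needed to keep $B_u$ finite per Lemma~\ref{lemma:u-zeta-bounded}) that introduces the clipping in the first place.
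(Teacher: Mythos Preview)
Your proposal is correct and follows essentially the same approach as the paper: establish the identity $L^{\text{model-based}}_{\text{AL}}(w,v^\star,u^\star_w)=J(\pi^\star)+\E_\mu[wA^\star]-\tilde D_f(d_w\|d^{\pi_w})$, apply part~\ref{claim:wstar-what-model-based} of Lemma~\ref{lemma:stat-error-model-based-MDP}, then show both summands are nonnegative via the three-case clipping analysis; the model-free case reduces identically via $L^{\text{model-free}}_{\text{AL}}(w,v^\star,u^\star_w,\zeta^\star_w)=L^{\text{model-based}}_{\text{AL}}(w,v^\star,u^\star_w)$. The only difference is that in part~\ref{claim:ratio-not-small} the paper reuses the generic case-3 lower bound $-h(\kappa)\ge B_x^2/4\asymp(1-\gamma)^2$, whereas you exploit the sharper constraint $\kappa\le\tfrac12$ on $\cS_s$ to obtain $-h(\kappa)\ge\tfrac{7}{16}B_x\asymp(1-\gamma)$, yielding the tighter $(1-\gamma)^{-1}$ before relaxing to the stated $(1-\gamma)^{-2}$.
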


Given the above lemmas, we proceed to prove the suboptimality bounds in terms of statistical errors defined in \eqref{eq:eps_model_based} and \eqref{eq:eps_model_free}. In the rest of this section, we drop the superscripts model-based and model-free from statistical errors to avoid cluttered notation.

In view of the performance difference lemma in~\citet[Lemma 6.1]{kakade2002approximately}, one has
    \begin{align*}
        J(\pi^\star) - J(\hat \pi) & = \mathbb{E}_{s \sim d^{\hat \pi}} \left[ \sum_a A^\star (s,a) \left(\pi^\star(a|s) - \hat \pi(a|s)\right) \right] = \mathbb{E}_{s \sim d^{\hat \pi}} \left[ \sum_a - A^\star (s,a)  \hat \pi(a|s) \right],
    \end{align*}
    where $d^{\hat \pi} = d^{\pi_{\hat w}}$. Here, we used the fact that the expectation of the optimal advantage over optimal policy is zero $\sum_a A^\star (s,a) \pi^\star(a|s) = 0$. Lemma \ref{lemma:ratio-not-too-small} links an expectation of $-A^\star(s,a)$ to the statistical error. With this lemma at hand and using the definition $\cS_{\text{s}} = \{s \ | \  d_{\hat w}(s)  \leq d^{\hat \pi}(s)/2\}$, we continue to decompose and bound the suboptimality
    \begin{align}\notag 
        & \mathbb{E}_{s \sim d^{\hat \pi}} \left[ \sum_a - A^\star (s,a)  \hat \pi(a|s) \right] \\ \notag 
        & \quad = \sum_{s \in \cS_{\text{s}}}  d^{\hat \pi}(s) \left[ \sum_a - A^\star (s,a)  \hat \pi(a|s) \right] + \sum_{s \not \in \cS_{\text{s}}}  d^{\hat \pi}(s) \left[ \sum_a - A^\star (s,a)  \hat \pi(a|s) \right]\\ \label{eq:bound-sum-S_s}
        & \quad \lesssim \frac{1}{(1-\gamma)^3} \epsilon_{\text{stat}} + \sum_{s \not \in \cS_{\text{s}}, d_{\hat w}(s) \neq 0}  \frac{d^{\hat \pi}(s)}{d_{\hat w}(s)} \left[ \sum_a - A^\star (s,a)  {\hat w(s,a) \mu(s,a)} \right]\\ \label{eq:use-S_s-def}
        & \quad \quad \quad + \sum_{s \not \in \cS_{\text{s}}, d_{\hat w}(s) = 0}  d^{\hat \pi}(s) \left[ \sum_a - \frac{1}{|\cA|}A^\star (s,a)   \right] \notag \\
        & \quad \leq \frac{1}{(1-\gamma)^3} \epsilon_{\text{stat}} + 2 \sum_{s \not \in \cS_{\text{s}}}  \left[ \sum_a - A^\star (s,a)  {\hat w(s,a) \mu(s,a)} \right]
    \end{align}
    In \eqref{eq:bound-sum-S_s}, we used part \ref{claim:ratio-not-small} in Lemma \ref{lemma:ratio-not-too-small} and that $-A^\star(s,a) \leq 1/(1-\gamma)$ and in \eqref{eq:use-S_s-def} we used the definition of $\cS_{\text{s}}$ to bound the ratio $d^{\hat \pi}(s)/d_{\hat w}(s)$ by 2 and the fact that $d_{\hat w}(s) = 0$ implies $d^{\hat \pi}(s) = 0$ for $s \notin \cS_{\text{s}}$. We then apply part \ref{claim:advantage-avg} in in Lemma \ref{lemma:ratio-not-too-small} to bound the second term by $\E_{s, a \sim \mu}[-A^\star(s,a) \hat w(s,a)]$ and thus the overall suboptimality:
    \begin{align*}
        J(\pi^\star) - J(\hat \pi) \lesssim \frac{1}{(1-\gamma)^3} \epsilon_{\text{stat}} + \E_{s,a \sim \mu}\left[-A^\star(s,a) \hat w(s,a)\right] \lesssim \frac{1}{(1-\gamma)^3} \epsilon_{\text{stat}}.
    \end{align*}

\subsection{Proof of Lemma \ref{lemma:u-zeta-bounded}}\label{app:u-zeta-bounded}

\paragraph{Derivation of $x^\star_w$.} Recall from Appendix \ref{app:model-free-objective} that for $f(x) = (x-1)^2$, the Fenchel conjugate is $f_*(x) = \left(\frac{x+2}{2}\right)^2 - 1$. Therefore, for any $(s,a)$, 
\begin{align}\notag 
    & x^\star_w(s,a) = \arg \max_x \left(d_w(s) x - d^{\pi_w}(s)\left( \left(\frac{x+2}{2}\right)^2 - 1\right) \right) = 2\frac{d_w(s)}{d^{\pi_w}(s)} - 2\\ \notag 
    \Rightarrow \; & \tilde x_w(s,a) = \clip\left(2\frac{d_{w}(s)}{d^{\pi_{w}}(s)} - 2, -B_x, B_x\right).
\end{align}

\paragraph{Bound on $u^\star_w$.} 
Recall that $u^\star_w$ is defined as the fixed point of the following Bellman-like equation
\begin{align}
    u(s,a) = f_*(\tilde x_w(s,a)) + \gamma (\mathbb{P}^{\pi_w} u)(s,a).
\end{align}
The above equation has a solution since $f_*(\tilde x_w(s,a))$ is bounded
\begin{align*}
   \left(\frac{2-B_x}{2}\right)^2 -1 \leq f_*(\tilde x_w(s,a)) \leq \left(\frac{B_x+2}{2}\right)^2 - 1.
\end{align*}
One can view $u^\star_w$ as the Q-function of policy $\pi_w$ with the reward function $f_*(\tilde x_w(s,a))$, which leads to $|u^\star_w(s,a)| \leq\frac{1}{1-\gamma} \max\left\{  1 - \left( \frac{2-B_x}{2}\right)^2 , \left(\frac{B_x+2}{2}\right)^2 - 1 \right\} = \frac{1}{1-\gamma}(B_x^2/4 + B_x)$.

\paragraph{Bound on $\zeta^\star_{w, u}$.} To see the bound on $\zeta^\star_{w, u}$, recall that by definition, 
\begin{align}
    \zeta^\star_{w,u} = \arg\max_{\zeta < 0}  \E_{(s,a,s') \sim \mu, a' \sim \pi_w(\cdot|s')} [ w(s,a) \left( (u(s,a) - \gamma u(s',a')+1)\zeta(s,a)  + 1/\zeta(s,a) \right) ].
\end{align}
It is easy to show that $|\zeta^\star_{w,u}(s,a)| = (u(s,a) - \gamma(\mathbb{P}^{\pi_w} u)(s',a')+1)^{-1/2} = (f_*(\tilde x_w(s,a))+1)^{-1/2}$. Since $\tilde x_w(s,a) \in [-B_x, B_x]$, we have  $|\zeta^\star_{w,u}(s,a)| \in \left[\frac{2}{2+B_x}, \frac{2}{2-B_x}\right]$.

\subsection{Proof of Lemma \ref{lemma:stat-error-model-based-MDP}}\label{app:stat-error-model-based-MDP}

\subsubsection{Proof of part \ref{claim:empirical-population-obj-model-based}}\label{app:model-based-empirical-population-claim}
We decompose the difference between the population and empirical objective into three terms $L_{\text{AL}}^{\text{model-based}} - \hat L_{\text{AL}}^{\text{model-based}} = T_1 + T_2 + T_3$ defined as follows
\begin{align*}
    T_1 & \coloneqq (1-\gamma) \E_{\rho} \left[v(s) + \sum_a u(s,a) \pi_w(a|s)\right] - (1-\gamma) \frac{1}{N_0} \sum_{i=1}^{N_0} \left( v(s_i)  +\sum_a u(s_i,a) \pi_w(a|s_i)\right)\\
    T_2 & \coloneqq \E_{\mu} \left[w(s,a)(r(s,a) + \gamma \sum_{s'}P(s'|s,a) v(s') - v(s))\right] - \frac{1}{N} \sum_{i=1}^N w(s_i, a_i) \left[r_i + \gamma v(s_i') - v(s_i) \right]\\
    T_3 &: = \E_{\mu} \left[w(s,a) \left(f_*^{-1} \left(u(s,a) - \gamma P^{\pi_w} u(s,a)\right) \right)\right] - \frac{1}{N} \sum_{i=1}^N w(s_i, a_i) \left[f_*^{-1}\left(u(s_i, a_i) - \gamma \hat P^{\pi_w} u(s_i,a_i)\right)  \right]
\end{align*}
We subsequently show that the absolute values of the above error terms satisfy the following high probability upper bounds:
\begin{subequations}
\begin{align}\label{eq:T1bound-model-based}
    |T_1| & \lesssim (B_v + B_u) \sqrt{\frac{\log |\cV||\cU|/\delta}{N_0}},\\\label{eq:T2bound-model-based}
    |T_2| & \lesssim (1+B_v)B_w \sqrt{\frac{\log|\cV||\cW|/\delta}{N}},\\\label{eq:T3bound-model-based}
    |T_3| & \lesssim  B_w \sqrt{\frac{B_u\log|\cP||\cU||\cW|/\delta }{N}}
\end{align}
\end{subequations}
Taking $N_0 = N$ and noting that $B_w \geq 1$ due to realizability of $w^\star$ yield that
\begin{align*}
    \left| L_{\text{AL}}^{\text{model-based}} - \hat L_{\text{AL}}^{\text{model-based}} \right| & \lesssim (B_v + B_u)\sqrt{\frac{\log |\cV||\cU|/\delta}{N_0}} + (1+B_v)B_w \sqrt{\frac{B_u\log(|\cP||\cU||\cW||\cV|/\delta)}{N}}\\
    & \lesssim \epsilon_{\text{stat}}^{\text{model-based}}. 
\end{align*}

\paragraph{Proof of bound \eqref{eq:T1bound-model-based} on $|T_1|$.} Since $|v(s)| \leq B_v, |u(s,a)| \leq B_u$ for all $v \in \cV$ and $u \in \cU$ and $s_i$ are independent, we can apply Hoeffding's inequality and union bound to conclude the advertised bound \eqref{eq:T1bound-model-based} on $|T_1|$.

\paragraph{Proof of the bound \eqref{eq:T2bound-model-based} on $|T_2|$.} By boundedness of $w, v$, we have 
\begin{align*}
    \left|w(s,a) (r(s,a) + \gamma v(s') - v(s)) \right| \leq B_w (1+ (\gamma+1) B_v) \leq B_w (1+\gamma)(1+B_v). 
\end{align*}
As before, due to boundedness and independence of variables $w(s_i, a_i) [r_i + \gamma v(s_i') - v(s_i)]$, Hoeffding's inequality can be applied, giving the bound \eqref{eq:T2bound-model-based} on $|T_2|$.

\paragraph{Proof of the bound \eqref{eq:T3bound-model-based} on $|T_3|$.} We decompose $T_3 = T_{3,1} + T_{3,2}$, where $T_{3,1}$ and $T_{3,2}$ are defined as 
\begin{align*}
    T_{3,1} & \coloneqq \E_{\mu} \left[w(s,a) \left(f_*^{-1} \left(u(s,a) - \gamma ({\mathbb{P}}^{\pi_w} u)(s,a)\right) \right)\right] - \E_{\mu} \left[w(s,a) \left(f_*^{-1} \left(u(s,a) - \gamma (\mathbb{\hat P}^{\pi_w} u)(s,a)\right) \right)\right]\\
    T_{3,2} & \coloneqq \E_{\mu} \left[w(s,a) \left(f_*^{-1} \left(u(s,a) - \gamma (\mathbb{\hat P}^{\pi_w} u)(s,a)\right) \right)\right]\\
    & \qquad - \frac{1}{N} \sum_{i=1}^N w(s_i, a_i) \left[f_*^{-1}\left(u(s_i, a_i) - \gamma (\mathbb{\hat P}^{\pi_w} u)(s_i,a_i)\right)  \right]
\end{align*}
Recall that $f_*^{-1}(x) = 2 \sqrt{x+1} - 2$ from Appendix \ref{app:model-free-objective}. The absolute value of $T_{3,2}$ can be immediately bounded using Hoeffding's inequality:
\begin{align}\label{eq:T_32-bound-model-based}
    |T_{3,2}| \lesssim B_w \sqrt{\frac{B_u\log |\cW||\cU|\delta}{N}}.
\end{align}

To bound $|T_{3,1}|$, we first use the inequality given in Lemma \ref{lemma:square-root-inequality}, setting $b_i, x_i, y_i$ for each $(s,a)$ according to
\begin{align*}
    b_i & = \begin{cases}
    1+u(s,a) \quad & i = 0\\
    \gamma \sum_{a'} \pi_w(a'|s') u(s'|a') \quad & 1 \leq i \leq |\cS|
    \end{cases}\\
    x_i & = \begin{cases}
    1 \quad & i = 0\\
    P(s'|s,a)\quad & 1 \leq i \leq |\cS|
    \end{cases}, \quad 
    y_i = \begin{cases}
    1 \quad & i = 0\\
    P(s'|s,a)\quad & 1 \leq i \leq |\cS|
    \end{cases}
\end{align*}
Thus by Lemma \ref{lemma:square-root-inequality}, we obtain the following bound on $T_{3,1}^2$
\begin{align}\notag 
    T^2_{3,1} & = \bigg( \E_{\mu} \left[w(s,a) \left(f_*^{-1} \left(u(s,a) - \gamma \mathbb{P}^{\pi_w} u(s,a)\right) \right)\right]  - \E_{\mu} \left[w(s,a) \left(f_*^{-1} \left(u(s,a) - \gamma \mathbb{\hat P}^{\pi_w} u(s,a)\right) \right)\right] \bigg)^2\\\notag 
    & \lesssim B_w \vast( \E_\mu \left[ \sqrt{1+ u(s,a) - \gamma \sum_{s'} P(s'|s,a) \sum_{a'} \pi_w(a'|s') u(s',a')} \right]\\\notag 
    & \quad - \E_\mu \left[ \sqrt{1+ u(s,a) - \gamma \sum_{s'} \hat P(s'|s,a) \sum_{a'} \pi_w(a'|s') u(s',a')} \right] \vast)^2\\\label{eq:initial-bound-T3}
    & \leq B^2_w B_u \E_\mu \left[ \sum_{s'} \left( \sqrt{P(s'|s,a)} - \sqrt{\hat P(s'|s,a)}\right)^2\right].
\end{align} 
Note that the terms under square root are always nonnegative because for any transition $P$
\begin{align*}
    1+ u(s,a) - \gamma \sum_{s'}  P(s'|s,a) \sum_{a'} \pi_w(a'|s') u(s',a') \geq 1 - B_u - \gamma B_u \geq 1 - 2B_u \geq 0.
\end{align*}
Then, we use the concentration result on maximum likelihood model estimation stated in Theorem \ref{lemma:model-estimation-error-MDP} and a union bound on $w \in \cW$ and $v \in \cV$ to conclude that
\begin{align}\label{eq:T_31-bound-model-based}
    |T_{3,1}| \lesssim B_w \sqrt{\frac{B_u\log|\cP||\cU||\cW|/\delta }{N}}.
\end{align}

\subsubsection{Proof of part \ref{claim:wstar-what-model-based}}

To prove the second part, let $\hat v_w$ and $\hat u_w$ denote the solutions to the model-based empirical objective
\begin{align*}
    \hat v_w, \hat u_w = \argmin_{v \in \cV} \argmin_{u \in \cU} \hat L_{\text{AL}}^{\text{model-based}} (w, v, u)
\end{align*}
Decompose the objective difference according to
\begin{align*}
    & L_{\text{AL}}^{\text{model-based}}(w^\star, v^\star, u^\star) - L_{\text{AL}}^{\text{model-based}}(\hat w, v^\star, u^\star_{\hat w})\\
    & \quad = {L_{\text{AL}}^{\text{model-based}}(w^\star, v^\star, u^\star) - L_{\text{AL}}^{\text{model-based}}(w^\star, \hat v_{w^\star}, \hat u_{w^\star})} { \quad \coloneqq T_1}\\
    & \qquad + L_{\text{AL}}^{\text{model-based}}(w^\star, \hat v_{w^\star}, \hat u_{w^\star}) - \hat L_{\text{AL}}^{\text{model-based}}(w^\star, \hat v_{w^\star}, \hat u_{w^\star}) \quad \coloneqq T_2\\
    & \qquad + \hat L_{\text{AL}}^{\text{model-based}}(w^\star, \hat v_{w^\star}, \hat u_{w^\star}) - \hat L_{\text{AL}}^{\text{model-based}}(\hat w, \hat v_{\hat w}, \hat u_{\hat w}) \quad \coloneqq T_3\\
    & \qquad + \hat L_{\text{AL}}^{\text{model-based}}(\hat w, \hat v_{\hat w}, \hat u_{\hat w}) - \hat L_{\text{AL}}^{\text{model-based}}(\hat w, v^\star, u^\star_{\hat w}) \quad \coloneqq T_4\\
    & \qquad + \hat L_{\text{AL}}^{\text{model-based}}(\hat w, v^\star, u^\star_{\hat w}) - L_{\text{AL}}^{\text{model-based}}(\hat w, v^\star, u^\star_{\hat w}) \quad \coloneqq T_5
\end{align*}
We bound each term:
\begin{itemize}
    \item $T_1 \leq 0$ because $v^\star, u^\star = \arg \min_{v} \arg \min_{u} L_{\text{AL}}^{\text{model-based}} (w^\star, v, u)$;
    \item $T_2 \leq \epsilon^{\text{model-based}}_{\text{stat}}$ by Lemma \ref{lemma:stat-error-model-based-MDP};
    \item $T_3 \leq 0$ because $\hat w = \arg \max_{w \in \cW} \hat L_{\text{AL}}^{\text{model-based}} (w, \hat v_{w}, \hat u_{w})$;
    \item $T_4 \leq 0$ because $\hat v_{w}, \hat u_{w} = \arg \min_{v \in \cV} \arg \min_{u \in \cU} \hat L_{\text{AL}}^{\text{model-based}} (w, v, u)$;
    \item $T_5 \leq \epsilon^{\text{model-based}}_{\text{stat}}$ by Lemma \ref{lemma:stat-error-model-based-MDP}.
\end{itemize}

\subsection{Proof of Lemma \ref{lemma:stat-error-model-free-MDP}}\label{app:stat-error-model-free-MDP}

\subsubsection{Proof of part \ref{claim:empirical-population-obj-model-free}}
We decompose the difference $L_{\text{AL}}^{\text{model-free}} - \hat L_{\text{AL}}^{\text{model-free}} = T_1 + T_2 + T_3$ into three error terms
\begin{align*}
    T_1 & \coloneqq (1-\gamma) \E_{\rho} \left[v(s) + \sum_a u(s,a) \pi_w(a|s)\right] - (1-\gamma) \frac{1}{N_0} \sum_{i=1}^{N_0} \left( v(s_i)  +\sum_a u(s_i,a) \pi_w(a|s_i)\right)\\
    T_2 & \coloneqq \E_{\mu} \left[w(s,a)(r(s,a) + \gamma \sum_{s'}P(s'|s,a) v(s') - v(s))\right] - \frac{1}{N} \sum_{i=1}^N w(s_i, a_i) \left[r_i + \gamma v(s_i') - v(s_i) \right]\\
    T_3 &: =  \E_{(s,a,s') \sim \mu, a' \sim \pi_w(\cdot|s')} [ w(s,a) \left((u(s,a) - \gamma u(s',a'))\zeta(s,a) -g_\star(\zeta(s,a) ) \right) ] \\
    & \qquad - \frac{1}{N} \sum_{i=1}^N w(s_i, a_i)\left[\left(u(s_i,a_i) - \gamma \sum_{a' \in \cA} u(s'_i, a') \pi_w(a'|s'_i )\right)\zeta(s_i,a_i) -g_\star(\zeta(s_i,a_i) )  \right].
\end{align*}
The absolute values of the error terms above satisfy the following upper bounds with high probability
\begin{subequations}
    \begin{align}\label{eq:T1bound-model-free}
        |T_1| & \lesssim (B_v + B_u) \sqrt{\frac{\log (|\cV||\cU|/\delta)}{N_0}},\\ \label{eq:T2bound-model-free}
        |T_2| & \lesssim (1+B_v)B_w \sqrt{\frac{\log (|\cV||\cW|/\delta)}{N}},\\ \label{eq:T3bound-model-free}
        |T_3| & \lesssim (1+B_{\zeta}(B_u+1))B_w \sqrt{\frac{\log|\cU||\cW||\cZ|/\delta}{N}}.
    \end{align}
\end{subequations}
The bounds on the first two error terms $|T_1|$ and $|T_2|$ are already shown in Appendix \ref{app:model-based-empirical-population-claim}. To bound $|T_3|$, recall that $g_\star(x) = -x - 2 - \frac{1}{x}, \ \forall x < 0$. Also, $|\zeta(s,a)| \in (B_{\zeta,L}, B_{\zeta,U})$ for any $\zeta \in \cZ$ and any $(s,a)$, and $B_{\zeta} \triangleq \max\{ B_{\zeta, U}, B_{\zeta, L}^{-1} \}$. Therefore, the individual error terms in $|T_3|$ satisfy the following bound  
\begin{align*}
\left|w(s,a) \left((u(s,a) - \gamma u(s',a'))\zeta(s,a) -g_\star(\zeta(s,a) ) \right) \right| \leq& B_w ((1+\gamma)B_u B_{\zeta,U} + B_{\zeta,U} + B_{\zeta,L}^{-1} + 2 ).
\end{align*}
Thus, by Hoeffding's inequality and a union bound on $\cW, \cU,$ and $\cZ$, we obtain the upper bound \eqref{eq:T3bound-model-free} on $|T_3|$. Summing up the bounds given in \eqref{eq:T1bound-model-free}, \eqref{eq:T2bound-model-free}, and \eqref{eq:T3bound-model-free} and noting that $B_w \geq 1$ due to realizability of $w^\star$, we obtain
\begin{align*}
    & L_{\text{AL}}^{\text{model-free}}(w, v, u, \zeta) - \hat L_{\text{AL}}^{\text{model-free}}(w, v, u, \zeta) \\
    & \lesssim (B_v + B_u)\sqrt{\frac{\log |\cV||\cU|/\delta}{N_0}} + (1+B_v +B_{\zeta}(B_u+1))B_w \sqrt{\frac{\log|\cU||\cW||\cV||\cZ|/\delta }{N}}\\
    & \lesssim \epsilon_{\text{stat}}^{\text{model-free}}.
\end{align*}

\subsubsection{Proof of part \ref{claim:wstar-what-model-free}} 
Define the following solutions to the empirical model-free objective
\begin{align*}
    \hat v_w, \hat u_w, \hat \zeta_w & = \arg \min_{v \in \cV} \arg \min_{u \in \cU} \arg \max_{\zeta \in \cZ} \hat L_{\text{AL}}^{\text{model-free}} (w, v, u, \zeta), \quad \forall w \in \cW\\
    \hat \zeta(w, u) & = \arg \max_{\zeta \in \cZ} \hat L_{\text{AL}}^{\text{model-free}} (w, v, u, \zeta) \quad \forall w \in \cW, u \in \cU
\end{align*}
Decompose the objective difference according to
\begin{align*}
    & L_{\text{AL}}^{\text{model-free}}(w^\star, v^\star, u^\star, \zeta^\star) - L_{\text{AL}}^{\text{model-free}}(\hat w, v^\star, u^\star_{\hat w}, \zeta^\star_{\hat w})
    \\
    & \quad = {L_{\text{AL}}^{\text{model-free}}(w^\star, v^\star, u^\star, \zeta^\star) - L_{\text{AL}}^{\text{model-free}}(w^\star, \hat v_{w^\star}, \hat u_{w^\star}, \zeta^\star(w^\star, \hat u_{w^\star}))} { \quad \coloneqq T_1}\\
    & \qquad + L_{\text{AL}}^{\text{model-free}}(w^\star, \hat v_{w^\star}, \hat u_{w^\star}, \zeta^\star_{w^\star, \hat u_{w^\star}}) - \hat L_{\text{AL}}^{\text{model-free}}(w^\star, \hat v_{w^\star}, \hat u_{w^\star}, \zeta^\star_{w^\star, \hat u_{w^\star}}) { \quad \coloneqq T_2}\\
    & \qquad + \hat L_{\text{AL}}^{\text{model-free}}(w^\star, \hat v_{w^\star}, \hat u_{w^\star}, \zeta^\star_{w^\star, \hat u_{w^\star}}) - \hat L_{\text{AL}}^{\text{model-free}}(w^\star, \hat v_{w^\star}, \hat u_{w^\star}, \hat \zeta_{w^\star}) { \quad \coloneqq T_3}\\
    & \qquad + \hat L_{\text{AL}}^{\text{model-free}}(w^\star, \hat v_{w^\star}, \hat u_{w^\star}, \hat \zeta_{w^\star})  -\hat L_{\text{AL}}^{\text{model-free}}(\hat w, \hat v_{\hat w}, \hat u_{\hat w}, \hat \zeta_{\hat w}) \quad \coloneqq T_4\\
    & \qquad + \hat L_{\text{AL}}^{\text{model-free}}(\hat w, \hat v_{\hat w}, \hat u_{\hat w}, \hat \zeta_{\hat w}) - \hat L_{\text{AL}}^{\text{model-free}}(\hat w, v^\star, u^\star_{\hat w}, \hat \zeta_{\hat w, u^\star_{\hat w}}) \quad \coloneqq T_5\\
    & \qquad +  \hat L_{\text{AL}}^{\text{model-free}}(\hat w, v^\star, u^\star_{\hat w}, \hat \zeta_{\hat w, u^\star_{\hat w}}) - L_{\text{AL}}^{\text{model-free}}(\hat w, v^\star, u^\star_{\hat w}, \hat \zeta_{\hat w, u^\star_{\hat w}}) \quad \coloneqq T_6\\
    & \qquad +  L_{\text{AL}}^{\text{model-free}}(\hat w, v^\star, u^\star_{\hat w}, \hat \zeta_{\hat w, u^\star_{\hat w}}) - L_{\text{AL}}^{\text{model-free}}(\hat w, v^\star, u^\star_{\hat w}, \zeta^\star_{\hat w}) \quad \coloneqq T_7
\end{align*}
We bound each term:
\begin{itemize}
    \item $T_1 \leq 0$ because $v^\star, u^\star = \arg \min_{v, u} L_{\text{AL}}^{\text{model-free}} (w^\star, v, u, \zeta^\star(w^\star, u))$;
    \item $T_2 \leq \epsilon_{\text{model-free}}$ by part \ref{claim:empirical-population-obj-model-free};
    \item $T_3 \leq 0$ because  $\hat \zeta_{w^\star} = \hat \zeta_{w^\star, \hat u_{w^\star}} = \arg \max_{\zeta \in \cZ} \hat L_{\text{AL}}^{\text{model-free}} (w^\star, \hat v_{w^\star}, \hat u_{w^\star}, \zeta)$;
    \item $T_4 \leq 0$ because $\hat w = \arg \max_{w \in \cW} \hat L_{\text{AL}}^{\text{model-free}} (w, \hat v_w, \hat u_w, \hat \zeta_w)$;
    \item $T_5 \leq 0$ because $\hat v_{\hat w}, \hat u_{\hat w} = \arg \min_{v \in \cV, u \in \cU} \hat L_{\text{AL}}^{\text{model-free}} (\hat w, v, u, \hat \zeta_{\hat w, u)}$;
    \item $T_6 \leq \epsilon_{\text{model-free}}$ by part \ref{claim:empirical-population-obj-model-free};
    \item $T_7 \leq 0$ because $\zeta^\star_{\hat w} = \zeta^\star_{\hat w,  u^\star_{\hat w}} = \arg \max_{\zeta < 0} L_{\text{AL}}^{\text{model-free}} (\hat w, v^\star, u^\star_{\hat w}, \zeta)$.
\end{itemize}

\subsection{Proof of Lemma \ref{lemma:ratio-not-too-small}}\label{app:ratio-not-too-small}

We provide proof only for the model-based algorithm and let $\hat w = \hat w^{\text{model-based}}$ for notation convenience. The proof for a model-free algorithm follows analogously, noting the fact that $L^{\text{model-free}}_{AL}(w, v^\star, u^\star_w, \zeta^\star_w) = L^{\text{model-based}}_{AL}(w, v^\star, u^\star_w)$
and we can replace Lemma \ref{lemma:stat-error-model-based-MDP} with Lemma \ref{lemma:stat-error-model-free-MDP} to prove the model-free version.

\subsubsection{Proof of part \ref{claim:advantage-avg}}

Consider the expression of the model-based objective $L_{\text{AL}}^{\text{model-based}} (w^\star, v^\star, u^\star)$ at the optimal solution where $u^\star \coloneqq u^\star_{w^\star}$:
\begin{align}\notag 
    & L_{\text{AL}}^{\text{model-based}} (w^\star, v^\star, u^\star)\\ \notag 
    = & (1-\gamma) \E_{s \sim \rho} [v^\star(s)] + \E_{s,a \sim \mu} \left[w^\star(s,a) e_{v^\star}(s,a)\right] - \E_{s \sim d^{\pi_{w^\star}}} \left(\frac{d_{w^\star}(s)}{d^{\pi_{w^\star}}(s)} - 1\right)^2\\ \label{eq:expressing-L-at-optimal-model-based-with-advantage} 
    = & (1-\gamma) \E_{s \sim \rho} [V^\star(s)] + \E_{s,a \sim \mu} \left[w^\star(s,a)A^\star(s,a)\right]
\end{align}
The first equation comes from the fact that $u^\star$ is the optimal solution to the variational lower bound, making it equal to the $f$-divergence. To see this, recall from Lemma \ref{lemma:u-zeta-bounded} that $x^\star_w (s,a) = 2 d_w(s)/d^{\pi_w}(s) - 2$ and $\tilde x_w(s,a) = \clip (x^\star_w(s,a), -B_x, B_x)$. Since $x^\star_{w^\star}(s,a) = 0$, we have $\tilde x_{w^\star}(s,a) = x^\star_{w^\star}(s,a)$ and thus $u^\star$ recovers the $f$-divergence.

In Equation \eqref{eq:expressing-L-at-optimal-model-based-with-advantage}, we wrote $v^\star(s) = V^\star(s)$ since $v^\star(s)$ is the optimal solution to the primal-dual program without the ALM term and is equal to the optimal value function \cite{zhan2022offline}. We also used the fact that $e_{v^\star}(s,a) = r(s,a) + \gamma \sum_{s'} P(s'|s,a) v^\star(s') - v^\star(s) = A^{\star}(s,a)$ is the optimal advantage function, and that ${d_{w^\star}(s)} = {d^{\pi_{w^\star}}(s)}$ by definition and realizability of $w^\star$. Moreover, the second term in \eqref{eq:expressing-L-at-optimal-model-based-with-advantage} is zero since it captures the optimal advantage of optimal policy. Therefore, we conclude that
\begin{align}
    L_{\text{AL}}^{\text{model-based}} (w^\star, v^\star, u^\star) = (1-\gamma) \E_{s \sim \rho} [V^\star(s)].
\end{align}
Given the above expression of the objective at $(w^\star, v^\star, u^\star)$, we write the following objective difference 
\begin{align}\notag 
    & L_{\text{AL}}^{\text{model-based}}(w^\star, v^\star, u^\star) - L_{\text{AL}}^{\text{model-based}}(\hat w, v^\star, u^\star_{\hat w})\\\notag 
    = & (1-\gamma) \E_{s \sim \rho} [V^\star(s)] - (1-\gamma) \E_{s \sim \rho} [v^\star(s)] - \E_{s,a \sim \mu} \left[\hat w(s,a) e_{v^\star}(s,a)\right]\\ \label{eq:difference-u-part-model-based}
    & \quad + (1-\gamma) \E_{s \sim \rho, a \sim \pi_{\hat w}} [u^\star_{\hat w}(s, a)] + \E_\mu \left[ \hat w(s,a) f_*^{-1}\left(u^\star_{\hat w}(s,a) - \gamma  (\mathbb{P}^{\pi_{\hat w}} u^\star_{\hat w})(s,a)\right)\right]\\\notag 
    = & - \E_{s,a \sim \mu} \left[\hat w(s,a) A^\star(s,a)\right] - \E_{d^{\pi_{\hat w}}} [f_*(\tilde x_{\hat w}(s,a))] + \E_{d_{\hat w}} [\tilde x_{\hat w}(s,a)]
\end{align}
The last line uses $e_{v^\star}(s,a) = A^\star(s,a)$ as well as the definition of $u^\star_{\hat w}$ as the fixed point solution to 
\begin{align}\notag 
    u^\star_{\hat w}(s,a) \coloneqq f_*(\tilde x_{\hat w}(s,a)) + \gamma (\mathbb{P}^{\pi_{\hat w}} u^\star_{\hat w})(s,a),
\end{align}
which allows us to write \eqref{eq:difference-u-part-model-based} in the original $f$-divergence variational form \eqref{eq:f-divergence-variational-form} with $\tilde x_{\hat w}$ as variable. Lemma \ref{lemma:stat-error-model-based-MDP} asserts that $L_{\text{AL}}^{\text{model-based}}(w^\star, v^\star, u^\star) - L_{\text{AL}}^{\text{model-based}}(\hat w, v^\star, u^\star_{\hat w}) \lesssim \epsilon^{\text{model-based}}_{\text{stat}}$. Therefore, 
\begin{align}\label{eq:claim1-model-based}
    {- \E_{s,a \sim \mu} \left[\hat w(s,a) A^\star(s,a)\right]} - {\E_{d^{\pi_{\hat w}}} [f_*(\tilde x_{\hat w}(s,a))] + \E_{d_{\hat w}} [\tilde x_{\hat w}(s,a)]} \lesssim \epsilon^{\text{model-based}}_{\text{stat}}.
\end{align}
We next argue that both terms in inequality above are nonnegative and conclude that 
\begin{subequations}
    \begin{align}\label{eq:expected_advantage_bound}
        {- \E_{s,a \sim \mu} \left[\hat w(s,a) A^\star(s,a)\right]} & \lesssim \epsilon^{\text{model-based}}_{\text{stat}}\\ \label{eq:variational-form-bound}
        - {\E_{d^{\pi_{\hat w}}} [f_*(\tilde x_{\hat w}(s,a))] + \E_{d_{\hat w}} [\tilde x_{\hat w}(s,a)]} & \lesssim \epsilon^{\text{model-based}}_{\text{stat}}
    \end{align}
\end{subequations}
The first term is nonnegative because for the optimal advantage function we have $A^\star(s,a) \leq 0$ for all $s \in \cS$ and $a \in \cA$. We write the second term as
\begin{align*}
    - {\E_{d^{\pi_{\hat w}}} [f_*(\tilde x_{\hat w}(s,a))] + \E_{d_{\hat w}} [\tilde x_{\hat w}(s,a)]}& = \E_{d^{\pi_{\hat w}}} \left[ \frac{d_{\hat w}(s)}{d^{\pi_{\hat w}}(s)} \tilde x_{\hat w}(s,a)  -  f_*(\tilde x_{\hat w}(s,a)) \right].
\end{align*}
We then show that each term inside the expectation is nonnegative:
\begin{align}\label{eq:variational-form-nonnegative}
    \frac{d_{w}(s)}{d^{\pi_{w}}(s)} \tilde x_w(s,a)  -  f_*(\tilde x_{w}(s,a)) \geq 0 \quad \forall s \in \cS, w \in \cW.
\end{align}

\paragraph{Proof of bound \eqref{eq:variational-form-nonnegative}.} we separate the argument into three cases and use the expression of $\tilde x_w$ given in Lemma \ref{lemma:u-zeta-bounded}. 
\begin{enumerate}
    \item {When $1-B_x/2 \leq \frac{d_{w}(s)}{d^{\pi_w}(s)}\leq B_x/2 + 1$}, we have $\tilde x_w (s,a) = \left(2\frac{d_w(s)}{d^{\pi_w}(s)} - 2\right)$ and therefore
    \begin{align*}
        \frac{d_{  w}(s)}{d^{\pi_{  w}}(s)} \tilde x_{  w}(s,a)  -  f_*(\tilde x_{  w}(s,a)) = \left( \frac{d_{  w}(s)}{d^{\pi_{  w}}(s)} - 1\right)^2 \geq 0.
    \end{align*}
    \item When $\frac{d_{w}(s)}{d^{\pi_w}(s)}> B_x/2 + 1$, substitute $\tilde x_w(s,a) = B_x$ to arrive at
    \begin{align*}
        \frac{d_{w}(s)}{d^{\pi_w}(s)} B_x - \left( \left(\frac{B_x}{2}+1\right)^2 - 1\right) \geq \left(\frac{B_x}{2}+1\right) B_x - \frac{B_x^2}{4} - B_x = \frac{B_x^2}{4} \geq 0.
    \end{align*}
    \item\label{case:case3-variational-form} Similarly,  when $\frac{d_{w}(s)}{d^{\pi_w}(s)} < 1 - B_x/2$, substitute $\tilde x_w(s,a) = -B_x$ to arrive at
    \begin{align}\label{eq:case3-variational-form}
        -\frac{d_{w}(s)}{d^{\pi_w}(s)} B_x - \left( \left(1-\frac{B_x}{2}\right)^2 - 1\right) \geq \left(\frac{B_x}{2}-1\right) B_x - \frac{B_x^2}{4} + B_x = \frac{B_x^2}{4} \geq 0.
    \end{align}
\end{enumerate}

\subsubsection{Proof of part \ref{claim:ratio-not-small}}
 We derive the second part by using the bound \eqref{eq:variational-form-bound} restricted on the set $\cS_s$. When $s \in \cS_s$, we have $\frac{d_{\hat w}(s)}{d^{\pi_{\hat w}}(s)} \leq \frac{1}{2}$ and thus the variational form falls into the case \ref{case:case3-variational-form} in the proof of bound \eqref{eq:variational-form-nonnegative}. Therefore, for $s \in \cS_s$, we have $\tilde x_{\hat w}(s,a) = - B_x$ and 
\begin{align}\label{eq:lowerbound-fdivergence-model-based}
    \frac{d_{\hat w}(s)}{d^{\pi_{\hat w}}(s)} \tilde x_{\hat w}(s,a)  -  f_*(\tilde x_{\hat w}(s,a))  \gtrsim {(1-\gamma)^2} \quad \forall s \in \cS_{\text{s}}.
\end{align}
We use the bound in \eqref{eq:variational-form-bound} as well as \eqref{eq:lowerbound-fdivergence-model-based} to conclude that 
\begin{align*}
    \epsilon^{\text{model-based}}_{\text{stat}} \gtrsim & \E_{d_{\hat w}} [\tilde x_{\hat w}(s,a)] -  \E_{d^{\pi_{\hat w}}} [f_*(\tilde x_{\hat w}(s,a))]\\
    = & \sum_{s} d^{\pi_{\hat w}}(s) \left(\frac{d_{\hat w}(s)}{d^{\pi_{\hat w}}(s)} \tilde x_{\hat w}(s,a)  -  f_*(\tilde x_{\hat w}(s,a)) \right)\\
    \gtrsim &\sum_{s \in \cS_{\text{s}}} {(1-\gamma)^2}  d^{\pi_{\hat w}}(s),
\end{align*}
which leads to the second advertised claim $\sum_{s \in \cS_{\text{s}}} d^{\pi_{\hat w}}(s) \lesssim (1-\gamma)^{-2} \epsilon_{\text{stat}}$.

\section{Auxiliary results}

\begin{theorem}[Convergence of MLE for learning transitions \cite{van2000empirical}]\label{lemma:model-estimation-error-MDP}
    Given a realizable model class $\cP = \{P: (\cS, \cA) \rightarrow \Delta(\cS)\}$ that contains the true model $P^\star$ and a dataset $\cD_m = \{(s_i, a_i, s'_i)\}$ with $(s_i, a_i) \overset{\mathrm{iid}}{\sim} \mu, s'_i \sim P^\star(\cdot |s_i, a_i)$, let $\hat P$ be
\begin{align*}
    \hat P = \arg \max_{P \in \cP} \sum_{i=1}^N \ln P(s'_i|s_i, a_i).
\end{align*}
Fix the failure probability $\delta > 0$. Then, with probability at least $1-\delta$, we have the following concentration on the squared Hellinger distance between $\hat P$ and $P^\star$:
\begin{align*}
    \E_{s,a \sim \mu} \left[\sum_{s'} \left(\sqrt{\hat P(s'|s,a)} - \sqrt{P^\star(s'|s,a)}\right)^2 \right]\lesssim \frac{\log(|\cP|/\delta) }{N}.
\end{align*}
\end{theorem}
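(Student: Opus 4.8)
The plan is to prove the bound in two stages: first control the \emph{empirical} squared Hellinger distance of $\hat P$ to $P^\star$ evaluated at the sampled contexts, and then upgrade this to the \emph{population} quantity $\E_{s,a\sim\mu}[\cdot]$ appearing in the statement. For the first stage, fix $P \in \cP$ and set $Y_i(P) \coloneqq \sqrt{P(s_i'\mid s_i,a_i)/P^\star(s_i'\mid s_i,a_i)}$. Conditioning on $(s_i,a_i)$ and using $s_i' \sim P^\star(\cdot\mid s_i,a_i)$ gives the key identity
\begin{align*}
  \E\big[Y_i(P)\,\big|\, s_i,a_i\big] = \sum_{s'}\sqrt{P(s'\mid s_i,a_i)\,P^\star(s'\mid s_i,a_i)} = 1 - h_i^2(P),
\end{align*}
where $h_i^2(P) \coloneqq \tfrac12\sum_{s'}\big(\sqrt{P(s'\mid s_i,a_i)} - \sqrt{P^\star(s'\mid s_i,a_i)}\big)^2$ is the squared Hellinger distance at context $(s_i,a_i)$. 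Hence $M_n(P) \coloneqq \prod_{i\le n} Y_i(P)/\E[Y_i(P)\mid \mathcal F_{i-1}]$ is a nonnegative martingale of unit mean, so $\Pr[M_N(P) \ge |\cP|/\delta] \le \delta/|\cP|$ by Markov. Using $1 - x \le e^{-x}$ and a union bound over $\cP$ (replaced by a covering-number argument when $\cP$ is infinite, with $|\cP|$ the covering number), with probability at least $1-\delta$ we obtain, simultaneously for all $P\in\cP$, $\prod_{i=1}^N Y_i(P) \le \tfrac{|\cP|}{\delta}\exp(-\sum_{i=1}^N h_i^2(P))$. Since $\hat P$ maximizes $\sum_i \ln P(s_i'\mid s_i,a_i)$ over $\cP \ni P^\star$, we have $\prod_i Y_i(\hat P)^2 = \prod_i \hat P(s_i'\mid s_i,a_i)/P^\star(s_i'\mid s_i,a_i) \ge 1$, hence $\prod_i Y_i(\hat P) \ge 1$; combining, $\sum_{i=1}^N h_i^2(\hat P) \le \log(|\cP|/\delta)$.

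For the second stage, write $H(P) \coloneqq \E_{s,a\sim\mu}[h^2(P(\cdot\mid s,a), P^\star(\cdot\mid s,a))]$, so that $\E[\tfrac1N\sum_i h_i^2(P)] = H(P)$ and $h_i^2(P) \in [0,1]$. A one-sided multiplicative Chernoff bound gives, for each fixed $P$, $\Pr[\tfrac1N\sum_i h_i^2(P) \le \tfrac12 H(P)] \le \exp(-N H(P)/8)$, which is at most $\delta/|\cP|$ once $H(P) \ge 8\log(|\cP|/\delta)/N$ (and the desired conclusion is trivial otherwise). A union bound over $\cP$ therefore yields, with probability at least $1-\delta$ and for all $P\in\cP$,
\begin{align*}
  H(P) \le \frac{2}{N}\sum_{i=1}^N h_i^2(P) + \frac{8\log(|\cP|/\delta)}{N}.
\end{align*}
Applying this at $P = \hat P$ and inserting the stage-one bound $\sum_i h_i^2(\hat P) \le \log(|\cP|/\delta)$ gives $H(\hat P) \lesssim \log(|\cP|/\delta)/N$; since $\E_{s,a\sim\mu}[\sum_{s'}(\sqrt{\hat P(s'\mid s,a)} - \sqrt{P^\star(s'\mid s,a)})^2] = 2H(\hat P)$, this is exactly the advertised bound (running each stage at failure probability $\delta/2$).

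The main obstacle is conceptual rather than computational: the crux is the exponential-moment identity $\E[\sqrt{P/P^\star}\mid s,a] = 1-h^2$, which is precisely what converts the log-likelihood comparison $\sum_i \ln\hat P \ge \sum_i \ln P^\star$ into a statement about Hellinger distance; pairing it with the right supermartingale is standard but must be set up carefully so that the union bound over $\cP$ is affordable. The other delicate point is the direction of concentration in the second stage: one needs ``population $\le$ empirical'' rather than the usual ``empirical $\le$ population,'' which rules out a plain Hoeffding bound and forces a multiplicative/Bernstein-type inequality together with a separate treatment of the small-$H(P)$ regime. Everything else --- the union bound over $\cP$, the covering-number reduction in the continuous case, and tracking constants --- is routine.
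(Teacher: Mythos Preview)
The paper does not supply its own proof of this theorem; it is stated in the auxiliary results section as a classical fact and attributed to \citet{van2000empirical}. So there is nothing in the paper to compare your argument against.

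That said, your two-stage argument is correct. One remark: the detour through the empirical Hellinger distance and the multiplicative Chernoff step in Stage~2 is unnecessary. Since the samples are i.i.d., you can compute the \emph{unconditional} expectation directly,
\[
\E\Big[\prod_{i=1}^N Y_i(P)\Big] \;=\; \big(\E_{(s,a)\sim\mu}[1-h^2(P;s,a)]\big)^N \;=\; (1-H(P))^N \;\le\; e^{-N H(P)},
\]
and then a single Markov inequality plus union bound over $\cP$ gives, with probability at least $1-\delta$, $\prod_i Y_i(P) < \tfrac{|\cP|}{\delta}\,e^{-N H(P)}$ for all $P$. Combining with $\prod_i Y_i(\hat P)\ge 1$ yields $H(\hat P) \le \log(|\cP|/\delta)/N$ in one step. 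Your martingale framing recovers this if you take the filtration to be $\mathcal F_{i-1}=\sigma\big((s_j,a_j,s_j')_{j<i}\big)$ and use $\E[Y_i(P)\mid\mathcal F_{i-1}]=1-H(P)$ rather than conditioning on $(s_i,a_i)$; the latter conditioning is what forces the extra empirical-to-population pass.
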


\begin{lemma}\label{lemma:square-root-inequality} For any $0 \leq b_i \leq B$ and $x_i, y_i \geq 0$ for $i \in \{0, \dots, n\}$, the following holds
\begin{align}\label{eq:square-root-inequality}
\left(\sqrt{\sum_{i=0}^n b_i x_i} - \sqrt{\sum_{i=0}^n b_i y_i} \right)^2 \leq B \sum_{i=0}^n \left(\sqrt{x_i} - \sqrt{y_i} \right)^2.
\end{align}
    
\end{lemma}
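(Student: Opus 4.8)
The plan is to recognize each side of the inequality as a statement about Euclidean norms and then invoke the reverse triangle inequality. Set $a_i \coloneqq \sqrt{b_i x_i}$ and $\tilde a_i \coloneqq \sqrt{b_i y_i}$ for $i \in \{0,\dots,n\}$; these are well-defined since $b_i, x_i, y_i \geq 0$. Regard $a = (a_0,\dots,a_n)$ and $\tilde a = (\tilde a_0,\dots,\tilde a_n)$ as vectors in $\mathbb{R}^{n+1}$, and observe that $\sqrt{\sum_{i=0}^n b_i x_i} = \|a\|_2$ and $\sqrt{\sum_{i=0}^n b_i y_i} = \|\tilde a\|_2$.

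The first step is the reverse triangle inequality $\big| \|a\|_2 - \|\tilde a\|_2 \big| \leq \|a - \tilde a\|_2$, which upon squaring and using $\sqrt{b_i x_i} = \sqrt{b_i}\sqrt{x_i}$ (valid since $b_i \geq 0$) gives
\begin{align*}
\left(\sqrt{\sum_{i=0}^n b_i x_i} - \sqrt{\sum_{i=0}^n b_i y_i}\right)^2
&\leq \|a - \tilde a\|_2^2 = \sum_{i=0}^n \left(\sqrt{b_i x_i} - \sqrt{b_i y_i}\right)^2 \\
&= \sum_{i=0}^n b_i \left(\sqrt{x_i} - \sqrt{y_i}\right)^2 .
\end{align*}
The second and final step bounds each summand using $0 \leq b_i \leq B$ together with $(\sqrt{x_i} - \sqrt{y_i})^2 \geq 0$, which yields $\sum_{i=0}^n b_i (\sqrt{x_i}-\sqrt{y_i})^2 \leq B \sum_{i=0}^n (\sqrt{x_i}-\sqrt{y_i})^2$, exactly \eqref{eq:square-root-inequality}.

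There is essentially no obstacle here: the only point to notice is that $\sqrt{\sum_i b_i x_i}$ and $\sqrt{\sum_i b_i y_i}$ are the $\ell_2$ norms of the vectors with coordinates $\sqrt{b_i}\,\sqrt{x_i}$ and $\sqrt{b_i}\,\sqrt{y_i}$; once this is seen, the reverse triangle inequality does all the work and the constant $B$ is tight. (An alternative route writes $\sqrt{S_x} - \sqrt{S_y} = (S_x - S_y)/(\sqrt{S_x}+\sqrt{S_y})$ with $S_x = \sum_i b_i x_i$, $S_y = \sum_i b_i y_i$, and applies Cauchy--Schwarz to $S_x - S_y = \sum_i b_i(\sqrt{x_i}-\sqrt{y_i})(\sqrt{x_i}+\sqrt{y_i})$ together with $\sum_i b_i(\sqrt{x_i}+\sqrt{y_i})^2 \leq 2(S_x + S_y) \leq 2(\sqrt{S_x}+\sqrt{S_y})^2$; this works but loses a factor of $2$, so the norm argument is preferable.)
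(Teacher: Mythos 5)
Your proof is correct and is essentially the paper's argument in different packaging: the reverse triangle inequality for the $\ell_2$ norm applied to the vectors with coordinates $\sqrt{b_i x_i}$ and $\sqrt{b_i y_i}$ is exactly the Cauchy--Schwarz step $\sqrt{(\sum_i b_i x_i)(\sum_i b_i y_i)} \geq \sum_i b_i\sqrt{x_i y_i}$ that the paper uses after expanding the square, and the final bound $b_i \leq B$ is applied identically.
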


\begin{proof}
We expend the left-hand side of \eqref{eq:square-root-inequality}, use Cauchy-Schwarz inequality, and then complete the square:
\begin{align*}
     \left(\sqrt{\sum_{i=1}^n b_i x_i} - \sqrt{\sum_{i=1}^n b_i y_i} \right)^2
    = & \sum_i b_i x_i + \sum_i b_i y_i - 2 \sqrt{\left( \sum_i b_i x_i\right) \left( \sum_i b_i y_i\right)}\\
    \leq & \sum_i b_i x_i + \sum_i b_i y_i - 2 \sum_i b_i \sqrt{x_i y_i}\\
    = & \sum_i (\sqrt{b_i x_i} - \sqrt{b_i y_i})^2\leq  B \sum_i \left(\sqrt{x_i} - \sqrt{y_i} \right)^2.
\end{align*}
\end{proof}

\begin{lemma}
\label{lemma:optimal-solution-with-AL-unchanged}
    For any two arbitrary sets $\cX, \cY$, let $f(\cdot, \cdot): \cX \times \cY \to \mathbb{R}$ be an arbitrary function. Let $\cX_0 = \{ x \in \cX \ | \ \inf_{y \in \cY} f(x,y) > -\infty \}$ and assume $\cX_0$ is non-empty. For any $x \in \cX_0$,  assume there exists $y^*(x) \in \cY$ s.t. $f(x,y^*(x)) = \min_{y \in \cY} f(x,y)$.
    Also, let $\cX^*_{\text{p}} = \{ x \in \cX_0 \ | \ x \in \arg \max_{x \in \cX_0} 
    f(x,y^\star(x))\}$ and assume $\cX^*_{\text{p}}$ is non-empty. 
    For a nonnegative function $A(\cdot)$ on $\cX$, let $\cX^* = \{ x \in \cX^*_{\text{p}} \ | \ A(x) = 0 \}$ and assume $\cX^*$ is non-empty. Define $f^{AL}(x,y) = f(x,y) - A(x)$. Then 
    \begin{align*}
        x \in \cX_0 \Longleftrightarrow \inf_{y \in \cY} f^{AL}(x,y) > -\infty.
    \end{align*}
    and for any $x \in \cX_0$, 
    \begin{align*}
        x \in \cX^* \Longleftrightarrow x \in \arg\max_{x \in \cX_0} \min_{y \in \cY} f^{AL}(x,y).
    \end{align*}
\end{lemma}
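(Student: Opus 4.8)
The plan is to exploit the single structural feature that makes the statement work: the penalty $A(x)$ is constant in $y$, so it commutes with infimization and minimization over $y$. First I would observe that for every $x \in \cX$ one has $\inf_{y \in \cY} f^{AL}(x,y) = \bigl(\inf_{y\in\cY} f(x,y)\bigr) - A(x)$, and since $A$ takes finite nonnegative values this quantity is finite if and only if $\inf_{y\in\cY} f(x,y)$ is finite. That immediately yields the first equivalence $x \in \cX_0 \Longleftrightarrow \inf_{y\in\cY} f^{AL}(x,y) > -\infty$.

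For the second equivalence I would argue as follows. For $x \in \cX_0$ the minimum $\min_{y\in\cY} f(x,y)$ is attained at the assumed point $y^*(x)$, hence so is $\min_{y\in\cY} f^{AL}(x,y)$, and $\min_{y\in\cY} f^{AL}(x,y) = f(x,y^*(x)) - A(x)$. Therefore $\arg\max_{x\in\cX_0}\min_{y\in\cY} f^{AL}(x,y) = \arg\max_{x\in\cX_0}\bigl( f(x,y^*(x)) - A(x)\bigr)$. Set $M \coloneqq \max_{x\in\cX_0} f(x,y^*(x))$, which exists because $\cX^*_{\text{p}}$ is nonempty, so that $\cX^*_{\text{p}} = \{x\in\cX_0 : f(x,y^*(x)) = M\}$ and $\cX^* = \{x \in \cX_0 : f(x,y^*(x)) = M,\ A(x) = 0\}$.

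The two inclusions are then a short computation. If $x \in \cX^*$ then $f(x,y^*(x)) - A(x) = M - 0 = M$, while for every $x' \in \cX_0$ nonnegativity of $A$ and the definition of $M$ give $f(x',y^*(x')) - A(x') \le f(x',y^*(x')) \le M$; hence $x$ maximizes $f(\cdot,y^*(\cdot)) - A(\cdot)$ over $\cX_0$. Conversely, if $x\in\cX_0$ maximizes this quantity, its value equals $M$ (the maximum is attained on $\cX^*$, which is nonempty), so $f(x,y^*(x)) - A(x) = M$; combined with $f(x,y^*(x))\le M$ and $A(x)\ge 0$ this forces $f(x,y^*(x)) = M$ and $A(x)=0$, i.e. $x\in\cX^*$. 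This gives $\cX^* = \arg\max_{x\in\cX_0}\min_{y\in\cY} f^{AL}(x,y)$, as claimed.

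There is no real obstacle; the only points requiring care are bookkeeping the attainment of the various optima — the minimum over $y$ via $y^*(x)$, and the maximum defining $M$ via nonemptiness of $\cX^*_{\text{p}}$ — so that every $\arg\max$ and $\min$ symbol is legitimate, and noting $A(x)<\infty$ so it genuinely factors out of $\inf_y$. The decisive inequality is $f(x,y^*(x)) - A(x) \le M$ with equality iff $f(x,y^*(x)) = M$ and $A(x) = 0$, which is elementary.
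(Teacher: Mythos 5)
Your proof is correct and follows essentially the same route as the paper's: both arguments rest on the observation that $A(x)$ is constant in $y$ and hence factors out of $\min_{y}$, and both then compare the value of $f(\cdot,y^*(\cdot))-A(\cdot)$ at a point of $\cX^*$ against an arbitrary point of $\cX_0$. Your introduction of the explicit maximum value $M$ and the equality-case analysis is just a cleaner phrasing of the paper's contradiction argument in the reverse direction; there is no substantive difference.
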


\begin{proof}
Note that for any fixed $x$, $f^{AL}(x,y)$ is a constant shift of $f(x,y)$, which implies that $\inf_{y \in \cY} f(x,y) > -\infty \Longleftrightarrow \inf_{y \in \cY} f^{AL}(x,y) > -\infty$. This also implies that for any $x \in \cX_0$,
$\arg \min_{y \in \cY} f(x, y) =  \arg \min_{y \in \cY} f^{AL}(x, y)$.

For any $x \in \cX_0$, let $y^*(x)$ denote any one of $y \in \cY$ s.t. $f(x, y^*(x)) = \min_{y \in \cY} f(x, y)$. 

Now for any $x^* \in \cX^*$, we have 
\begin{align*}
     &f(x^*, y^*(x^*)) \geq f(x, y^*(x)), \  \forall x \in \cX_0.
    \\
    \Longrightarrow & f(x^*, y^*(x^*)) - A(x^*) \geq f(x, y^*(x)) - A(x), \  \forall x \in \cX_0.
    \\
    \Longrightarrow & f^{AL}(x^*, y^*(x^*))\geq f^{AL}(x, y^*(x)), \  \forall x \in \cX_0.
    \\
    \Longrightarrow & \min_{y \in \cY} f^{AL}(x^*, y)\geq \min_{y \in \cY} f^{AL}(x, y), \  \forall x \in \cX_0.
    \\
     \Longrightarrow & x^* \in \arg\max_{x \in \cX_0} \min_{y \in \cY} f^{AL}(x,y).
\end{align*}

For the other direction, given any $x_0 \in \arg\max_{x \in \cX_0} \min_{y \in \cY} f^{AL}(x,y)$, we have 
\begin{align}
\label{eq:def_optimal_solution_of_f_AL}
\begin{split}
     & \min_{y \in \cY} f^{AL}(x_0, y)\geq \min_{y \in \cY} f^{AL}(x, y), \  \forall x \in \cX_0. \\
     \Longrightarrow & f^{AL}(x_0, y^*(x_0))  \geq  f^{AL}(x, y^*(x)), \  \forall x \in \cX_0.
\end{split}
\end{align}
Fix any $x^*  \in \cX^* \subseteq \cX_0$, we have $f(x^*, y^*(x^*)) \geq f(x_0, y^*(x_0))$ and $-A(x^*) \geq -A(x_0)$ by definition. Now assume $x_0 \notin \cX^*$. Then either $f(x^*, y^*(x^*)) > f(x_0, y^*(x_0))$ if $x_0 \notin \cX^*_{\text{p}}$, or $-A(x^*) > -A(x_0)$ if $x_0 \in \cX^*_{\text{p}}\backslash \cX^*$. Either one of the above two conditions implies that 
\begin{align*}
    f(x^*, y^*(x^*)) - A(x^*) > f(x_0, y^*(x_0)) - A(x_0) \Longrightarrow f^{AL}(x^*, y^*(x^*)) > f^{AL}(x_0, y^*(x_0)),
\end{align*}
which contradicts with \eqref{eq:def_optimal_solution_of_f_AL}. Therefore, $x_0 \in \cX^*$.
\end{proof}

\end{document}